\newcommand{\eg}{{\em e.g.}}
\newcommand{\ie}{{\em i.e.}}
\newcommand{\smoemdash}{{\,\textemdash\,}}
\algnewcommand{\Inputs}[1]{%
  \State \textbf{Inputs:}
  \Statex \hspace*{\algorithmicindent}\parbox[t]{.8\linewidth}{\raggedright #1}
}
\algnewcommand{\Initialize}[1]{%
  \State \textbf{Initialize:}
  \Statex \hspace*{\algorithmicindent}\parbox[t]{.8\linewidth}{\raggedright #1}
}
\begin{document}
\sloppy

\title{Active Expansion Sampling for Learning Feasible Domains in an Unbounded Input Space\thanks{
This work was funded through a University of Maryland Minta Martin Grant.
}
}


\author{Wei Chen         \and
        Mark Fuge 
}


\institute{W. Chen \at
              Department of Mechanical Engineering \\
              University of Maryland \\
              College Park, MD 20742 \\
              \email{wchen459@umd.edu}           
           \and
           M. Fuge \at
              Department of Mechanical Engineering \\
              University of Maryland \\
              College Park, MD 20742
}

\date{Received: date / Accepted: date}

\maketitle

\begin{abstract}
Many engineering problems require identifying feasible domains under implicit constraints. One example is finding acceptable car body styling designs based on constraints like aesthetics and functionality. Current active-learning based methods learn feasible domains for bounded input spaces. However, we usually lack prior knowledge about how to set those input variable bounds. Bounds that are too small will fail to cover all feasible domains; while bounds that are too large will waste query budget. To avoid this problem, we introduce Active Expansion Sampling (AES), a method that identifies (possibly disconnected) feasible domains over an unbounded input space. AES progressively expands our knowledge of the input space, and uses successive exploitation and exploration stages to switch between learning the decision boundary and searching for new feasible domains. We show that AES has a misclassification loss guarantee within the explored region, independent of the number of iterations or labeled samples. Thus it can be used for real-time prediction of samples' feasibility within the explored region. We evaluate AES on three test examples and compare AES with two adaptive sampling methods\smoemdash the \textit{Neighborhood-Voronoi} algorithm and the \textit{straddle} heuristic\smoemdash that operate over fixed input variable bounds.
\keywords{Active learning \and Adaptive sampling \and Feasible domain identification \and Gaussian process \and Exploitation-exploration trade-off}
\end{abstract}

\mbox{}

\nomenclature{$l$}{Length scale of the Gaussian kernel}
\nomenclature{$\bm{x}$}{Input sample}
\nomenclature{$\bm{x}^*$}{Optimal query}
\nomenclature{$y$}{Label of an input sample (feasible/infeasible)}
\nomenclature{$\hat{y}$}{Label estimated by Gaussian Process}
\nomenclature{$X_L$}{Set of labeled samples}
\nomenclature{$X_U$}{Set of unlabeled samples (candidates)}
\nomenclature{$d$}{Input space dimension}
\nomenclature{$\bar{f}$}{Predictive mean of Gaussian Process}
\nomenclature{$V$}{Predictive variance of Gaussian Process}
\nomenclature{$h$}{Actual evaluation function}
\nomenclature{$f$}{Estimated feasibility function}
\nomenclature{$p_{\epsilon}$}{$\epsilon$-margin probability}
\nomenclature{$\Phi$}{Cumulative distribution function of standard Gaussian distribution}
\nomenclature{$L$}{Misclassification loss}
 
\printnomenclature

\section{Introduction}

In applications like design space exploration~\citep[e.g.][]{yannou2005faster,devanathan2010creating,larson2012design} and reliability analysis~\citep[e.g.][]{lee2008sampling,zhuang2012sequential}, people need to find feasible domains within which solutions are valid. Sometimes the constraints that define those feasible domains are implicit, \ie, they cannot be represented analytically. Examples of these constraints are aesthetics, functionality, or performance requirements, which are usually evaluated by human assessment, experiments, or time-consuming computer simulations. Thus usually it is expensive to detect the feasibility of a given input. In such cases, one would like to use as few samples as possible while still approximating the feasible domain well.

To solve such problems, researchers have used \textit{active learning} (or \textit{adaptive sampling})\footnote{Note that in this paper the terms ``active learning" and ``adaptive sampling" are interchangeable.} to sequentially select the most informative instances and query their feasibility, so that the number of queries can be minimized~\citep{larson2012design,lee2008sampling,zhuang2012sequential,huang2010modified,ren2011design}.
These methods require fixed bounds over the input space, and only pick queries inside those bounds. But what if we do not know how wide to set those bounds? If we set the bounds too large, an active learner will require an excessively large budget to explore the input space; whereas if we set the bounds too small, we cannot guarantee that an algorithm will recover all the feasible domains~\citep{chen2017beyond}. In this case, we need an active learning method that can gradually expand our knowledge about the input space until we have either discovered all feasible domains or used up our remaining query budget.

This paper proposes a method\smoemdash which we call \textit{Active Expansion Sampling} (AES)\smoemdash to solve that problem by casting the detection of feasible domains as an \textit{unbounded domain estimation problem}.
In an unbounded domain estimation problem, given an expensive function $h: \mathcal{X}\in\mathbb{R}^d\rightarrow\{-1,1\}$ that evaluates any point $\bm{x}$ in an \textit{unbounded} input data space $\mathcal{X}$, we want to find (possibly disconnected) feasible domains in which $h(\bm{x})=1$. Specifically, $h$ could be costly computation, time-consuming experiments, or human evaluation, so that the problem cannot be solved analytically.
By \textit{unbounded}, we mean that we don't manually bound the input space. Thus the input space can be considered as infinite, and theoretically if the query budget allows, our method can keep expanding the explored area of the input space. To use as few function evaluations as necessary to identify feasible domains, AES first fully exploits (up to an accuracy threshold) any feasible domains it knows about and then, budget permitting, searches outward to discover other feasible domains.

The main contributions of this paper are:
\begin{enumerate}
\item We introduce the AES method for identifying (possibly disconnected) feasible domains over an unbounded input space.
\item We provide a framework that transfers bounded active learning methods into methods that can operate over unbounded input space.
\item We introduce a dynamic local pool method that efficiently finds near optimal solutions to the global optimization problem (Eq.~\ref{eq:strategy}) for selecting queries.
\item We prove a constant theoretical bound for AES's misclassification error at any iteration inside the explored region.
\end{enumerate}

\section{Background and Related Work}

Essentially, the unbounded domain estimation problem breaks down into two tasks explored by past researchers: 1)~the active learning task, where we efficiently query the feasibility of inputs; and 2)~the classification task, where we estimate decision boundaries (\ie, boundaries of feasible domains) that separates the feasible class and the infeasible class (\ie, feasible regions and infeasible regions). For the first task, we will review relevant past work on active learning. For the second task, we use the Gaussian Process as the classifier in this paper and will introduce basic concepts of Gaussian Processes.

\subsection{Feasible Domain Identification}

Past work in design and optimization has proposed ways to identify feasible domains or decision boundaries of expensive functions.
Generally those methods were proposed to reduce the number of simulation runs and improve the accuracy of surrogate models in simulation-based design and reliability assessment~\citep{lee2008sampling,basudhar2010improved}. Also, the problem of feasible domain identification is also equivalent to estimating the level set or the threshold boundaries of a function, where the feasible/infeasible region becomes superlevel/sublevel set~\citep{bryan2006active,gotovos2013active}. Such methods select samples that are expected to best improve the surrogate model's accuracy. A common rule is to sample on the estimated decision boundary, but not close to existing sample points. Existing methods achieve this by (1)~explicitly optimizing or constraining the decision function or the distance between the new sample and the existing samples~\citep{basudhar2008adaptive,basudhar2010improved,singh2017sequential}, or (2)~selecting points based on the estimated function values and their confidence at candidate points~\citep{lee2008sampling,bryan2006active,gotovos2013active,chen2014local,chen2015important,yang2015active}.

\subsection{Active Learning}
\label{sec:active_learning}

Methods for feasible domain identification usually require strategies that sequentially sample points in an input space, such that the sample size is minimized. These strategies fall under the larger category of active learning.

There are three main scenarios of active learning problems: (1)~membership query synthesis, (2)~stream-based selective sampling, and
(3)~pool-based sampling~\citep{settles2010active}. In the \textit{membership query model}, the learner generates samples de novo for labeling. For classification tasks, researchers have typically applied membership query models to learning finite concept classes~\citep{jackson1997efficient,angluin2004queries,king2004functional,pmlr-v30-Awasthi13} and halfspaces~\citep{alabdulmohsin2015efficient,chen2016near}.
In the \textit{stream-based selective sampling model}, an algorithm draws each unlabeled sample from an incoming data distribution, and then decides whether or not to query that label. This decision can be based on some informativeness measure of the drawn sample~\citep{dagan1995committee,freund1997selective,schohn2000less,cavallanti2009linear,cesa2009robust,orabona2011better,dekel2012selective,agarwal2013selective}, or whether the drawn sample is inside a \textit{region of uncertainty}~\citep{cohn1994improving,dasgupta2009analysis}.
In the \textit{pool-based sampling model}, there is a small pool of labeled samples and a large (but finite) pool of unlabeled samples, where the learner selects new queries from the unlabeled pool. 

The unbounded domain estimation problem assumes that synthesizing an unlabeled sample from the input space is not expensive (as in the membership query scenario), since otherwise we have to use existing samples and the input space will be bounded. An example that satisfies this assumption is experimental design, where we can form an experiment by selecting a set of parameters. With this assumption, our proposed method approximates the pool-based sampling setting by synthesizing a pool of unlabeled samples in each iteration.

A pool-based sampling method first trains a classifier using the labeled samples. Then it ranks the unlabeled samples based on their \textit{informativeness} indicated by an \textit{acquisition function}. A query is then selected from the pool of unlabeled samples according to their rankings. After that, we add the selected query into the set of labeled data and repeat the previous process until our query budget is reached. Many of these methods use the informativeness criteria that select queries with the maximum label ambiguity~\citep{lewis1994sequential,settles2008analysis,huang2010active}, contributing the highest estimated expected classification error~\citep{campbell2000query,zhu2003combining,nguyen2004active,krempl2015optimised}, best reducing the \textit{version space}~\citep{tong2001support}, or where different classifiers disagree the most~\citep{mccallum1998employing,argamon1999committee}. Such methods are usually good at \textit{exploitation}, since they keep querying points close to the decision boundary, refining our estimate of it. 

However, when the input space may have multiple regions of interest (\ie, feasible regions), these methods may not work well if the active learner is not aware of all the regions of interest initially. Note that while some of the methods mentioned above also consider representativeness~\citep{mccallum1998employing,zhu2003combining,nguyen2004active,settles2008analysis,huang2010active}, or the diversity of queries~\citep{hoi2009semisupervised,yang2015multi}, they don't explicitly explore unknown regions and discover other regions of interests. To address this issue, an active learner also has to allow for \textit{exploration} (\ie, to query in unexplored regions where no labeled sample has been seen yet). A learner must trade-off exploitation and exploration. 

To query in an unexplored region, there are methods that (1)~take into account the predictive variance at unlabeled samples when selecting new queries~\citep{bryan2006active,kapoor2010gaussian,gotovos2013active}, (2)~naturally balance exploitation/exploration by looking a the expected error~\citep{mac2014hierarchical}, or (3)~make exploitative and exploratory queries separately using different strategies
~\citep{baram2004online,osugi2005balancing,krause2007nonmyopic,hoang14,bouneffouf2016exponentiated,hsu2015active}. In previous methods, the exploitation-exploration trade-off was performed in a bounded input space or a fixed sampling pool. However, in the unbounded domain estimation problem, there is no fixed sampling pool and we are usually uncertain about how to set the bounds of the input space for performing active learning. If the bounds are too small, we might miss feasible domains; while if the bounds are too large, the active learner has to query more samples than necessary to achieve the required accuracy.

In this paper, we introduce a method of using active learning to expand our knowledge about an unbounded input data space, and discover feasible domains in that space. A na\"{\i}ve solution would be to progressively expand a bounded input space, and apply the existing active learning techniques. However, there are two problems with this na\"{\i}ve solution: (1)~it is difficult to explicitly specify when and how fast we expand the input space; and (2)~the area we need to evaluate increases over time increasing the computational cost. 
Thus existing active learning techniques cannot apply directly to the unbounded domain estimation problem. To the best of our knowledge, \cite{chen2017beyond} is the first to deal with the active learning problem over an unbounded input space (\ie, the unbounded domain estimation problem). The AES method proposed in this paper improves upon that previous work (as illustrated in Sect.~\ref{sec:aes}).

\subsection{Gaussian Process Classification (GPC)}

Gaussian Processes (GP, also called Kriging) are often used as a classifier in active learning~\citep{bryan2006active,lee2008sampling,kapoor2010gaussian,gotovos2013active,chen2014local,chen2015important}. Compared to other commonly used classifiers such as Support Vector Machines or Logistic Regression, GP naturally models probabilistic predictions. This offers us a way to evaluate a sample's informativeness based on its predictive probability distribution.

The Gaussian process uses a kernel (covariance) function $k(\bm{x},\bm{x}')$ to measure the similarity between the two points $\bm{x}$ and $\bm{x}'$. It encodes the assumption that ``similar inputs should have similar outputs". Some commonly used kernels are the Gaussian kernel and the exponential kernel~\citep{krause2007nonmyopic,ma2014active,mac2014hierarchical,kandasamy2017query}. In this paper we use the Gaussian kernel:
\begin{equation}
k(\bm{x},\bm{x}')=\exp\left(-\frac{\|\bm{x}-\bm{x}'\|^2}{2l^2}\right)
\label{eq:gaussian_kernel}
\end{equation}
where $l$ is the length scale.

For binary GP classification, we place a GP prior over the latent function $f(\bm{x})$, and then ``squash" $f(\bm{x})$ through the logistic function to obtain a prior on $\pi(\bm{x}) = \sigma(f(\bm{x})) = P(y=1|\bm{x})$. In the feasible domain identification setting, we can consider $f : \mathcal{X} \in \mathbb{R}^d \rightarrow \mathbb{R}$ as an estimation of feasibility, thus we can call it \textit{estimated feasibility function}. Under the Laplace approximation, given the labeled data $(X_L, \bm{y})$, the posterior of the latent function $f(\bm{x})$ at any $\bm{x}\in X_U$ is a Gaussian distribution: $f(\bm{x})|X_L,\bm{y},\bm{x} \sim \mathcal{N}(\bar{f}(\bm{x}), V(\bm{x}))$ with the mean and the variance expressed as
\begin{align}
\bar{f}(\bm{x}) &= \bm{k}(\bm{x})^TK^{-1}\hat{\bm{f}} = \bm{k}(\bm{x})^T \nabla \log P(\bm{y}|\hat{\bm{f}})
\label{eq:mean}\\
V(\bm{x}) &= k(\bm{x},\bm{x})-\bm{k}(\bm{x})^T(K+W^{-1})^{-1}\bm{k}(\bm{x})
\label{eq:variance}
\end{align}
where $W=-\nabla\nabla\log P(\bm{y}|\bm{f})$ is a diagonal matrix with non-negative diagonal elements; $\bm{f}$ is the vector of latent function values at $X_L$, \ie, $f_i=f(\bm{x}^{(i)})$ where $\bm{x}^{(i)} \in X_L$; $K$ is the covariance matrix of the training samples, \ie, $K_{ij}=k(\bm{x}^{(i)},\bm{x}^{(j)})$; $\bm{k}(\bm{x})$ is the vector of covariances between $\bm{x}$ and the training samples, \ie, $k_i(\bm{x})=k(\bm{x},\bm{x}^{(i)})$; and $\hat{\bm{f}} = \operatornamewithlimits{arg\,max}_{\bm{f}} P(\bm{f}|X,\bm{y})$. When using the Gaussian kernel shown in Eq.~\ref{eq:gaussian_kernel}, $k(\bm{x},\bm{x})=1$. We refer interested readers to a detailed description by Rasmussen~\citep{rasmussen2006gaussian} about the Laplace approximation for the binary GP classifier.

The decision boundary corresponds to $\bar{f}(\bm{x})=0$ or $\bar{\pi}(\bm{x})=0.5$. We predict $y=-1$ when $\bar{f}(\bm{x})<0$, and $y=1$ otherwise.

\section{Active Expansion Sampling (AES)}
\label{sec:aes}

\begin{algorithm*}
  \caption{The Active Expansion Sampling algorithm}
  \begin{algorithmic}[1]
    \Inputs{Query budget $T$ \\
    		Initial point $\bm{x}^{(0)}$ and its label $y_0$ \\
            $d$-dimensional evaluation function $h(\cdot)$ \\
            Hyperparameters $\epsilon$ and $\tau$}
    \Initialize{\strut
      $X_L \gets \{\bm{x}^{(0)}\}$, $Y_L \gets \{y_0\}$, $INIT \gets True$}
    \For{$t = 1,2,\ldots, T$}
      \If{$INIT$ is $True$}
        \If{$X_L$ consists of only one class (all feasible or all infeasible)}
          \State $\bm{c} \gets \bm{x}^{(0)}$
        \Else
          \State $INIT \gets False$
          \State $\bm{c} \gets$ centroid of positive samples in $X_L$
        \EndIf
      \EndIf
      \State Train the GP classifier using $X_L$
      \State Compute $\delta_{exploit}$ using Eq.~\ref{eq:delta_exploit}
      \State $X_U \gets$ uniform samples inside the $(d-1)$-sphere $\mathcal{C}(\bm{x}^{(t-1)}, \delta_{exploit})$
      \State Compute $\bar{f}(\bm{x})$, $V(\bm{x})$, and $p_\epsilon(\bm{x})$ for $\bm{x} \in X_U$ using Eq.~\ref{eq:mean}, (\ref{eq:variance}), and (\ref{eq:p_epsilon})
      \If{there are both $\bar{f}(\bm{x})<0$ and $\bar{f}(\bm{x})>0$ for $\{\bm{x}\in X_U|p_\epsilon(\bm{x})>\tau\}$}
        \Comment Exploitation stage
        \State Select a new query $\bm{x}^{(t)}$ from $X_U$ based on Eq.~\ref{eq:strategy}
      \Else
        \Comment Exploration stage
        \State Compute $\delta_{explore}$ using Eq.~\ref{eq:delta_explore}
        \If{previous iteration is in exploitation stage}
          \State $\hat{\bm{x}} \gets \mathrm{argmax}_{\bm{x}\in X_L} \|\bm{x}-\bm{c}\|$
          \State $X_U \gets$ uniform samples inside the $(d-1)$-sphere $\mathcal{C}(\hat{\bm{x}}, \delta_{explore})$
        \Else
          \State $X_U \gets$ uniform samples inside the $(d-1)$-sphere $\mathcal{C}(\bm{x}^{(t-1)}, \delta_{explore})$
        \EndIf
        \State Compute $\bar{f}(\bm{x})$, $V(\bm{x})$, and $p_\epsilon(\bm{x})$ for $\bm{x} \in X_U$ using Eq.~\ref{eq:mean}, \ref{eq:variance}, and \ref{eq:p_epsilon}
        \State Select a new query $\bm{x}^{(t)}$ from $X_U$ based on Eq.~\ref{eq:strategy}
      \EndIf
      \State $y_t \gets h(\bm{x}^{(t)})$
      \State $X_L \gets X_L\cup\{\bm{x}^{(t)}\}, ~ Y_L \gets Y_L\cup\{y_t\}$
    \EndFor
  \end{algorithmic}
  \label{alg:ade}
\end{algorithm*}

Algorithm~\ref{alg:ade} summarizes our proposed Active Expansion Sampling method. Overall, the method consists of the following steps:
\begin{enumerate}
\item Select an initial sample $\bm{x}^{(0)}$ to label.
\item In each subsequent iteration, 
\begin{enumerate} 
  \item check the exploitation/exploration status (Sect.~\ref{sec:distinguish}),
  \item generate a pool of candidate samples $X_U$ based on the exploitation/exploration status and previous queries (Sect.~\ref{sec:pool_explore} and \ref{sec:pool_exploit}),
  \item train a GP classifier using the labeled set $X_L$ to evaluate the informativeness of candidate samples in $X_U$,
  \item select a sample from $X_U$ based on its informativeness and its distance from $\bm{c}$ (Sect.~\ref{sec:strategy}),
  \item label the new sample and put it into $X_L$.
\end{enumerate}
\item Exit when the query budget is reached.
\end{enumerate}

This AES method improves upon our previous domain expansion method~\citep{chen2017beyond} in several ways. For example, the previous method generates a pool $X_U$ that expands with the explored region each iteration. So its pool size and hence the computational cost increase significantly over time if using a constant sample density. To avoid this problem, this paper proposes a dynamic local pool method (Sect.~\ref{sec:pool}).
Another major difference is that AES provides a verifiable way to distinguish between exploitation and exploration (Sect.~\ref{sec:distinguish}); while the previous method uses a heuristic based on the labels of last few queries (which is more likely to make mistakes). In this section and Sect.~\ref{sec:experiments}, we show comprehensive theoretical analysis and experiments to prove favorable properties of our new method.

\subsection{$\epsilon$-Margin Probability}
\label{sec:strategy}

\begin{figure}
\centering
\includegraphics[width=0.45\textwidth]{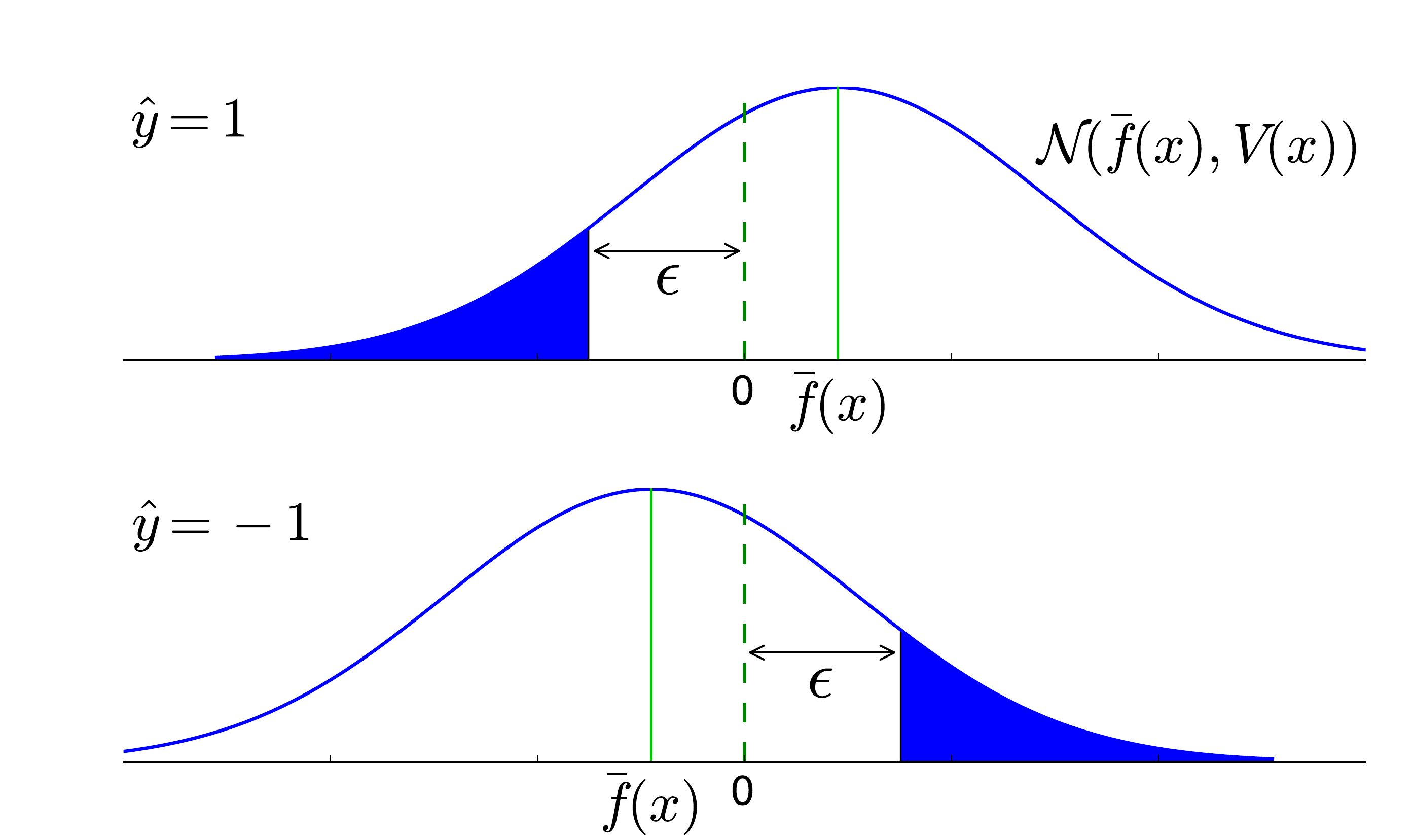}
\caption{The probability density function of the latent function $f(\bm{x})$~\citep{chen2017beyond}. The blue areas represent the $\epsilon$-margin probability $p_\epsilon(\bm{x})$.}
\label{fig:p_epsilon}
\end{figure}

We train a GP classification model to evaluate the informativeness of candidate samples based on the $\epsilon$\textit{-margin probability} (Fig.~\ref{fig:p_epsilon}):
\begin{equation}
\begin{split}
  p_\epsilon(\bm{x}) &= \begin{cases}
    P(f(\bm{x})<-\epsilon|\bm{x}), & \text{if $\hat{y}=1$}\\
    P(f(\bm{x})>\epsilon|\bm{x}), & \text{if $\hat{y}=-1$}
  \end{cases} \\
  &= P(\hat{y}f(\bm{x}) < -\epsilon|\bm{x}) \\
  &= \Phi\left(-\frac{|\bar{f}(\bm{x})|+\epsilon}{\sqrt{V(\bm{x})}}\right)
  \label{eq:p_epsilon}
\end{split}
\end{equation}
where $\hat{y}$ is the estimated label of $\bm{x}$, the margin $\epsilon>0$, and $\Phi(\cdot)$ is the cumulative distribution function of standard Gaussian distribution $\mathcal{N}(0,1)$. The $\epsilon$-margin probability represents the probability of $\bm{x}$ being misclassified with some degree of certainty (controlled by the margin $\epsilon$). 
Let the misclassification loss be
\begin{equation}
\begin{split}
L(\bm{x}) &= \begin{cases}
    \max\{0,-f(\bm{x})\}, & \text{if $y=1$}\\
    \max\{0,f(\bm{x})\}, & \text{if $y=-1$}
  \end{cases}
\end{split}
\label{eq:misclass_loss}
\end{equation}
where $y$ is the true label of $\bm{x}$. $L(\bm{x})$ measures the deviation of the estimated feasibility function value $f(\bm{x})$ from 0 when the class prediction is wrong. Then, based on Eq.~\ref{eq:p_epsilon} and \ref{eq:misclass_loss}, $p_\epsilon(\bm{x})=P(L(\bm{x})>\epsilon)$, which is the probability that the expected misclassification loss exceeds $\epsilon$. A high $p_\epsilon(\bm{x})$ indicates that $\bm{x}$ is very likely to be misclassified, and requires further evaluation. Thus we use this probability to measure informativeness.

\subsection{Exploitation and Exploration}
\label{sec:two_stages}

\begin{figure}
\centering
\includegraphics[width=0.5\textwidth]{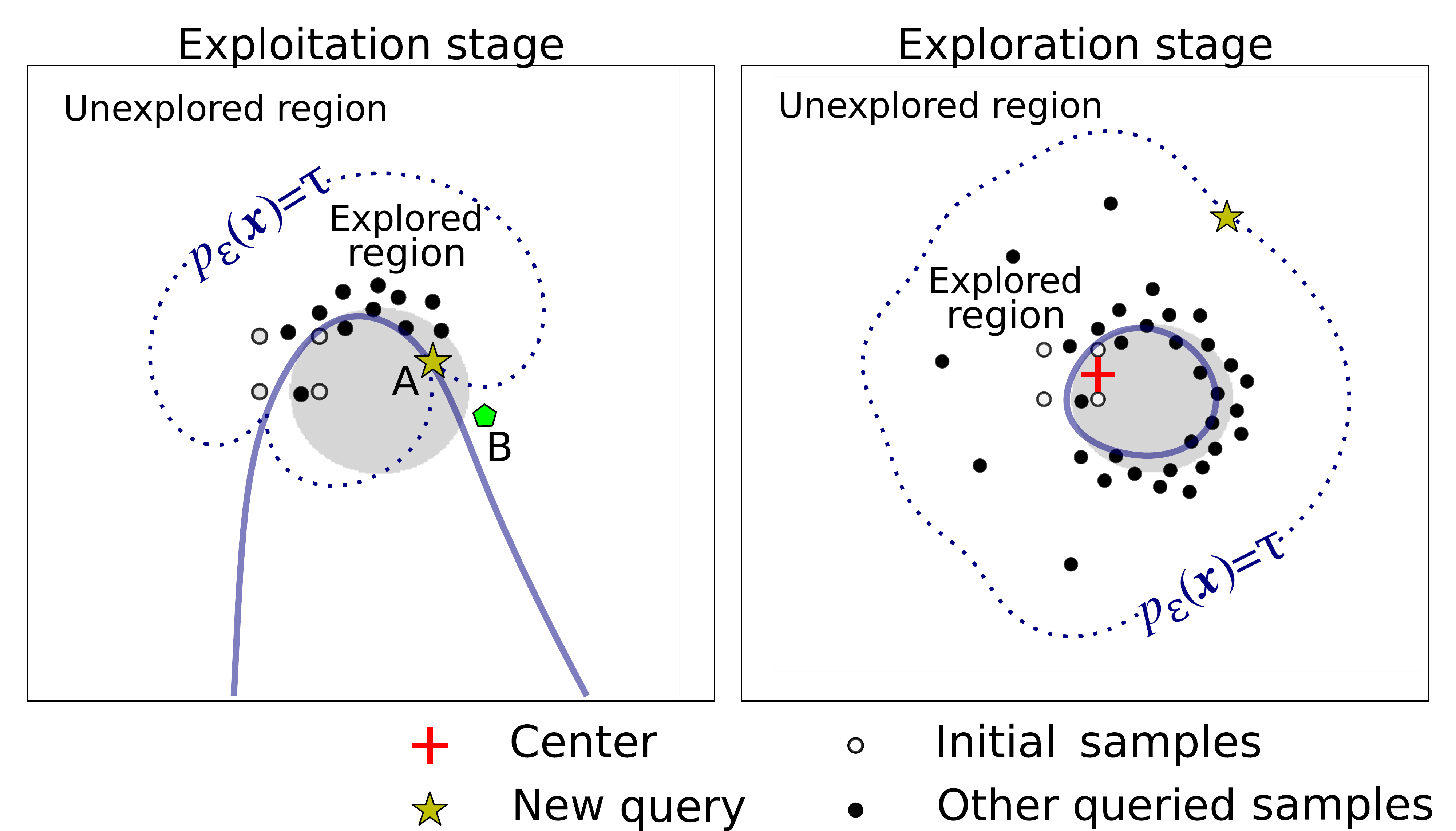}
\caption{Queries at the exploitation stage (left) and the exploration stage (right). The gray area is the ground truth of the feasible domain. The solid line is the decision boundary estimated by the GP classifier; and the dotted line is the isocontour of $p_\epsilon(\bm{x})$. At the exploitation stage (left), the center $\bm{c}$ is the previous query, which makes the next query stay along the decision boundary. At the exploration stage (right), $\bm{c}$ is the centroid of the initial positive samples, which keeps the queries centered around the existing (real-world) samples rather than biasing towards some direction.}
\label{fig:phi_iso}
\end{figure}

Since our input space is unbounded, na\"{i}vely maximizing the $\epsilon$-margin probability (informativeness) will always query points infinitely far away from previous queries.\footnote{A point infinitely far away from previous queries has the $\bar{f}(\bm{x})$ close to 0 and the maximum $V(\bm{x})$, thus the highest $p_\epsilon(\bm{x})$.}
To avoid this issue, one solution is to query informative samples that are close to previously labeled samples. This allows the active learner to progressively expand its knowledge as the queries cover an increasingly large area of the input space. When a new decision boundary is discovered during expansion, we want a query strategy that continues querying points on that decision boundary, such that the new feasible region can be identified as quickly as possible. Therefore, to enable continuous exploitation of the decision boundary, we propose the following query strategy
\begin{equation}
\begin{aligned}
\min_{\bm{x}\in X_U} ~& V(\bm{x}) \\
\text{s.t.} ~& p_\epsilon(\bm{x}) \geq \tau
\end{aligned}
\label{eq:strategy_exploit}
\end{equation}
where $V(\bm{x})$ is the predictive variance at $\bm{x}$, and $\tau$ is a threshold of the informativeness measure $p_\epsilon(\bm{x})$.

\begin{theorem}
The solution to Eq.~\ref{eq:strategy_exploit} will lie at the intersection of the estimated decision boundary ($\bar{f}(\bm{x})=0$) and the isocontour of $p_\epsilon(\bm{x})=\tau$ (Point A in Fig.~\ref{fig:phi_iso}), if that intersection A exists.
\label{thm:intersection}
\end{theorem}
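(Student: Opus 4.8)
The plan is to remove the probabilistic constraint by using that $\Phi$ is strictly increasing, which turns Eq.~\ref{eq:strategy_exploit} into a purely algebraic program in the two quantities $\bar{f}(\bm{x})$ and $V(\bm{x})$; the minimizer can then be read off from a short chain of inequalities rather than from KKT/Lagrangian conditions. First I would note that the argument of $\Phi$ in Eq.~\ref{eq:p_epsilon}, namely $-(|\bar{f}(\bm{x})|+\epsilon)/\sqrt{V(\bm{x})}$, is strictly negative for every $\bm{x}$ (since $\epsilon>0$ and $V(\bm{x})>0$), so $p_\epsilon(\bm{x})<\tfrac{1}{2}$ always; hence the feasible set of Eq.~\ref{eq:strategy_exploit} is nonempty only if $\tau<\tfrac{1}{2}$, in which case $\Phi^{-1}(\tau)<0$. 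Applying $\Phi^{-1}$ to $p_\epsilon(\bm{x})\ge\tau$ and rearranging then yields the equivalent constraint
\[
  V(\bm{x})\ \ge\ \frac{\bigl(|\bar{f}(\bm{x})|+\epsilon\bigr)^2}{\bigl(\Phi^{-1}(\tau)\bigr)^2},
\]
with equality exactly when $p_\epsilon(\bm{x})=\tau$.

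Next I would combine this with the trivial bound $|\bar{f}(\bm{x})|\ge 0$ to obtain, for every feasible $\bm{x}$,
\[
  V(\bm{x})\ \ge\ \frac{\bigl(|\bar{f}(\bm{x})|+\epsilon\bigr)^2}{\bigl(\Phi^{-1}(\tau)\bigr)^2}\ \ge\ \frac{\epsilon^2}{\bigl(\Phi^{-1}(\tau)\bigr)^2},
\]
so $\epsilon^2/(\Phi^{-1}(\tau))^2$ is a uniform lower bound for the objective over the feasible set. The two inequalities are simultaneously tight iff $\bar{f}(\bm{x})=0$ (making the second one an equality) and the original constraint is active, i.e. $p_\epsilon(\bm{x})=\tau$ (making the first one an equality)\smoemdash which is precisely the defining condition of an intersection point of the decision boundary $\bar{f}(\bm{x})=0$ with the isocontour $p_\epsilon(\bm{x})=\tau$. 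Hence, if such a point A exists it is feasible and attains the lower bound, so it is a global minimizer; conversely, any minimizer attains the lower bound and therefore must satisfy both $\bar{f}=0$ and $p_\epsilon=\tau$, i.e. it lies at the intersection. (If there are several intersection points they all have $V=\epsilon^2/(\Phi^{-1}(\tau))^2$ and are all optimal.)

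The step that needs the most care is the direction of the inequality after inverting $\Phi$: it relies on first establishing that feasibility forces $\tau<\tfrac{1}{2}$ and hence $\Phi^{-1}(\tau)<0$, since a sign slip there would flip the constraint and break the argument. A secondary remark is that the statement implicitly treats the candidate pool $X_U$ as a dense subset of the sphere $\mathcal{C}(\bm{x}^{(t-1)},\delta_{exploit})$, so strictly speaking the conclusion is for the idealized continuous problem and holds for the finite sampled pool up to sampling resolution. The remaining ingredients\smoemdash positivity of $V(\bm{x})$, and continuity of $V$ and $p_\epsilon$ so that the minimum is attained\smoemdash are routine.
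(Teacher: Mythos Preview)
Your proof is correct and follows essentially the same approach as the paper: both use the monotonicity of $\Phi$ to rewrite the constraint as $(|\bar{f}(\bm{x})|+\epsilon)/\sqrt{V(\bm{x})}\le -\Phi^{-1}(\tau)$, combine this with $|\bar{f}(\bm{x})|\ge 0$ to obtain a uniform lower bound on $V$, and observe that equality in both steps characterizes the intersection point. Your version is slightly more explicit about the sign of $\Phi^{-1}(\tau)$ and computes the numerical value of the bound, while the paper compares $V_A$ and $V_B$ directly without naming that value, but the logical content is identical.
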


\begin{proof}
In the following proof, we denote $\bar{f}_P=\bar{f}(\bm{x}_P)$, and $V_P=V(\bm{x}_P)$. For a sample $\bm{x}_A$ at the intersection of $\bar{f}(\bm{x})=0$ and $p_\epsilon(\bm{x})=\tau$, we have $\bar{f}_A=0$ and $p_\epsilon(\bm{x}_A)=\Phi(-\epsilon/\sqrt{V_A})=\tau$ (Point A in Fig.~\ref{fig:phi_iso}); and for a sample $\bm{x}_B$ that is any feasible solution to Eq.~\ref{eq:strategy_exploit}, we have $p_\epsilon(\bm{x}_B)=\Phi(-(|\bar{f}_B|+\epsilon)/\sqrt{V_B})\geq\tau$ (Point B in Fig.~\ref{fig:phi_iso}). Thus we get $\epsilon/\sqrt{V_B} \leq (|\bar{f}_B|+\epsilon)/\sqrt{V_B} \leq \epsilon/\sqrt{V_A}$. Therefore, $V_A \leq V_B$. The equality holds when $|\bar{f}_B|=0$ and $p_\epsilon(\bm{x}_B)=\tau$, \ie, $\bm{x}_B$ is also at the intersection of $\bar{f}(\bm{x})=0$ and $p_\epsilon(\bm{x})=\tau$. Thus we proved the intersection has the minimal predictive variance among feasible solutions to Eq.~\ref{eq:strategy_exploit}, and hence it is the optimal solution.
\end{proof}

Theorem~\ref{thm:intersection} indicates that when applying the query strategy shown in Eq.~\ref{eq:strategy_exploit}, the active learner will only query points at the estimated decision boundary\footnote{In Sect.~\ref{sec:aes}, we assume that the queried point is the exact solution to the query strategy. However since we approximate the exact solution by using a pool-based sampling setting, the query may be deviate from the exact solution slightly.} as long as the estimated decision boundary and the isocontour of $p_\epsilon(\bm{x})=\tau$ intersect. The fact that this intersection exists indicates that there are points on the decision boundary that are informative to some extent (\ie, with $p_\epsilon(\bm{x}) \geq \tau$). We call this stage the \textit{exploitation stage}\smoemdash at this stage the active learner exploits the decision boundary. Equation~\ref{eq:strategy_exploit} ensures that the queries are always on the estimated decision boundary until the exploitation stage ends (\ie, there are no longer informative points on the decision boundary).

If the estimated decision boundary and the isocontour of $p_\epsilon(\bm{x})=\tau$ do not intersect, then the algorithm has fully exploited any informative points on the estimated decision boundary (\ie, for all the points on the estimated decision boundary, we have $p_\epsilon(\bm{x})<\tau$). We call this stage the \textit{exploration stage} since the active learner starts to search for another decision boundary (Fig.~\ref{fig:phi_iso}). In this stage, we want the new query to be both informative and close to where we started, since we don't want the new query to deviate too far from where we start. Therefore, the query strategy at the exploration stage is
\begin{equation}
\begin{aligned}
\min_{\bm{x}\in X_U} ~& \|\bm{x}-\bm{c}\| \\
\text{s.t.} ~& p_\epsilon(\bm{x}) \geq \tau
\end{aligned}
\label{eq:strategy_explore}
\end{equation}
where the objective function is the Euclidean distance between $\bm{x}$ and a center $\bm{c}$. This objective keeps the new query selected by Eq.~\ref{eq:strategy_explore} close to $\bm{c}$. In practice, initially when there are only samples from one class, we set $\bm{c}$ as the initial point $\bm{x}^{(0)}$ to keep new queries close to where we start; once there are both positive and negative samples, we set $\bm{c}$ as the centroid of these initial positive samples, since we want to keep new queries close to the initial feasible region.

\begin{theorem}
Given $\bm{x}^*$ as the solution to Eq.~\ref{eq:strategy_explore}, we have $p_\epsilon(\bm{x}^*)=\tau$, if $p_\epsilon(\bm{c})<\tau$.
\label{thm:isocontour}
\end{theorem}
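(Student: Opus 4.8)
The plan is to prove this by contradiction, in the same spirit as the proof of Theorem~\ref{thm:intersection}. Suppose $\bm{x}^*$ minimizes Eq.~\ref{eq:strategy_explore} but satisfies the constraint \emph{strictly}, i.e.\ $p_\epsilon(\bm{x}^*) > \tau$. I will exhibit a feasible point strictly closer to $\bm{c}$, contradicting optimality, and conclude $p_\epsilon(\bm{x}^*) = \tau$. Geometrically (cf.\ the right panel of Fig.~\ref{fig:phi_iso}), the feasible set $\{\bm{x} : p_\epsilon(\bm{x}) \geq \tau\}$ is the region enclosed by the isocontour $p_\epsilon(\bm{x}) = \tau$; the hypothesis $p_\epsilon(\bm{c}) < \tau$ places $\bm{c}$ strictly outside that region, so travelling along the segment from $\bm{x}^*$ back toward $\bm{c}$ must cross the isocontour \emph{before} reaching $\bm{c}$, and that crossing point will be the competitor.

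I would first record two elementary facts. (i) From Eq.~\ref{eq:p_epsilon}, $p_\epsilon(\bm{x}) = \Phi\bigl(-(|\bar{f}(\bm{x})| + \epsilon)/\sqrt{V(\bm{x})}\bigr)$ is continuous in $\bm{x}$, since $\bar{f}$ and $V$ are continuous under the GP model (Eq.~\ref{eq:mean} and~\ref{eq:variance}) and $\Phi$ is continuous; at points where $V(\bm{x}) = 0$ the argument of $\Phi$ tends to $-\infty$, so $p_\epsilon$ extends continuously with value $0$ there. (ii) The hypothesis forces $\bm{x}^* \neq \bm{c}$, because $\bm{x}^*$ is feasible, hence $p_\epsilon(\bm{x}^*) \geq \tau > p_\epsilon(\bm{c})$. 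Now set $g(t) = p_\epsilon\bigl(\bm{c} + t(\bm{x}^* - \bm{c})\bigr)$ for $t \in [0,1]$, the restriction of $p_\epsilon$ to the segment joining $\bm{c}$ to $\bm{x}^*$. Then $g$ is continuous, $g(0) = p_\epsilon(\bm{c}) < \tau$, and, under the contradiction hypothesis, $g(1) = p_\epsilon(\bm{x}^*) > \tau$. By the intermediate value theorem there is $t_0 \in (0,1)$ with $g(t_0) = \tau$, so $\bm{x}' := \bm{c} + t_0(\bm{x}^* - \bm{c})$ satisfies $p_\epsilon(\bm{x}') = \tau$ (thus is feasible for Eq.~\ref{eq:strategy_explore}), while $\|\bm{x}' - \bm{c}\| = t_0\,\|\bm{x}^* - \bm{c}\| < \|\bm{x}^* - \bm{c}\|$ since $0 < t_0 < 1$ and $\bm{x}^* \neq \bm{c}$. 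This contradicts the optimality of $\bm{x}^*$, giving $p_\epsilon(\bm{x}^*) = \tau$.

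The point that needs care---and the only real obstacle---is that the competitor $\bm{x}'$ must be an admissible query, i.e.\ must lie in the candidate pool $X_U$. As the footnote after Theorem~\ref{thm:intersection} indicates, in this section we reason about the \emph{exact} solution of the query strategy, so I would make explicit the idealization that the feasible domain of Eq.~\ref{eq:strategy_explore} is a convex set (the continuous limit of the synthesized finite pool) containing both $\bm{c}$ and $\bm{x}^*$; then $\bm{x}'$, lying on the segment between them, is automatically in it. A secondary, minor issue is that continuity of $g$ could be questioned at points where $V = 0$; but $p_\epsilon = 0 < \tau$ there, so such points are neither $\bm{x}^*$ nor the crossing point $\bm{x}'$ and cannot interfere with the intermediate-value step, so no assumptions beyond those in the statement are required.
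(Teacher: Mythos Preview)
Your proof is correct and takes essentially the same approach as the paper's: both rely on continuity of $p_\epsilon$ and optimality of $\bm{x}^*$ to force $p_\epsilon(\bm{x}^*)=\tau$. The paper phrases it slightly differently---it argues that every point strictly inside the ball of radius $\|\bm{x}^*-\bm{c}\|$ around $\bm{c}$ has $p_\epsilon<\tau$ (else optimality fails), then passes to the boundary by continuity to get $p_\epsilon(\bm{x}^*)\leq\tau$, and combines with feasibility---whereas you restrict to the segment $[\bm{c},\bm{x}^*]$ and apply the intermediate value theorem under a strict-inequality contradiction hypothesis; these are equivalent formulations of the same idea. Your explicit discussion of the $X_U$ idealization and of continuity at $V=0$ is a welcome bit of care that the paper leaves implicit.
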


\begin{proof}
Since $p_\epsilon(\bm{c})<\tau$, $\bm{c}$ itself is not the solution of Eq.~\ref{eq:strategy}. Thus $\|\bm{x}^*-\bm{c}\|>0$. Then we have $p_\epsilon(\bm{x})<\tau$ at any point within a $(d-1)$-sphere centered at $\bm{c}$ with radius $\|\bm{x}^*-\bm{c}\|$, because otherwise the query will be inside the sphere. Thus on that sphere we have $p_\epsilon(\bm{x}) \leq \tau$. So $p_\epsilon(\bm{x}^*) \leq \tau$, since $\bm{x}^*$ is on that sphere. Because $\bm{x}^*$ is a feasible solution to Eq.~\ref{eq:strategy}, we also have $p_\epsilon(\bm{x}^*) \geq \tau$ at $\bm{x}^*$. Therefore $p_\epsilon(\bm{x}^*)=\tau$.
\end{proof}

Theorem~\ref{thm:isocontour} shows that in each iteration, the optimal query $\bm{x}^*$ selected by Eq.~\ref{eq:strategy_explore} is on the isocontour of $p_\epsilon(\bm{x})=\tau$.

For both Eq.~\ref{eq:strategy_exploit} and \ref{eq:strategy_explore}, the feasible solutions are in the region of $p_\epsilon(\bm{x})\geq\tau$. Intuitively this means that we only query samples with at least some level of informativeness. We call the region where $p_\epsilon(\bm{x})\geq\tau$ the \textit{unexplored region}, since it contains informative samples (feasible solutions) that our query strategy cares about; while we call the rest of the input space ($p_\epsilon(\bm{x})\leq\tau$) the \textit{explored region} (Fig.~\ref{fig:phi_iso}).

The upper bound of $p_\epsilon(\bm{x})$ is $\Phi(-\epsilon/\sup_{\bm{x}} V(\bm{x}))$, and it lies infinitely far away from the labeled samples. In Eq.~\ref{eq:variance}, $K+W^{-1}$ is positive semidefinite, thus ${\bm{k}(\bm{x})}^T(K+W^{-1})^{-1}\bm{k}(\bm{x}) \geq 0$ and $V(\bm{x})\leq k(\bm{x},\bm{x})$. For a kernel $k(\cdot)$ with $k(\bm{x},\bm{x})=1$ (\eg, the Gaussian or the exponential kernel), we have $V(\bm{x})\leq 1$. Thus $p_\epsilon(\bm{x}) \leq \Phi(-\epsilon)$. To ensure that Eq.~\ref{eq:strategy} has a feasible solution, we have to set $\tau \leq \Phi(-\epsilon)$. Therefore, we can set $\tau=\Phi(-\eta\epsilon)$, where $\eta \geq 1.0$. Then the constraint in Eq.~\ref{eq:strategy_exploit} and \ref{eq:strategy_explore} can be expressed as
\begin{equation*}
\Phi\left(-\frac{|\bar{f}(\bm{x})|+\epsilon}{\sqrt{V(\bm{x})}}\right) \geq \Phi(-\eta\epsilon)
\end{equation*}
which can be written as
\begin{equation}
\eta\epsilon\sqrt{V(\bm{x})} - |\bar{f}(\bm{x})| \geq \epsilon
\label{eq:straddle0}
\end{equation}
The left-hand side of Eq.~\ref{eq:straddle0} is identical to the acquisition function of the \textit{straddle heuristic} when $\eta\epsilon=1.96$~\citep{bryan2006active}. The straddle heuristic queries the sample with the largest value of the acquisition function. This acquisition function accounts for the \textit{ambiguity} of samples in terms of their confidence intervals~\citep{gotovos2013active}:
\begin{equation*}
\begin{split}
a(\bm{x}) &= \min\{-\min Q(\bm{x}), \max Q(\bm{x})\} \\
&= 1.96\sqrt{V(\bm{x})} - |\bar{f}(\bm{x})|
\end{split}
\end{equation*}
where $Q(\bm{x})$ is the 95\% confidence interval of $\bm{x}$. 

Substituting Eq.~\ref{eq:straddle0} for the constraint in Eq.~\ref{eq:strategy_exploit} and \ref{eq:strategy_explore}, and combining the exploitation and exploration stages, our overall query strategy becomes
\begin{equation}
\begin{aligned}
\min_{\bm{x}\in X_U} ~& V(\bm{x})^\alpha\|\bm{x}-\bm{c}\|^{1-\alpha} \\
\text{s.t.} ~& \eta\epsilon\sqrt{V(\bm{x})} - |\bar{f}(\bm{x})| \geq \epsilon
\end{aligned}
\label{eq:strategy}
\end{equation}
where the indicator $\alpha$ is 1 at the exploitation stage, and 0 otherwise. Section~\ref{sec:distinguish} introduces how to set $\alpha$ (\ie, when to exploit vs explore).

In general, the unbounded domain estimation problem can be solved using a family of query strategies with the following form
\begin{equation*}
\begin{aligned}
\min_{\bm{x}\in X_U} ~& D(\bm{x}) \\
\text{s.t.} ~& I(\bm{x}) \geq \tau
\end{aligned}
\end{equation*}
where $D(\bm{x})$ is a function that increases as $\bm{x}$ moves away from the labeled samples, and $I(\bm{x})$ is the informativeness measure that is used in any bounded active learning methods. Our query strategies of Eqn~\ref{eq:strategy_exploit} and \ref{eq:strategy_explore} all have this form. Comparatively, for bounded active learning methods, the query strategies are usually in the form of $\max_{\bm{x}\in X_U} I(\bm{x})$.

\section{Dynamic Local Pool Generation}
\label{sec:pool}

We cast our problem as pool-based sampling by generating a pool of unlabeled instances de novo in each iteration. A na\"{i}ve way to generate this pool is to try to sample points anywhere near the $p_\epsilon(\bm{x})=\tau$ isocontour. However, intuitively, as the algorithm searches progressively larger volumes of the input space, the pool volume will likewise expand. This expansion means that the size of the pool will increase dramatically over time (assuming we want a constant sample density). This increase, however, makes the computation of Eq.~\ref{eq:mean} and \ref{eq:variance} expensive during later expansion stages.

To bypass this problem, we propose a \textit{dynamic local pool} method that generates the pool of candidate samples only at a certain location in each iteration, rather than sampling the entire domain.\footnote{Sampling methods like random sampling or Poisson-disc sampling~\citep{bridson2007fast} can be used to generate the pool. We use random sampling here thereby for simplicity. The specific choice of the sampling method within the local pool is not central to the overall method.} The key insight behind our local pooling method is that while the optimal solution to Eq.~\ref{eq:strategy} can, in principle, occur anywhere on the $p_\epsilon(\bm{x})=\tau$ isocontour, in practice, multiple points on the isocontour are equally optimal. All we need to do is sample points around any one of those optima. Below, we derive guarantees for how to sample volumes near one of those optima, thus only needing to sample a small fraction of the total domain volume.

\subsection{Scope of an Optimal Query}

\begin{theorem}
Let $\delta$ be the distance between an optimal query\footnote{The optimal query means the exact solution to the AES query strategy shown in Eq.~\ref{eq:strategy_exploit}, \ref{eq:strategy_explore}, or \ref{eq:strategy}.} and its nearest labeled sample.  We have
\begin{equation}
\delta < \beta l
\label{eq:delta}
\end{equation}
where $\beta$ is a coefficient depends on $\epsilon$, $\eta$, and the GP model.
\label{thm:delta}
\end{theorem}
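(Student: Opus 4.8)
The plan is to convert the active feasibility constraint into an upper bound on how far the optimal query can sit from its nearest labeled sample, by exploiting that every quantity entering the constraint is controlled by the Gaussian-kernel vector $\bm{k}(\bm{x})$, whose entries decay like $\exp(-\|\bm{x}-\bm{x}^{(i)}\|^2/(2l^2))$. By Theorems~\ref{thm:intersection} and~\ref{thm:isocontour}, the optimal query $\bm{x}^*$ lies on the isocontour $p_\epsilon(\bm{x}^*)=\tau$, so the constraint in Eq.~\ref{eq:strategy} is active:
\begin{equation*}
\eta\epsilon\sqrt{V(\bm{x}^*)} - |\bar{f}(\bm{x}^*)| = \epsilon .
\end{equation*}
This pins $V(\bm{x}^*)=(|\bar{f}(\bm{x}^*)|+\epsilon)^2/(\eta\epsilon)^2$, and in particular $V(\bm{x}^*)\ge 1/\eta^2$; in the pure exploitation case Theorem~\ref{thm:intersection} gives $\bar{f}(\bm{x}^*)=0$, hence $V(\bm{x}^*)=1/\eta^2$ exactly. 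The crucial observation is that forcing $V(\bm{x}^*)$ to stay bounded away from $1$ prevents $\bm{x}^*$ from drifting arbitrarily far, since $V\to 1$ as the query recedes from all labeled data.

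To make this quantitative I would first lower-bound $V$ as an increasing function of $\delta$. Writing $1-V(\bm{x})=\bm{k}(\bm{x})^T(K+W^{-1})^{-1}\bm{k}(\bm{x})$ and using that $K+W^{-1}$ is positive definite (write $n=|X_L|$),
\begin{equation*}
1-V(\bm{x}) \le \frac{\|\bm{k}(\bm{x})\|^2}{\lambda_{\min}(K+W^{-1})} \le \frac{n}{\lambda_{\min}(K+W^{-1})}\,e^{-\delta^2/l^2},
\end{equation*}
where the last step uses that every labeled sample is at distance at least $\delta$ from $\bm{x}$, so each entry of $\bm{k}(\bm{x})$ is at most $e^{-\delta^2/(2l^2)}$. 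Symmetrically I would decay-bound the mean, $|\bar{f}(\bm{x})|=|\bm{k}(\bm{x})^TK^{-1}\hat{\bm{f}}|\le \sqrt{n}\,\|K^{-1}\hat{\bm{f}}\|\,e^{-\delta^2/(2l^2)}$. Substituting both bounds into the active constraint $V(\bm{x}^*)=(|\bar{f}(\bm{x}^*)|+\epsilon)^2/(\eta\epsilon)^2$ and setting $s=e^{-\delta^2/(2l^2)}\in(0,1]$ yields a one-variable quadratic inequality of the form $A s^2 + B s - \epsilon^2(\eta^2-1)\ge 0$, whose coefficients $A,B$ collect $\epsilon$, $\eta$, $n$, $\lambda_{\min}(K+W^{-1})$ and $\|K^{-1}\hat{\bm{f}}\|$. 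Solving it forces $s\ge s_0$ for the positive root $s_0$, i.e. $e^{-\delta^2/(2l^2)}\ge s_0$, which rearranges to $\delta\le l\sqrt{2\ln(1/s_0)}=\beta l$; the dependence of $s_0$ on $\epsilon$, $\eta$ and the GP model through $n$, $\lambda_{\min}(K+W^{-1})$ and $\|K^{-1}\hat{\bm{f}}\|$ is exactly the promised dependence of $\beta$.

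I expect the main obstacle to be the mean term $\bar{f}$ in the exploration stage. For exploitation the argument collapses immediately, because $\bar{f}(\bm{x}^*)=0$ reduces the constraint to $V(\bm{x}^*)=1/\eta^2$ and the lower bound on $V$ alone delivers $\delta\le l\sqrt{\ln(\,n\eta^2/[\lambda_{\min}(K+W^{-1})(\eta^2-1)]\,)}$. In exploration I must instead show that the decay of $|\bar{f}|$ is fast enough that $V(\bm{x}^*)=(|\bar{f}(\bm{x}^*)|+\epsilon)^2/(\eta\epsilon)^2$ cannot approach $1$ for large $\delta$; this is what makes the quadratic inequality nondegenerate and, crucially, it requires $\eta>1$ so that the constant term $\epsilon^2(\eta^2-1)$ is strictly positive and the positive root satisfies $s_0<1$ (so that $\beta$ is a genuine, finite upper bound). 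Verifying $s_0<1$ across the relevant ranges of the GP constants, and checking the degenerate case where the exploitation intersection of Theorem~\ref{thm:intersection} fails to exist, are the places where I would have to be most careful.
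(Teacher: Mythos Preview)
Your argument follows the same skeleton as the paper's: invoke Theorems~\ref{thm:intersection} and~\ref{thm:isocontour} to make the constraint active at $\bm{x}^*$, bound $|\bar{f}(\bm{x}^*)|$ and $1-V(\bm{x}^*)$ above by constant multiples of $k_m:=e^{-\delta^2/(2l^2)}$ and $k_m^2$ respectively, feed these into $(|\bar{f}(\bm{x}^*)|+\epsilon)/\sqrt{V(\bm{x}^*)}=\eta\epsilon$, and solve the resulting quadratic in $k_m$. The only substantive difference is how those two constants are obtained. You use Cauchy--Schwarz and an eigenvalue bound, producing $\sqrt{n}\,\|K^{-1}\hat{\bm f}\|$ and $n/\lambda_{\min}(K+W^{-1})$; the paper instead replaces every entry of $\bm k(\bm x^*)$ by $k_m$ directly and arrives at the GP-specific scalars $\mu=\operatorname{sign}(\bm y)^T\nabla\log p(\bm y\mid\hat{\bm f})$ and $\nu=\bm 1^T(K+W^{-1})^{-1}\bm 1$, from which it extracts a closed-form $\beta=\beta(\mu,\nu,\epsilon,\eta)$ that is then used verbatim as the local-pool radius. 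Your route is cleaner to justify rigorously (the eigenvalue step is unimpeachable), but the constants carry an explicit factor of $n=|X_L|$ and will typically be looser, which would inflate the dynamic local pool in practice; the paper's constants are tighter and $n$-free, which is what makes the pool computation cheap. Your flagged caveat that $\eta>1$ is required so that the quadratic's constant term $-\epsilon^2(\eta^2-1)$ is strictly negative (hence $s_0<1$ and $\beta$ finite) is correct and is implicitly relied on in the paper's solution as well.
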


We include the proof of Theorem~\ref{thm:delta} and the way of computing $\beta$ in the appendix (Sect.~\ref{pf:delta}).
Theorem~\ref{thm:delta} indicates that if we set the pool boundary by extending the current labeled sample range by $\beta l$, then that pool is guaranteed to contain all solutions to Eq.~\ref{eq:strategy}; that is, extending the overall pool boundary further will not increase the chances of sampling near $p_\epsilon(\bm{x})=\tau$, and will only decrease the sample density (given a fixed pool size) or increase the evaluated samples (given a fixed sample density). However, if we generate the pool based solely on this principle (\ie, extending the current labeled sample range by $\beta l$), the pool size will still increase over time as the domain size grows. The next two sections show how, for the exploration and exploitation stages respectively, we can further reduce the sample boundary to only a local hyper-sphere.

\subsection{Pool for the Exploration Stage}
\label{sec:pool_explore}

\begin{figure}
\centering
\includegraphics[width=0.5\textwidth]{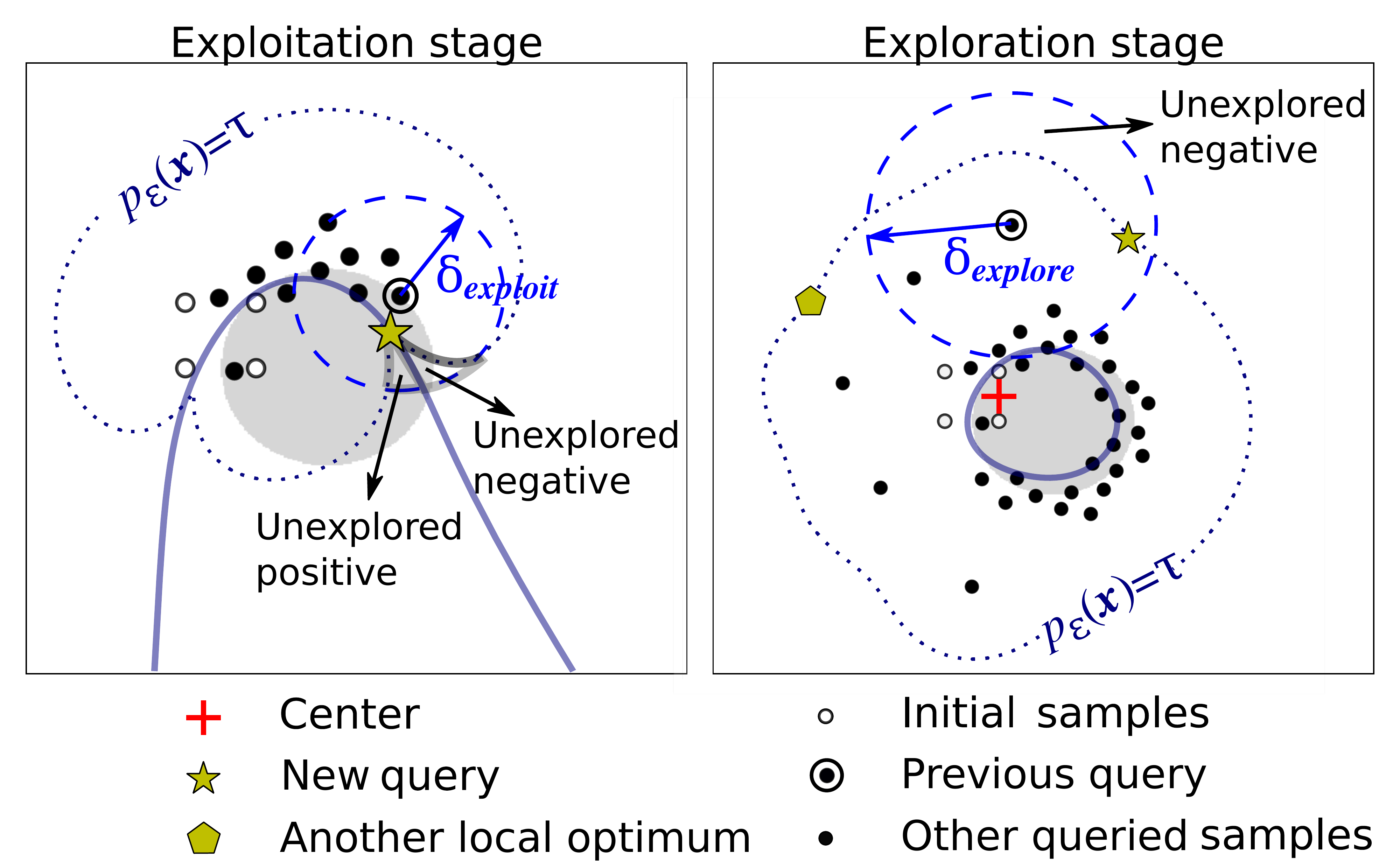}
\caption{Dynamic local pools (dashed circles) at the exploitation stage (left) and the exploration stage (right). During the exploitation stage, the estimated decision boundary divided the unexplored region into two subregions: unexplored negative $\mathcal{R}_1 = \{\bm{x}|p_\epsilon(\bm{x})>\tau, \bar{f}(\bm{x})<0$\} and unexplored positive $\mathcal{R}_2 = \{\bm{x}|p_\epsilon(\bm{x})>\tau, \bar{f}(\bm{x})>0$\}; while during the exploration stage, there will be at most one of $\mathcal{R}_1$ and $\mathcal{R}_2$ in the unexplored region. This property can be used to distinguish between the exploitation/exploration stages.}
\label{fig:pool}
\end{figure}

\begin{theorem}
During the exploration stage of Active Expansion Sampling, the distance between an optimal query and its nearest labeled sample is
\begin{equation}
\delta < \delta_{explore} = \beta l
\label{eq:delta_explore}
\end{equation}
\label{thm:delta_explore}
\end{theorem}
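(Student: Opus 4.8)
The plan is to obtain Theorem~\ref{thm:delta_explore} as a direct specialization of Theorem~\ref{thm:delta} to the exploration stage; the quantitative work is already carried out in the (appendix) proof of Theorem~\ref{thm:delta}, so here I mainly have to check that its hypotheses apply and then read off the radius. First I would recall that, by the convention fixed after Eq.~\ref{eq:strategy}, the exploration stage is exactly the case $\alpha=0$, for which the combined strategy Eq.~\ref{eq:strategy} reduces to the exploration strategy Eq.~\ref{eq:strategy_explore}, namely $\min_{\bm{x}\in X_U}\|\bm{x}-\bm{c}\|$ subject to $\eta\epsilon\sqrt{V(\bm{x})}-|\bar{f}(\bm{x})|\geq\epsilon$ (equivalently $p_\epsilon(\bm{x})\geq\tau$). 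Hence an optimal exploration query $\bm{x}^*$ is, by definition, the exact solution of Eq.~\ref{eq:strategy_explore}, which is one of the three cases (Eq.~\ref{eq:strategy_exploit}, Eq.~\ref{eq:strategy_explore}, Eq.~\ref{eq:strategy}) to which Theorem~\ref{thm:delta} already applies.

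Next I would confirm the standing assumptions of Theorem~\ref{thm:delta} in this setting. The essential one is that the constraint set of Eq.~\ref{eq:strategy_explore} is nonempty, so that $\bm{x}^*$ exists at all; this holds because $\tau=\Phi(-\eta\epsilon)\leq\Phi(-\epsilon)=\sup_{\bm{x}}p_\epsilon(\bm{x})$, the bound on $p_\epsilon$ established just before Eq.~\ref{eq:straddle0}. I would also note that $\bm{c}$ is either the initial point $\bm{x}^{(0)}$ or the centroid of the initial positive samples and therefore lies in the explored region, so $p_\epsilon(\bm{c})<\tau$ and Theorem~\ref{thm:isocontour} places $\bm{x}^*$ on the isocontour $p_\epsilon(\bm{x})=\tau$ (reassuring, though not needed for the bound). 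Applying Theorem~\ref{thm:delta} to $\bm{x}^*$ then gives $\delta:=\min_{\bm{x}\in X_L}\|\bm{x}^*-\bm{x}\|<\beta l$, with $\beta$ the coefficient of Theorem~\ref{thm:delta} computed from $\epsilon$, $\eta$, and the current GP model, exactly as Eq.~\ref{eq:delta_explore} prescribes; setting $\delta_{explore}:=\beta l$ yields $\delta<\delta_{explore}$, which is the claim. I would close by tying this to Algorithm~\ref{alg:ade}: since the exploration pool is the hyper-sphere $\mathcal{C}(\hat{\bm{x}},\delta_{explore})$ (on entering exploration) or $\mathcal{C}(\bm{x}^{(t-1)},\delta_{explore})$ (on continuing it), the bound certifies that the pool is wide enough to contain $\bm{x}^*$ provided the chosen center is the labeled sample nearest $\bm{x}^*$.

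For the statement as written there is little obstacle beyond the bookkeeping above, since the quantitative content is inherited from Theorem~\ref{thm:delta}. The one companion fact that I expect to require genuine care---and the natural reason to isolate this as a separate theorem---is precisely that last proviso: arguing that the designated center ($\hat{\bm{x}}=\mathrm{argmax}_{\bm{x}\in X_L}\|\bm{x}-\bm{c}\|$ just after an exploitation phase, or $\bm{x}^{(t-1)}$ otherwise) really is the labeled sample closest to $\bm{x}^*$, so that the radius bound on $\delta$ translates into coverage of the optimum by the local pool. Establishing this needs a short geometric argument about the shape of the explored region $\{\bm{x}:p_\epsilon(\bm{x})<\tau\}$ around $X_L$ and about how minimizing $\|\bm{x}-\bm{c}\|$ selects the boundary point sitting just past the current frontier sample (cf. Fig.~\ref{fig:pool}); everything else is immediate from Theorem~\ref{thm:delta}.
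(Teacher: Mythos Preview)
Your proposal is correct and matches the paper's approach exactly: the paper's entire proof is the single sentence ``Theorem~\ref{thm:delta_explore} is derived from Eq.~\ref{eq:delta},'' i.e., it is read off immediately from Theorem~\ref{thm:delta}. Your additional bookkeeping is fine, but note that the ``companion fact'' you flag (that the designated pool center is the labeled sample nearest $\bm{x}^*$) is not part of the theorem and is not proved in the paper either; the text after the theorem explicitly says ``we approximate the nearest labeled sample as the previous query,'' treating it as a heuristic rather than a guaranteed property.
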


Theorem~\ref{thm:delta_explore} is derived from Eq.~\ref{eq:delta}.
The nearest labeled sample of the optimal query could be any border point (a sample lying on the periphery of the labeled set). There are multiple local optima that are equally useful for expanding the explored region (Fig.~\ref{fig:pool}). Thus we just sample near one of those optima. Specifically, we approximate the nearest labeled sample as the previous query. With this approximation, incorporating Theorem~\ref{thm:delta_explore}, the optimal query will be inside $\mathcal{C}(\bm{x}^{(t-1)}, \delta_{explore})$, the $(d-1)$-sphere with a radius of $\delta_{explore}$, centered at the previous query $\bm{x}^{(t-1)}$. Thus during the exploration stage, we set the pool boundary to be that sphere (Fig.~\ref{fig:pool}).

Sometimes when AES switches from exploitation to exploration, the previous query may not lie on the periphery of the labeled samples. %
This causes samples around the previous query to have low values of $p_\epsilon(\bm{x})$. In this case, there might not be feasible solution to Eq.~\ref{eq:strategy}. Thus, every time AES switches from exploitation to exploration, we center the pool around the farthest labeled sample from the centroid of the initial positive samples (\ie, $\mathrm{argmax}_{\bm{x}\in X_L} \|\bm{x}-\bm{c}\|$). This ensures that AES generates pool samples near the periphery of the labeled samples.

\subsection{Pool for the Exploitation Stage}
\label{sec:pool_exploit}

\begin{theorem}
During the exploitation stage of Active Expansion Sampling, the distance between an optimal query and its nearest labeled sample is
\begin{equation}
\delta < \delta_{exploit} = \gamma l
\label{eq:delta_exploit}
\end{equation}
where $\gamma$ is a coefficient depends on $\epsilon$, $\eta$, and the GP model.
\label{thm:delta_exploit}
\end{theorem}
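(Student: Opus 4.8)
The plan is to use Theorem~\ref{thm:intersection} to pin down the exploitation-stage optimum, read off the predictive variance there, and then turn that into a distance bound by exploiting the fact that, for the Gaussian kernel, $V(\bm{x})$ tends to $1$ as $\bm{x}$ moves away from the labeled set. First I would invoke Theorem~\ref{thm:intersection}: during the exploitation stage the estimated decision boundary meets the isocontour $p_\epsilon(\bm{x})=\tau$, and the optimal query $\bm{x}^*$ lies at that intersection, so $\bar{f}(\bm{x}^*)=0$ and $p_\epsilon(\bm{x}^*)=\tau$. Substituting $\bar{f}(\bm{x}^*)=0$ into Eq.~\ref{eq:p_epsilon} and using $\tau=\Phi(-\eta\epsilon)$ gives $\Phi\!\left(-\epsilon/\sqrt{V(\bm{x}^*)}\right)=\Phi(-\eta\epsilon)$, hence $V(\bm{x}^*)=1/\eta^2$. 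The essential point is that this is a fixed constant \emph{strictly below} $1$ whenever $\eta>1$ (the only non-degenerate case, since at $\eta=1$ one has $\tau=\Phi(-\epsilon)=\sup_{\bm{x}}p_\epsilon(\bm{x})$ and no proper feasible region).

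Next I would convert the identity $V(\bm{x}^*)=1/\eta^2$ into a lower bound on the covariance vector $\bm{k}(\bm{x}^*)$, and then into an upper bound on the distance $\delta$ from $\bm{x}^*$ to its nearest labeled sample. Because $k(\bm{x},\bm{x})=1$ for the Gaussian kernel (Eq.~\ref{eq:gaussian_kernel}), Eq.~\ref{eq:variance} reads $V(\bm{x}^*)=1-\bm{k}(\bm{x}^*)^T(K+W^{-1})^{-1}\bm{k}(\bm{x}^*)$, so $\bm{k}(\bm{x}^*)^T(K+W^{-1})^{-1}\bm{k}(\bm{x}^*)=1-1/\eta^2$. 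Since $K\succeq 0$ and $W$ is diagonal with $0<W_{ii}\le\tfrac14$ under the logistic likelihood, we have $(K+W^{-1})^{-1}\preceq W\preceq\tfrac14 I$, so $\tfrac14\|\bm{k}(\bm{x}^*)\|^2\ge 1-1/\eta^2$, i.e. $\|\bm{k}(\bm{x}^*)\|^2\ge 4(1-1/\eta^2)$. On the other hand, every entry of $\bm{k}(\bm{x}^*)$ is at most $\exp(-\delta^2/2l^2)$, so $\|\bm{k}(\bm{x}^*)\|^2\le |X_L|\,\exp(-\delta^2/l^2)$. Combining the two inequalities yields $\delta\le l\sqrt{\ln\!\big(|X_L|/(4(1-1/\eta^2))\big)}=:\gamma l$, which is of the claimed form, with $\gamma$ depending only on $\epsilon$, $\eta$, and the GP model (through $W$, $K$, and $|X_L|$). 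Since the generic optimum in Theorem~\ref{thm:delta} only satisfies $V(\bm{x}^*)\le 1$ whereas here it is pinned at the strictly smaller level $1/\eta^2$, the same chain gives $\gamma<\beta$, which is exactly why Sect.~\ref{sec:pool_exploit} can shrink the pool further during exploitation.

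The step I expect to be the main obstacle is the lower bound on $\|\bm{k}(\bm{x}^*)\|$, or rather producing a $\gamma$ that is not needlessly loose: the operator-norm argument above is painless but leaves a $\sqrt{\ln|X_L|}$ factor, and replacing it by a bound involving only the kernel value between $\bm{x}^*$ and its nearest labeled sample requires controlling the conditioning of $K+W^{-1}$ — i.e. how the labeled samples are spread relative to $l$ — which is precisely the ``depends on the GP model'' bookkeeping carried out in the appendix alongside the proof of Theorem~\ref{thm:delta}. A minor point to verify is that the intersection invoked from Theorem~\ref{thm:intersection} genuinely exists throughout the exploitation stage, but this is immediate from the definition of that stage in Sect.~\ref{sec:two_stages}, so it needs no extra work.
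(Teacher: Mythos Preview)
Your first step---using Theorem~\ref{thm:intersection} to conclude $\bar{f}(\bm{x}^*)=0$ and $p_\epsilon(\bm{x}^*)=\tau$, hence $V(\bm{x}^*)=1/\eta^2$---is exactly what the paper does. The conversion of this variance identity into a distance bound is where the two arguments diverge. The paper does not introduce a new inequality; it simply reuses the bound already established in the proof of Theorem~\ref{thm:delta}, namely $V(\bm{x}^*)>1-k_m^2\nu$ with $k_m=e^{-\delta^2/(2l^2)}$ and $\nu=\bm{1}^T(K+W^{-1})^{-1}\bm{1}$. Plugging in $V(\bm{x}^*)=1/\eta^2$ and solving for $\delta$ gives the explicit constant $\gamma=\sqrt{\log\!\big(\eta^2\nu/(\eta^2-1)\big)}$. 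Your route via the operator inequality $(K+W^{-1})^{-1}\preceq W\preceq\tfrac14 I$ together with $\|\bm{k}(\bm{x}^*)\|^2\le|X_L|\,e^{-\delta^2/l^2}$ is a valid alternative and is cleaner linear algebra, but it yields a different constant carrying an explicit $|X_L|$ factor. In practice the two are comparable, since the paper's $\nu$ also grows with $|X_L|$ (indeed $\nu\le\tfrac14|X_L|$ by your own operator bound). What the paper's choice buys is uniformity: $\gamma$ is written in the same model quantity $\nu$ that appears in $\beta$ from Theorem~\ref{thm:delta}, so the comparison $\gamma<\beta$ and the pool-shrinking of Sect.~\ref{sec:pool_exploit} are immediate. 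The ``conditioning of $K+W^{-1}$'' you flag as the main obstacle is precisely what $\nu$ packages, so you have correctly located where the paper's bookkeeping lives.
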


We include the proof of Theorem~\ref{thm:delta_exploit} and the way of computing $\gamma$ in the appendix (Sect.~\ref{pf:delta_exploit}).
Similar to the exploration stage, based on Theorem~\ref{thm:delta_exploit}, we define the pool boundary during the exploitation stage as $\mathcal{C}(\bm{x}^{(t-1)}, \delta_{exploit})$, a $(d-1)$-sphere with a radius of $\delta_{exploit}$, centered at the previous query $\bm{x}^{(t-1)}$ (Fig.~\ref{fig:pool}).

\subsection{Choosing when to Exploit versus Explore}
\label{sec:distinguish}

Since we use different rules to generate the pool at the exploitation and exploration stage, we need to distinguish between the two stages at the beginning of each iteration. In the exploitation stage, according to Theorem~\ref{thm:delta_exploit}, the optimal query lies within the $(d-1)$-sphere $\mathcal{C}(\bm{x}^{(t-1)}, \delta_{exploit})$ centered at the previous query. While, according to Theorem~\ref{thm:intersection}, that same query must lie where the estimated decision boundary and the isocontour of $p_\epsilon(\bm{x})=\tau$ intersect.
Thus, the decision boundary and the isocontour divide the sphere $\mathcal{C}$ into four regions (Fig.~\ref{fig:pool}):\\
\textit{unexplored negative} $\mathcal{R}_1 = \{\bm{x}|p_\epsilon(\bm{x})>\tau, \bar{f}(\bm{x})<0$\}; 
\\\textit{unexplored positive} $\mathcal{R}_2 = \{\bm{x}|p_\epsilon(\bm{x})>\tau, \bar{f}(\bm{x})>0$\}; 
\\\textit{explored negative} $\mathcal{R}_3 = \{\bm{x}|p_\epsilon(\bm{x})<\tau, \bar{f}(\bm{x})<0$\}; and 
\\\textit{explored positive} $\mathcal{R}_4 = \{\bm{x}|p_\epsilon(\bm{x})<\tau, \bar{f}(\bm{x})>0$\}.\\
In contrast, during exploration the estimated decision boundary and the $p_\epsilon(\bm{x})=\tau$ isocontour do not intersect\smoemdash meaning, unlike exploitation, there exist only two of the four regions (either $\mathcal{R}_1$ \& $\mathcal{R}_3$ \textit{or} $\mathcal{R}_2$ \& $\mathcal{R}_4$). In particular, within the unexplored region, $\bar{f}(\bm{x})$ will be either all positive or all negative, \ie, $\mathcal{R}_1$ and $\mathcal{R}_2$ cannot exist simultaneously (Fig.~\ref{fig:pool}). 

We use this property to detect exploitation or exploration by generating a pool (a set of uniformly distributed samples) within the boundary $\mathcal{C}(\bm{x}^{(t-1)}, \delta_{exploit})$ and checking if, for samples with $p_\epsilon(\bm{x})>\tau$, samples differ in $\bar{f}(\bm{x})>0$ and $\bar{f}(\bm{x})<0$. If so, AES is in the exploitation stage; otherwise it is in the exploration stage.

\section{Theoretical Analysis}

In this section, we derive a theoretical accuracy bound for AES with respect to its hyperparameters. We further discuss the influence of those hyperparameters on the classification accuracy, the query density, and the exploration speed. The results of this section guides the selection of proper hyperparameters given an accuracy or budget requirement.

\subsection{Accuracy Analysis}
\label{sec:acc_bound}

It is impossible to discuss the function accuracy across the entire input space, since the input space is unbounded. However, we can consider ways to bound the accuracy within bounded explored regions at any time step.

As mentioned in Sect.~\ref{sec:strategy}, $p_\epsilon(\bm{x})=P(L(\bm{x})>\epsilon)$, where $L(\bm{x})$ is the misclassification loss at $\bm{x}$ defined in Eq.~\ref{eq:misclass_loss}. Thus within the explored region, we have
\begin{equation*}
P(L(\bm{x})\geq\epsilon)\leq\tau ~~~ \forall \bm{x}\in\{\bm{x}|p_\epsilon(\bm{x})\leq\tau\}
\end{equation*}
or
\begin{equation}
P(L(\bm{x})\leq\epsilon)\geq 1-\tau ~~~ \forall \bm{x}\in\{\bm{x}|p_\epsilon(\bm{x})\leq\tau\}
\label{eq:accuracy}
\end{equation}

This shows that at any location within the explored region of the input space, the proposed method guarantees an upper bound $\epsilon$ of misclassification loss with a probability of at least $1-\tau$ at any given point. Since, in the exploration stage, the estimated decision boundary lies inside the $p_\epsilon(\bm{x})\leq\tau$ region (as discussed in Sect.~\ref{sec:two_stages}), we have 
\begin{equation*}
P(L(\bm{x})\leq\epsilon)\geq 1-\tau ~~~ \forall \bm{x}\in\{\bm{x}|\bar{f}(\bm{x})=0\}
\end{equation*}
This means that in the exploration stage, the estimated decision boundary $\bar{f}(\bm{x})=0$ lies in between the isocontours of $f(\bm{x})=\pm\epsilon$ with a probability of at least $1-\tau$, where $f$ is the true latent function.

Note that Eq.~\ref{eq:accuracy} shows that AES's accuracy bound within the explored region is independent of the number of iterations or labeled samples. One advantage of keeping a constant accuracy bound for AES is that the accuracy in the explored region meets our requirements\footnote{We can set $\epsilon$ and $\tau$ such that the accuracy bound is as required. Details about how to set hyperparameters are in Sect.~\ref{sec:hyperparam}.} whenever AES stops. This also means that the estimation within the explored region is reliable at any iteration (although this is not true if one includes the unexplored region). In contrast, bounded active learning methods usually only achieve required accuracy after a certain number of iterations, before which the estimation may not be reliable. Therefore, AES can be used for real-time prediction of samples' feasibility in the explored region.

\subsection{Query Density}

In Gaussian Processes, given a fixed homoscedastic Gaussian or exponential kernel, we can measure the query density by looking at the predictive variance at queried points. According to Eq.~\ref{eq:variance}, $V(\bm{x})$ only depends on $\bm{k}(\bm{x})$, which is affected by the distances between $\bm{x}$ and other queries. A smaller variance at a query indicates that it is closer to other queries, and hence a higher query density; and vise versa.

\begin{theorem}
The predictive variance of an optimal query in the exploitation and exploration stage is
\begin{equation}
V(\bm{x}_{exploit}) = \frac{1}{\eta^2}
\label{eq:v_exploit}
\end{equation}
and
\begin{equation}
V(\bm{x}_{explore}) = \frac{1}{\eta^2}\left(1+\frac{|\bar{f}(\bm{x}_{explore})|}{\epsilon}\right)^2
\label{eq:v_explore}
\end{equation}
where $\bm{x}_{exploit}$ and $\bm{x}_{explore}$ are optimal queries at the exploitation stage and exploration stage, respectively.
\label{thm:density}
\end{theorem}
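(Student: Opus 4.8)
The plan is to reduce each of the two identities to the single defining equation $p_\epsilon(\bm{x})=\tau$ evaluated at the optimal query, and then simply invert the Gaussian CDF using the parametrization $\tau=\Phi(-\eta\epsilon)$. Recall from Eq.~\ref{eq:p_epsilon} that $p_\epsilon(\bm{x})=\Phi\!\left(-(|\bar f(\bm{x})|+\epsilon)/\sqrt{V(\bm{x})}\right)$ and that $\Phi$ is strictly increasing, so an equality of $p_\epsilon$ values is equivalent to an equality of the arguments.

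For the exploitation stage I would invoke Theorem~\ref{thm:intersection}: the optimal query $\bm{x}_{exploit}$ (the exact solution of Eq.~\ref{eq:strategy_exploit}) lies at the intersection of the estimated decision boundary $\bar f(\bm{x})=0$ and the isocontour $p_\epsilon(\bm{x})=\tau$, when that intersection exists. Substituting $\bar f(\bm{x}_{exploit})=0$ into Eq.~\ref{eq:p_epsilon} gives $\Phi\!\left(-\epsilon/\sqrt{V(\bm{x}_{exploit})}\right)=\tau=\Phi(-\eta\epsilon)$. Cancelling $\Phi$ and using $\epsilon>0$ yields $\epsilon/\sqrt{V(\bm{x}_{exploit})}=\eta\epsilon$, i.e. $\sqrt{V(\bm{x}_{exploit})}=1/\eta$, which is exactly Eq.~\ref{eq:v_exploit}.

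For the exploration stage I would use Theorem~\ref{thm:isocontour}: the optimal query $\bm{x}_{explore}$ (the solution $\bm{x}^*$ of Eq.~\ref{eq:strategy_explore}) satisfies $p_\epsilon(\bm{x}_{explore})=\tau$, under the hypothesis $p_\epsilon(\bm{c})<\tau$; I would remark that this hypothesis is met because $\bm{c}$ (the initial point, or the centroid of the initial positive samples) sits in the interior of the labeled set, where $p_\epsilon$ is small. Then Eq.~\ref{eq:p_epsilon} gives $\Phi\!\left(-(|\bar f(\bm{x}_{explore})|+\epsilon)/\sqrt{V(\bm{x}_{explore})}\right)=\Phi(-\eta\epsilon)$; cancelling $\Phi$ and solving for the variance gives $\sqrt{V(\bm{x}_{explore})}=(|\bar f(\bm{x}_{explore})|+\epsilon)/(\eta\epsilon)=\tfrac1\eta\bigl(1+|\bar f(\bm{x}_{explore})|/\epsilon\bigr)$, and squaring yields Eq.~\ref{eq:v_explore}.

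I do not expect a genuine obstacle here: once the two structural facts locating the optimal query (Theorems~\ref{thm:intersection} and~\ref{thm:isocontour}) are in hand, the argument is a one-line inversion of $\Phi$. The only points requiring mild care are (i) carrying over the existence/feasibility caveats of those theorems (the decision boundary meets the isocontour in the exploitation case, and $p_\epsilon(\bm{c})<\tau$ in the exploration case), and (ii) noting $\eta\epsilon>0$ so that the division and the strict monotonicity of $\Phi$ are legitimate.
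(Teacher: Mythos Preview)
Your proposal is correct and follows essentially the same route as the paper: invoke Theorem~\ref{thm:intersection} (so $\bar f=0$ and $p_\epsilon=\tau$ at $\bm{x}_{exploit}$) and Theorem~\ref{thm:isocontour} (so $p_\epsilon=\tau$ at $\bm{x}_{explore}$), substitute $\tau=\Phi(-\eta\epsilon)$ into Eq.~\ref{eq:p_epsilon}, and invert $\Phi$ to read off $V$. The paper's proof is terser (it cites the already-derived $V(\bm{x}^*)=1/\eta^2$ from the proof of Theorem~\ref{thm:delta_exploit} for the exploitation case), but the content is the same.
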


The proof of Theorem~\ref{thm:density} is in the appendix (Sect.~\ref{pf:density}). This theorem indicates that the predictive variances of queries at the exploitation stage are always smaller than those at the exploration stage (as $|\bar{f}(\bm{x}_{explore})|>0$). Thus the query density at the exploitation stage is always higher than that at the exploration stage. The property of having a denser set of points along the decision boundary (queried during the exploitation stage) and a sparser set of points at other regions (queried during the exploration stage) is desirable because we want to save our query budget for refining the decision boundary rather than other regions of the input space.

Equation~\ref{eq:accuracy} and \ref{eq:v_exploit} also reflect the trade-off between the accuracy and the running time. When the query density near the decision boundary is high (small $V(\bm{x}_{exploit})$ in Eq.~\ref{eq:v_exploit}), $\eta$ is large, thus $\tau$ in Eq.~\ref{eq:accuracy} is small, which means our model will have a higher probability of having a misclassification loss less than $\epsilon$. However, as the query density gets higher, we need more queries to cover a certain region, thus the running time increases.

\subsection{Influence of Hyperparameters}
\label{sec:hyperparam}

There are four hyperparameters that control Active Expansion Sampling\smoemdash the initial point $\bm{x}^{(0)}$, $\epsilon$ and $\eta$ in the exploitation/exploration stage, and the length scale $l$ of the GP kernel.
The choice of the kernel function and length scale depends on assumptions regarding the nature and smoothness of the underlying feasibility function. Such kernel choices have been covered extensively in prior research and we refer interested readers to ~\citep{rasmussen2006gaussian} for multiple methods of choosing $l$. Note that it is difficult to optimize the length scale at each iteration, since the length scale will eventually be pushed to extremes. In the exploitation stage, for example, once the length scale is smaller than the previous iteration, the distance between the new query and its nearest query will also be smaller (due to Eq.~12). Then the maximum marginal likelihood estimation will result in a smaller length scale, as the estimated function is steeper. This process will repeat and eventually cause the optimal length scale to converge to 0. The initial point $\bm{x}^{(0)}$ can be any point not too far away from the boundary of feasible regions, since otherwise it will take a large budget to just search for a sample from the opposite class. Here we focus on the analysis of the other two hyperparameters\smoemdash $\epsilon$ and $\eta$.

According to Eq.~\ref{eq:accuracy}, $\epsilon$ and $\tau$ affect the classification accuracy in a probabilistic way. When $\tau=\Phi(-\eta\epsilon)$, we have $P(L(\bm{x})\leq\epsilon)\geq 1-\Phi(-\eta\epsilon)$ in the explored region. This offers us a guideline for setting $\epsilon$ and $\eta$ with respect to a given accuracy requirement.

According to Eq.~\ref{eq:v_exploit} and \ref{eq:v_explore}, $\eta$ controls the density of queries in both exploitation and exploration stages. Specifically, as we increase $\eta$, $V_{exploit}$ and $V_{explore}$ decreases, increasing the query density and essentially placing labeled points closer together.

In contrast, $\epsilon$ only controls the distances between queries in the exploration stage.
\footnote{Technically, due to sampling error introduced when generating the pool, the exploitation stage will be influenced by $\epsilon$ (since $\bar{f}(\bm{x}^*)$ is only $\approx 0$). But this effect is negligible compared to $\epsilon$'s influence on the exploration stage.} Increasing $\epsilon$ decreases $V_{explore}$ and hence increases the density of queries in the exploration stage. 
This density of queries affects (1) how fast we can expand the explored region, and (2) how likely we are to capture small feasible regions. When $\eta$ or $\epsilon$ increases, we expand the explored region slower, making it more likely that we will discover smaller feasible regions. Likewise, we also slow down the expansion in exploitation stages, making the classifier more likely to capture a sudden change along domain boundaries.

Note that when $\epsilon=0$, the constraint of $p_\epsilon(\bm{x})\geq\tau$ in Eq.~\ref{eq:strategy} is equivalent to $\bar{f}(\bm{x})=0$, thus theoretically all queries should lie near the estimated decision boundary. In this case, the Active Expansion Sampling acts like \textit{Uncertainty Sampling}~\citep{lewis1994heterogeneous,lewis1994sequential}. In practice, however, AES will be unable to find a feasible solution when $\epsilon=0$ since no candidate sample will be exactly on the decision boundary under the pool-based sampling setting.



\section{Experimental Evaluation}
\label{sec:experiments}

We evaluate the performance of AES in capturing feasible domains using both synthesized and real-world examples. The performance is measured by the F1 score, which is expressed as
\begin{equation*}
F_1 = 2\cdot\frac{\text{precision}\cdot\text{recall}}{\text{precision}+\text{recall}}
\end{equation*}
where
\begin{equation*}
\text{precision}=\frac{\text{true positives}}{\text{true positives}+\text{false positives}}
\end{equation*}
and
\begin{equation*}
\text{recall}=\frac{\text{true positives}}{\text{true positives}+\text{false negatives}}
\end{equation*}

We compare AES with two conventional bounded adaptive sampling methods\smoemdash the Neighborhood-Voronoi (NV) algorithm~\citep{singh2017sequential} and the straddle heuristic~\citep{bryan2006active}. We also investigate the effects of noise and dimensionality on AES.

We use the same pool size (500 candidate samples\footnote{For NV algorithm, its pool size refers to the test samples generated for the Monte Carlo simulation.}) in all the experiments. In Fig.~\ref{fig:hyppara}-\ref{fig:f1s_noisy} and \ref{fig:highdim}, the F1 scores are averaged over 100 runs. We run all 2-dimensional experiments on a Dell Precision Tower 5810 with 16 GB RAM, a 3.5 GHz Intel Xeon CPU E5-1620 v3 processor, and a Ubuntu 16.04 operating system. We run all higher-dimensional experiments on a Dell Precision Tower 7810 with 32 GB RAM, a 2.4 GHz Intel Xeon CPU E5-2620 v3 processor, and a Red Hat Enterprise Linux Workstation 7.2 operating system. The Python code needed to reproduce our AES algorithm, our baseline implementations of NV and Straddle, and all of our below experiments is available at \url{https://github.com/IDEALLab/Active-Expansion-Sampling}.


\begin{figure*}
\centering
\includegraphics[width=1\textwidth]{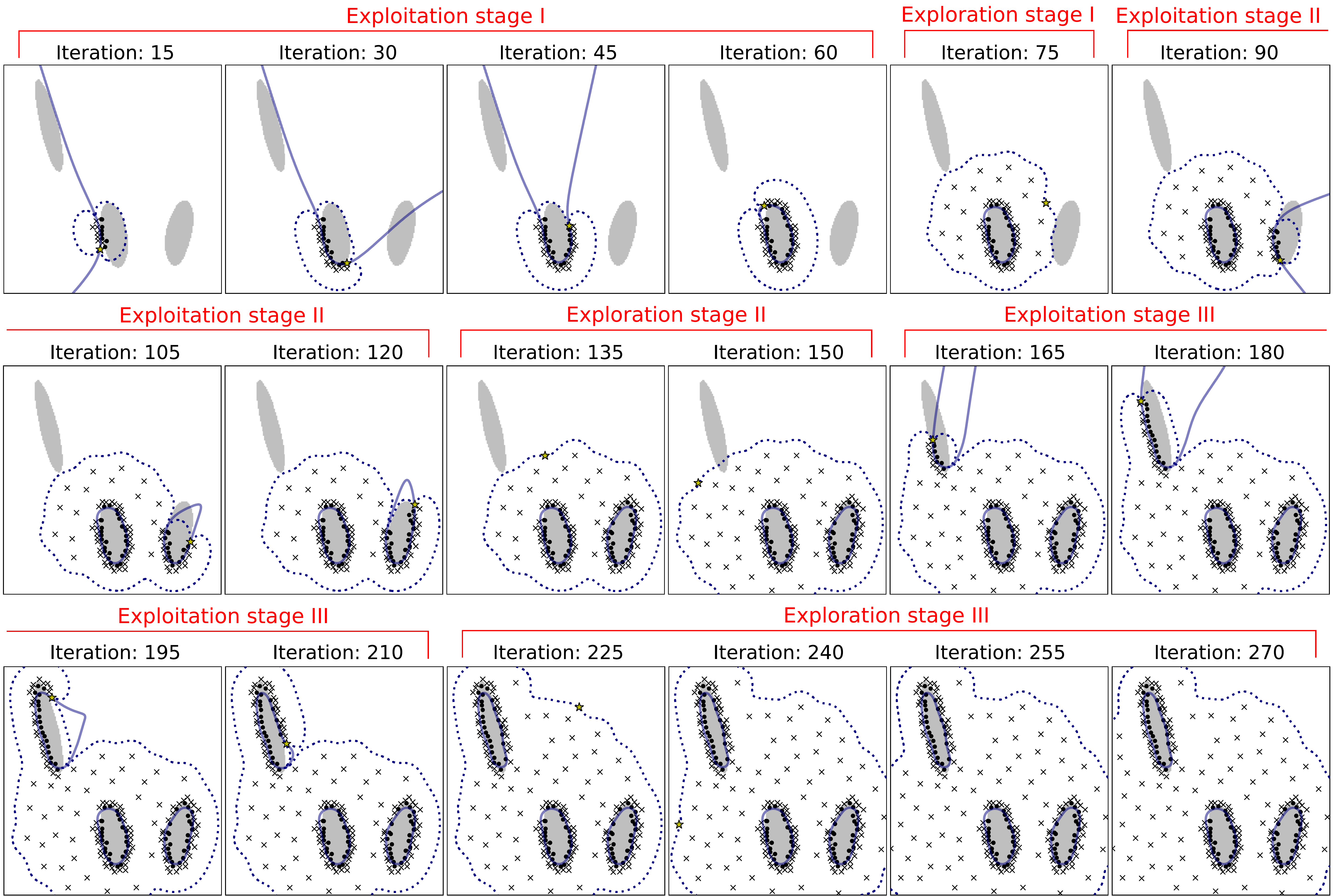}
\caption{Querying sequence for Active Expansion Sampling ($\epsilon=0.5$ and $\eta=1.3$). The solid lines are estimated decision boundaries, and the dotted lines are the isocontour $p_\epsilon(\bm{x})=\tau$. The gray areas are actual feasible regions.}
\label{fig:opt}
\end{figure*}

\begin{figure*}
\centering
\subfloat[Neighborhood-Voronoi algorithm.]{
\includegraphics[width=0.51\textwidth]{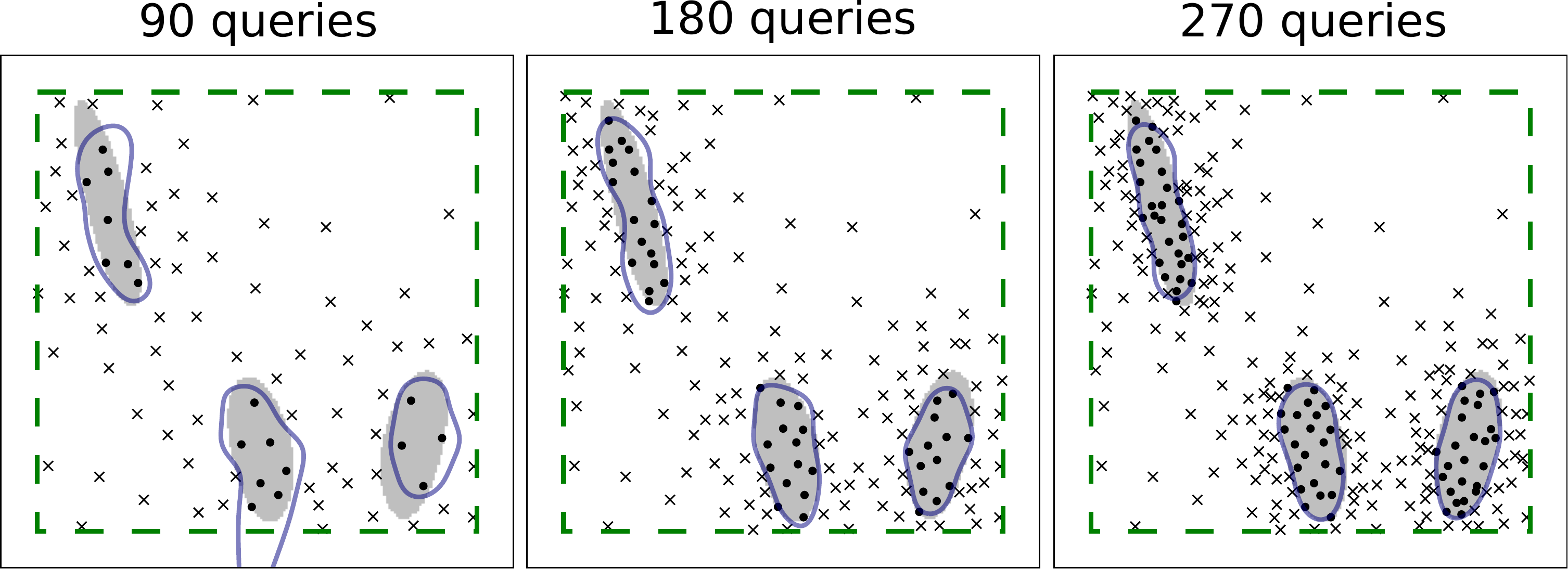}
\label{fig:opt_nv}}
\subfloat[Straddle heuristic.]{
\includegraphics[width=0.49\textwidth]{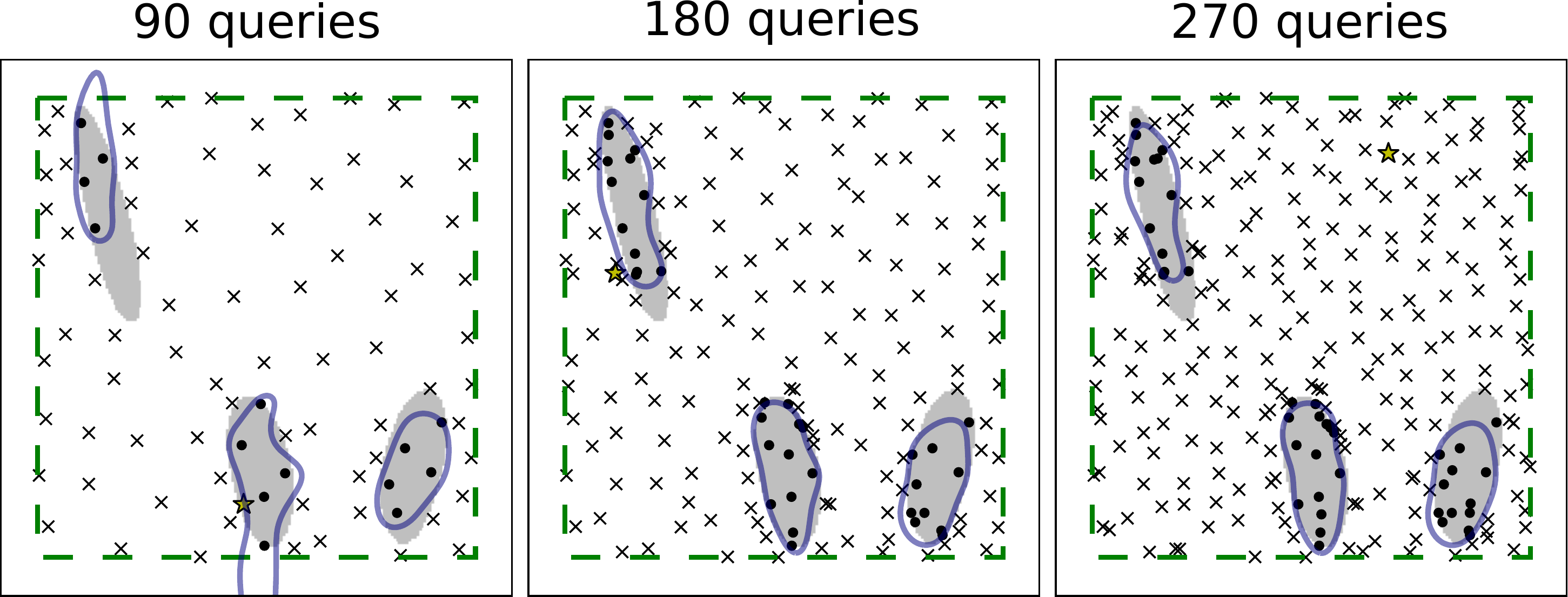}
\label{fig:opt_straddle}}
\caption{Querying sequence for bounded adaptive sampling methods. The dashed lines are pool boundaries.}
\label{fig:opt_bounded}
\end{figure*}

\begin{figure}
\centering
\includegraphics[width=0.5\textwidth]{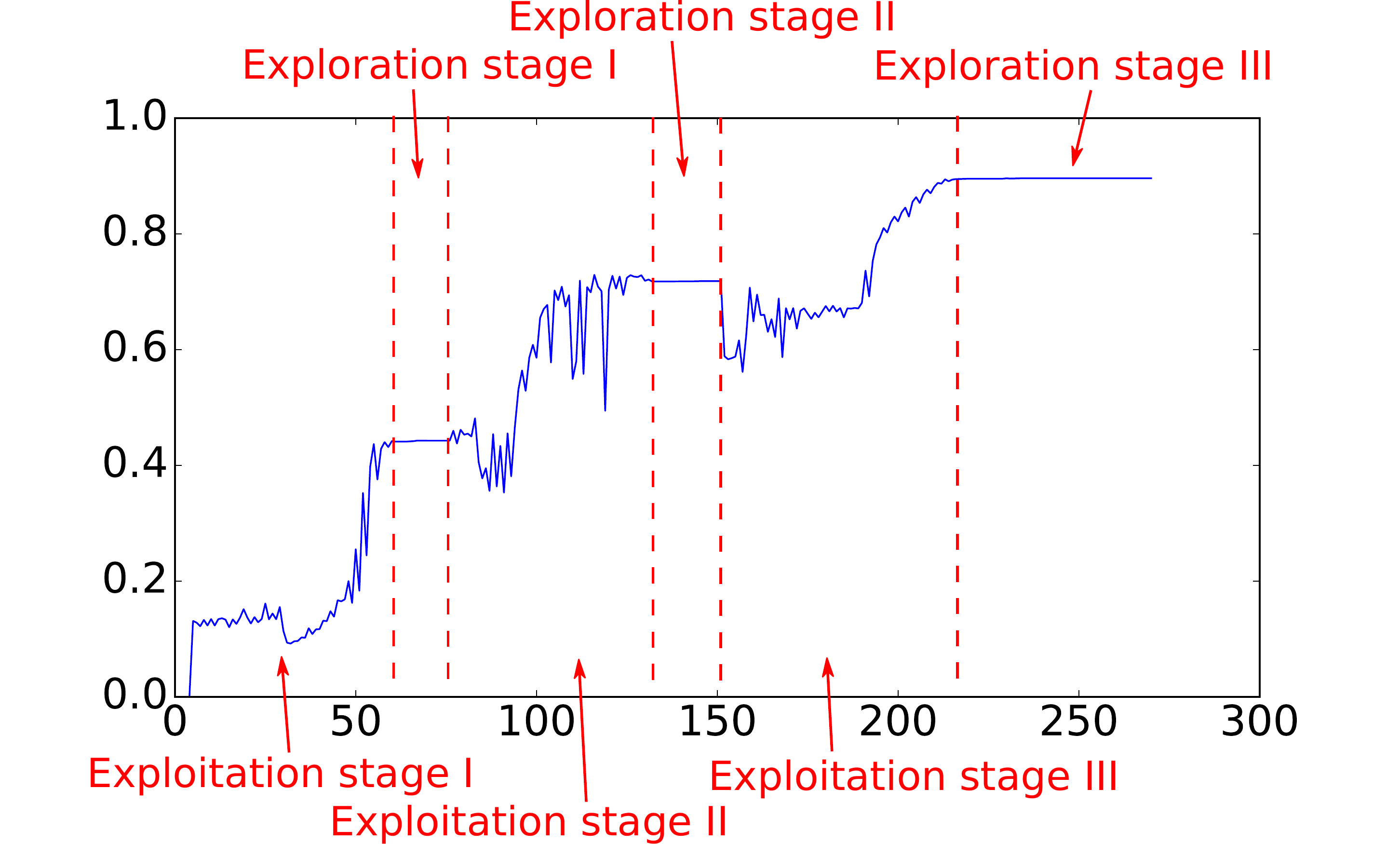}
\caption{F1 score plot for Fig.~\ref{fig:opt}. During exploitation stages, the F1 score increases stochastically as the decision boundary changes; while in the exploration stage, the current decision boundaries have been exploited and do not change, thus the F1 score also does not change.}
\label{fig:opt_f1s}
\end{figure}

\begin{figure*}
\centering
\subfloat[Changing $\epsilon$ ($\eta=1.3$).]{
\includegraphics[width=0.5\textwidth]{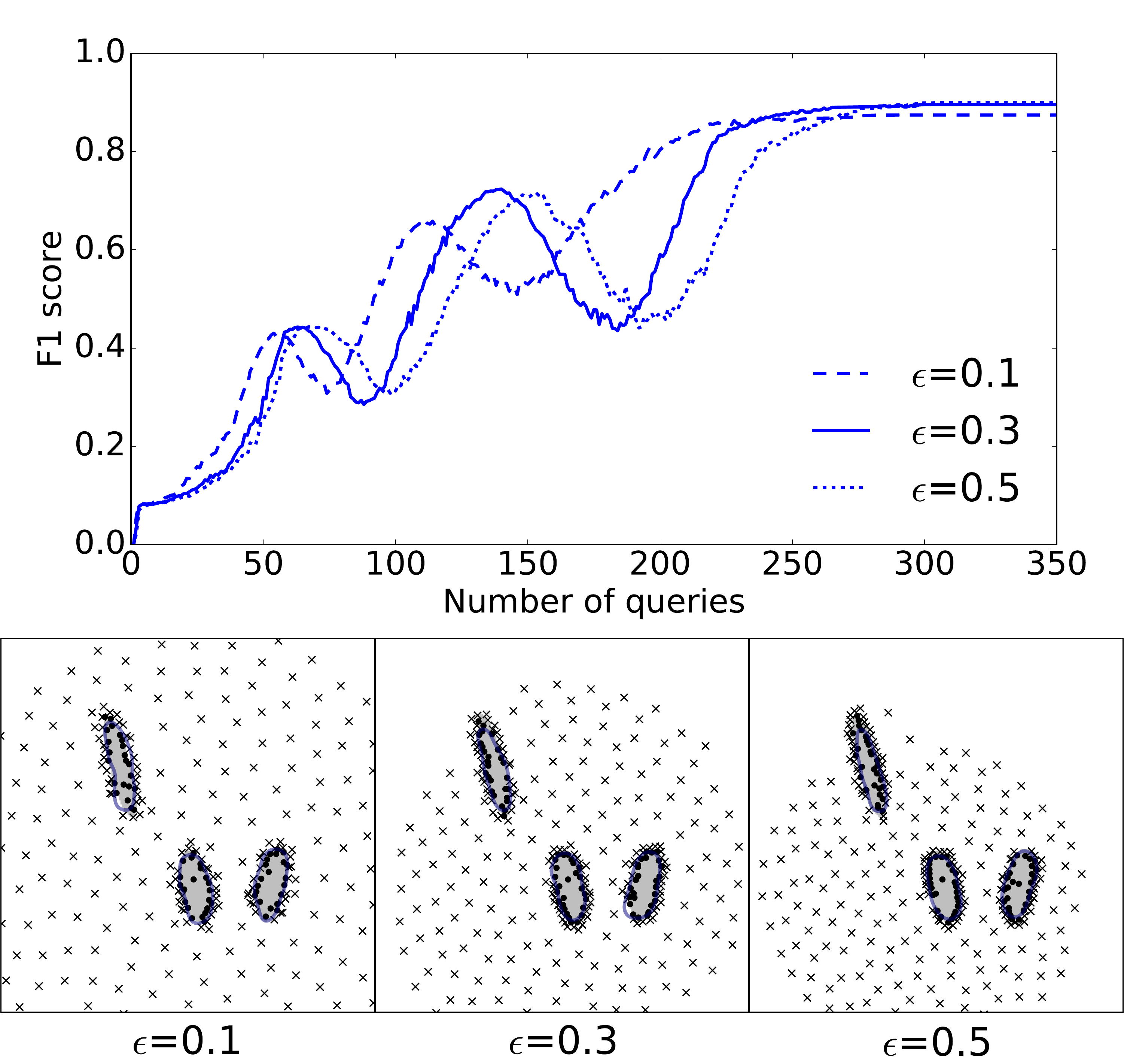}
\label{fig:f1s_epsilon}}
\subfloat[Changing $\eta$ ($\epsilon=0.3$).]{
\includegraphics[width=0.5\textwidth]{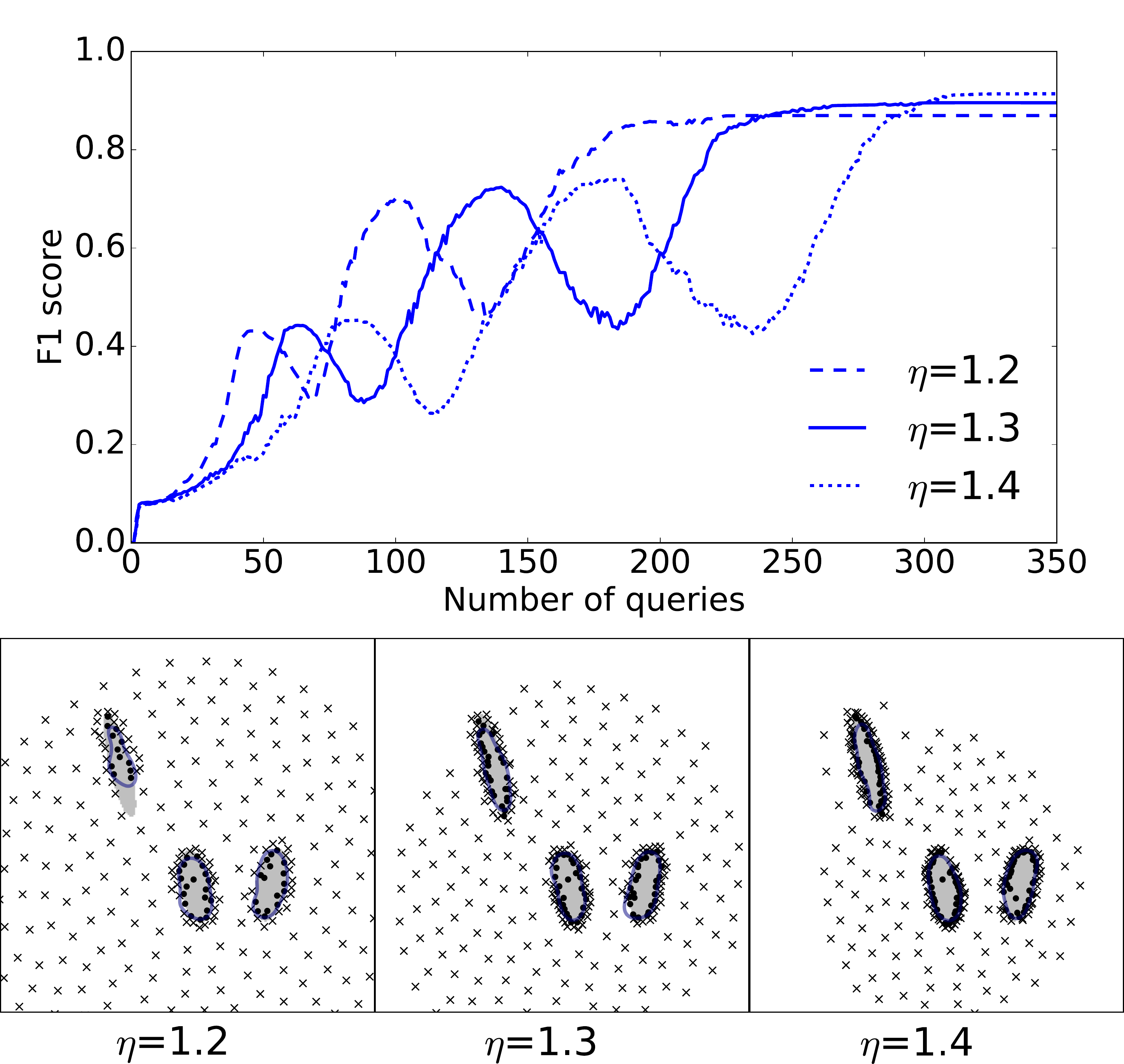}
\label{fig:f1s_eta}}
\caption{AES with different $\epsilon$ and $\eta$ on the Branin example. The upper plots show their F1 scores averaged over 100 runs. The lower plots show queried points during one of the 100 runs.}
\label{fig:hyppara}
\end{figure*}



\begin{figure}
\centering
\includegraphics[width=0.5\textwidth]{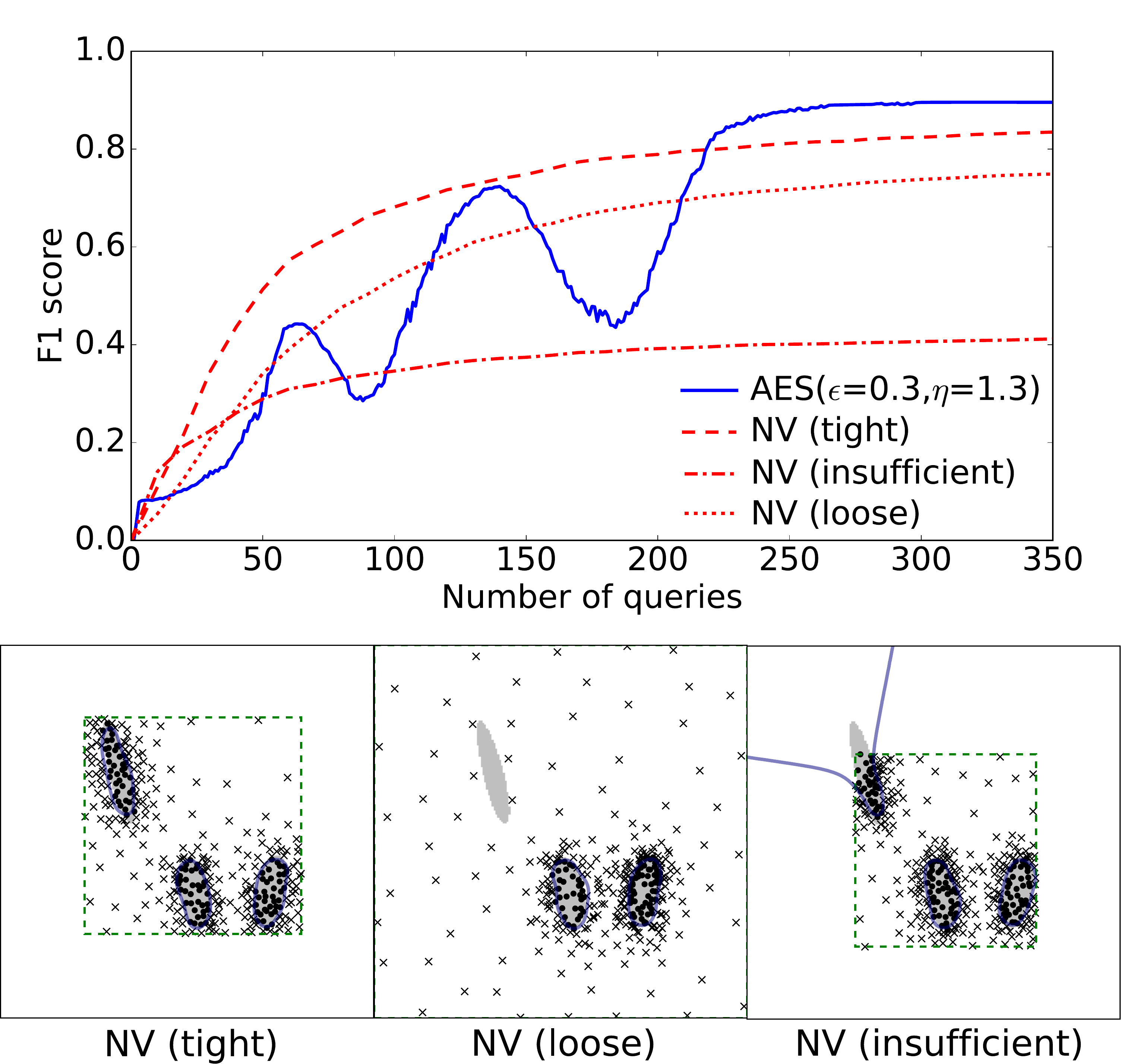}
\caption{AES and NV (with different input variable bounds) on the Branin example.  The pool boundaries set in the Neighborhood-Voronoi algorithm are shown as dashed lines.}
\label{fig:f1s_nv}
\end{figure}

\begin{figure}
\centering
\includegraphics[width=0.5\textwidth]{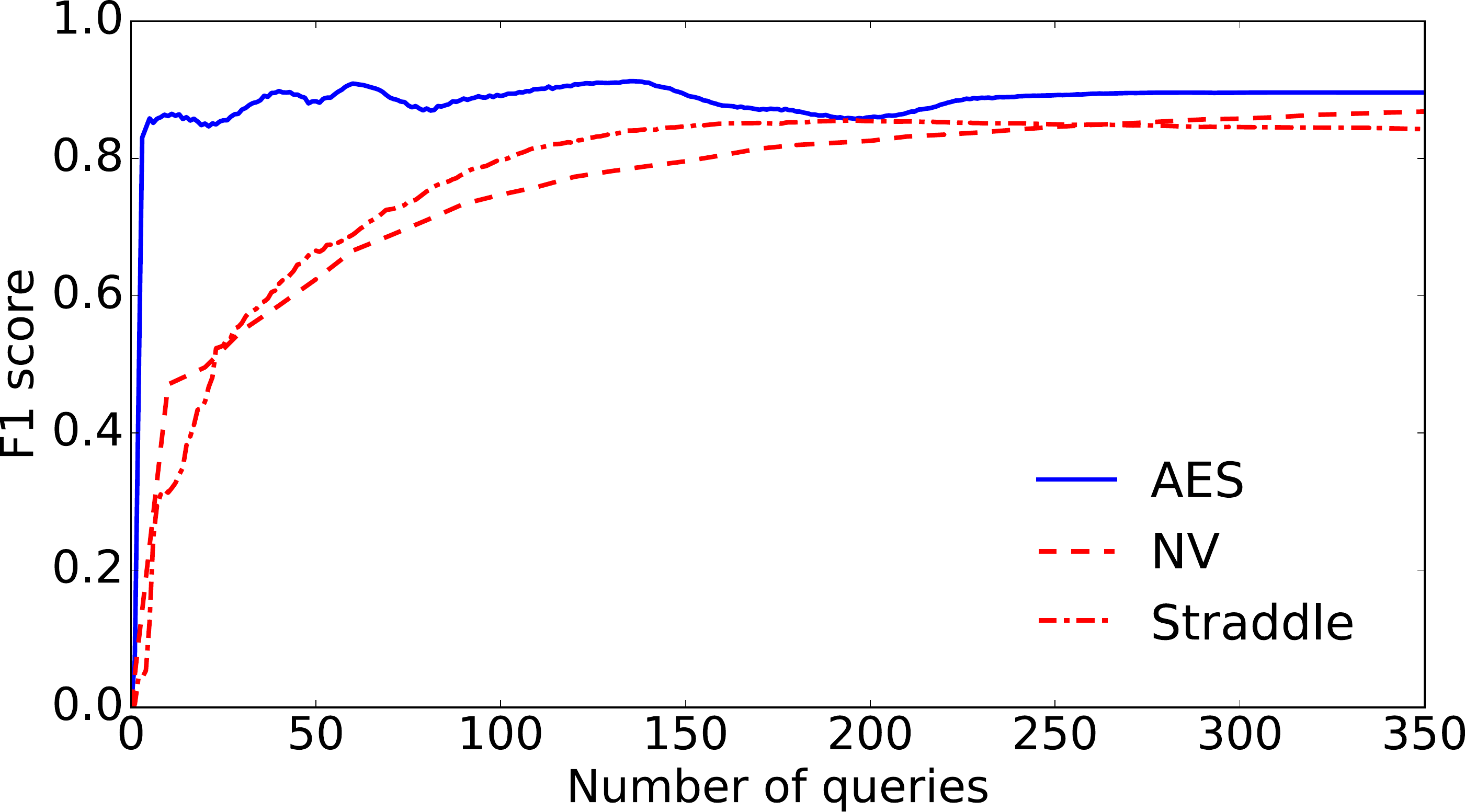}
\caption{F1 scores of AES ($\epsilon=0.3$ and $\eta=1.3$), NV, and Straddle (with tight bounds) on the Branin example within the explored region (\ie, the $p_\epsilon(\bm{x})<\tau$ region, where $\tau=\Phi(-\eta\epsilon)$, $\epsilon=0.3$, and $\eta=1.3$).}
\label{fig:f1s_explored}
\end{figure}



\begin{figure*}
\centering
\subfloat[Bernoulli noise.]{
\includegraphics[width=0.5\textwidth]{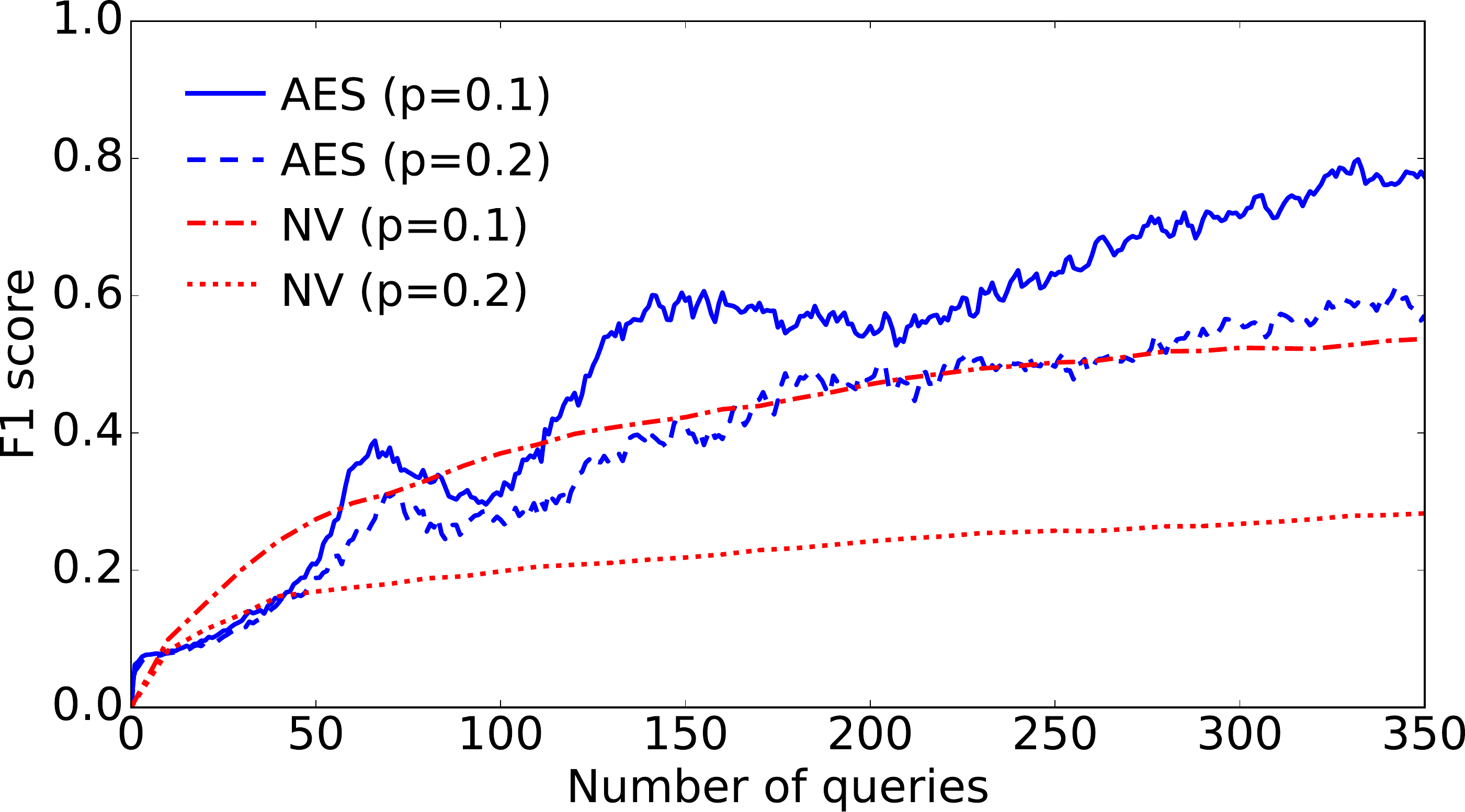}
\label{fig:f1s_bern}}
\subfloat[Gaussian noise.]{
\includegraphics[width=0.5\textwidth]{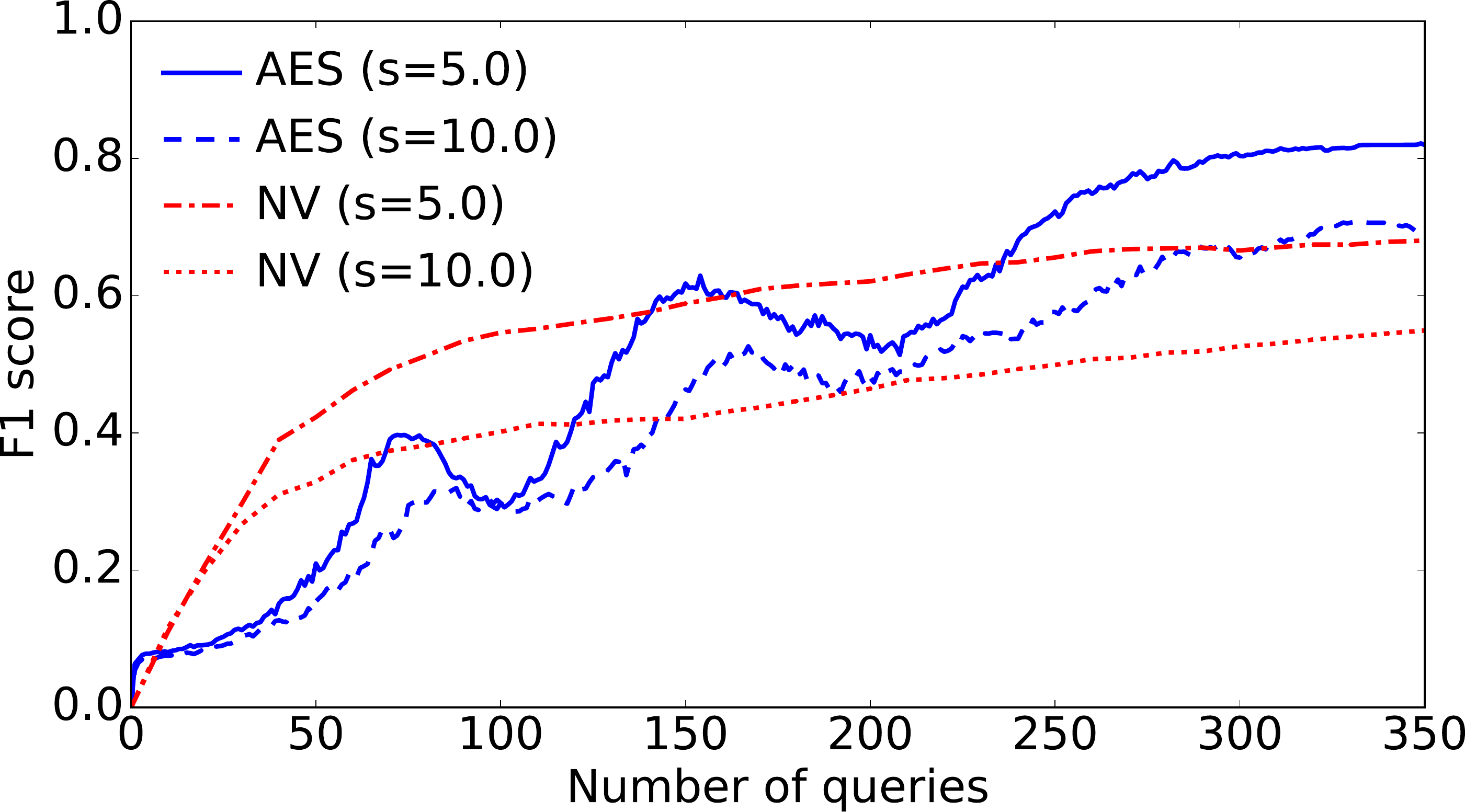}
\label{fig:f1s_gauss}}
\caption{AES and NV on the Branin example using noisy labels.}
\label{fig:f1s_noisy}
\end{figure*}


\begin{figure}
\centering
\includegraphics[width=0.5\textwidth]{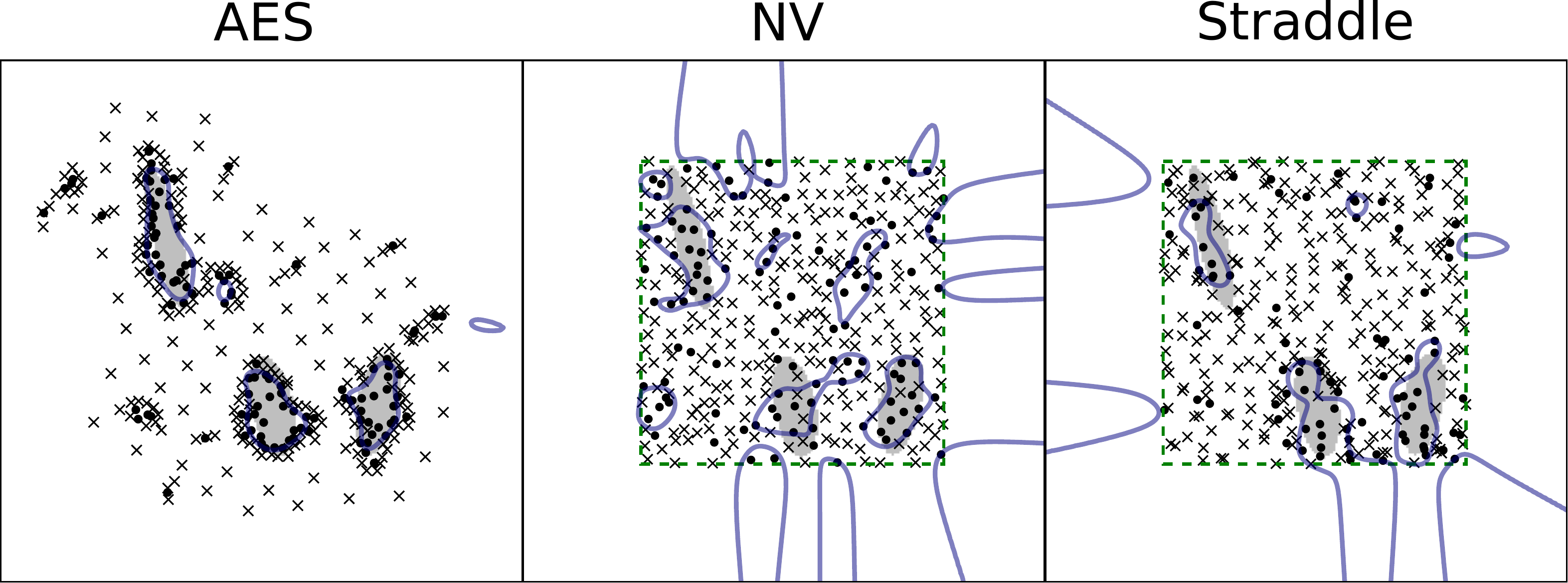}
\caption{Queried points under uniform label noise ($p=0.2$).}
\label{fig:noise}
\end{figure}

\begin{figure}
\centering
\includegraphics[width=0.5\textwidth]{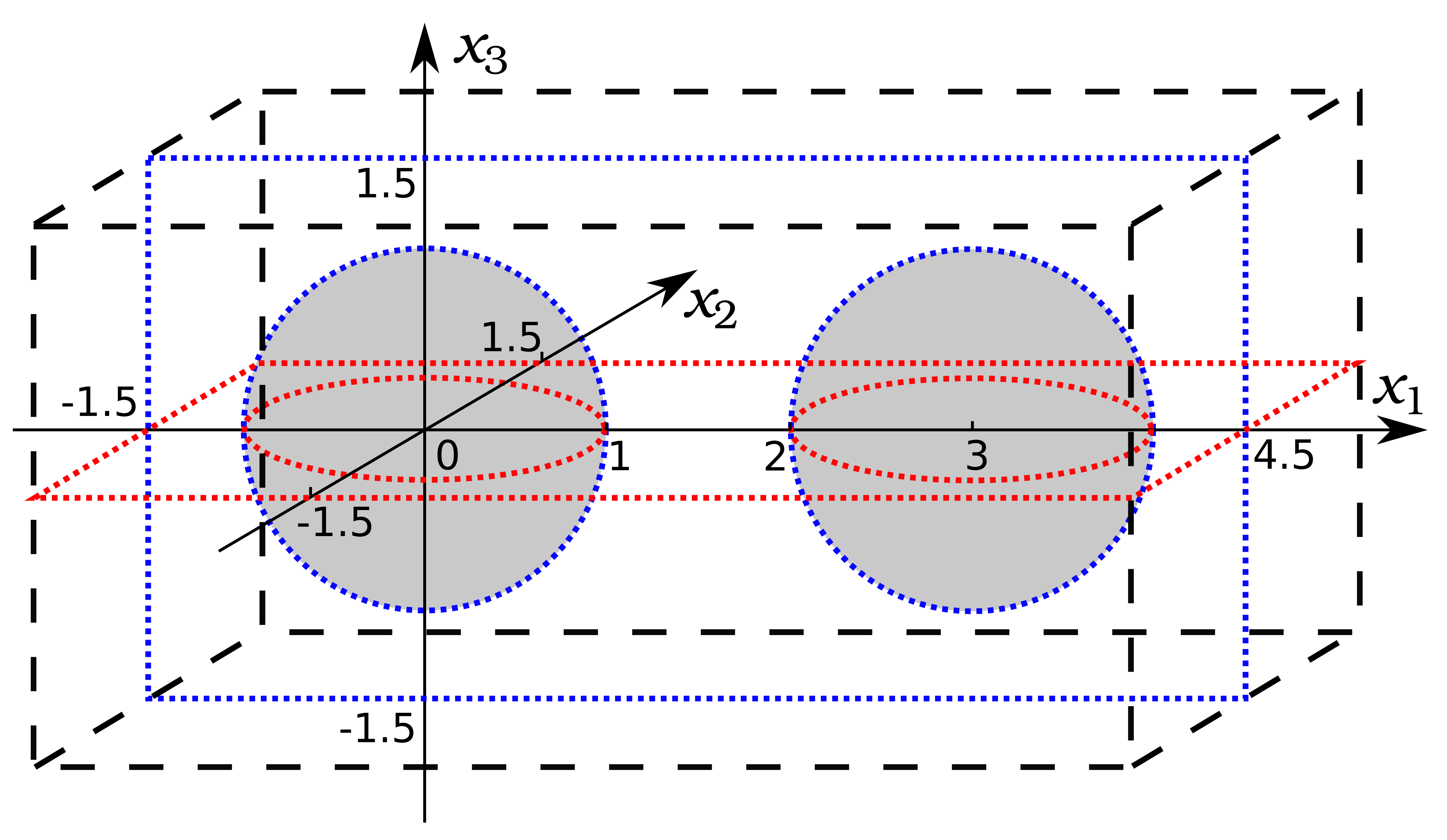}
\caption{The 3-dimensional double-sphere example. The gray regions are the feasible domains. The dashed boxes are the input space bounds for the NV algorithm.}
\label{fig:3d}
\end{figure}

\begin{figure*}
\centering
\subfloat[F1 scores.]{
\includegraphics[width=0.5\textwidth]{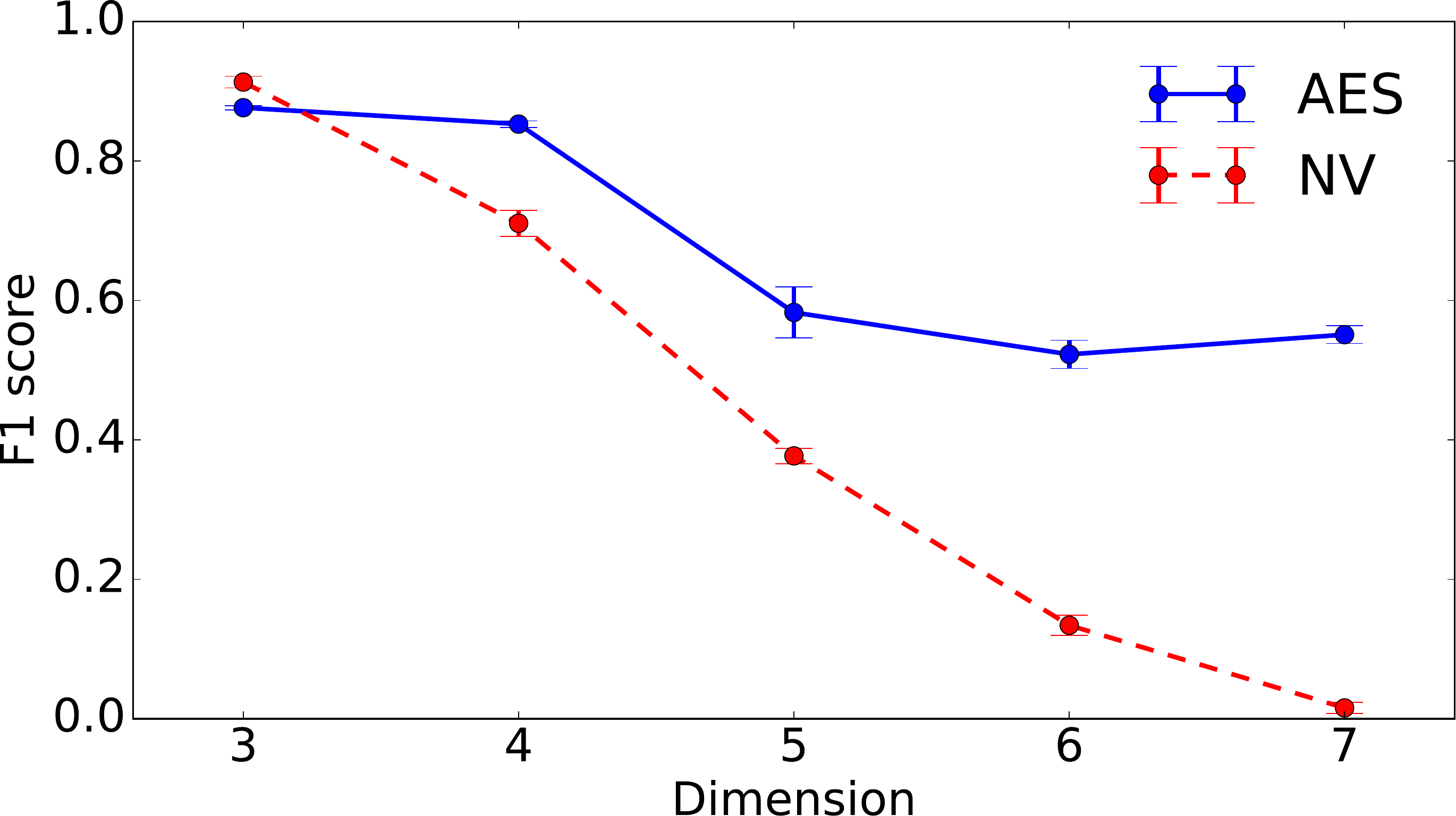}
\label{fig:highdim_f1s}}
\subfloat[Total running time.]{
\includegraphics[width=0.5\textwidth]{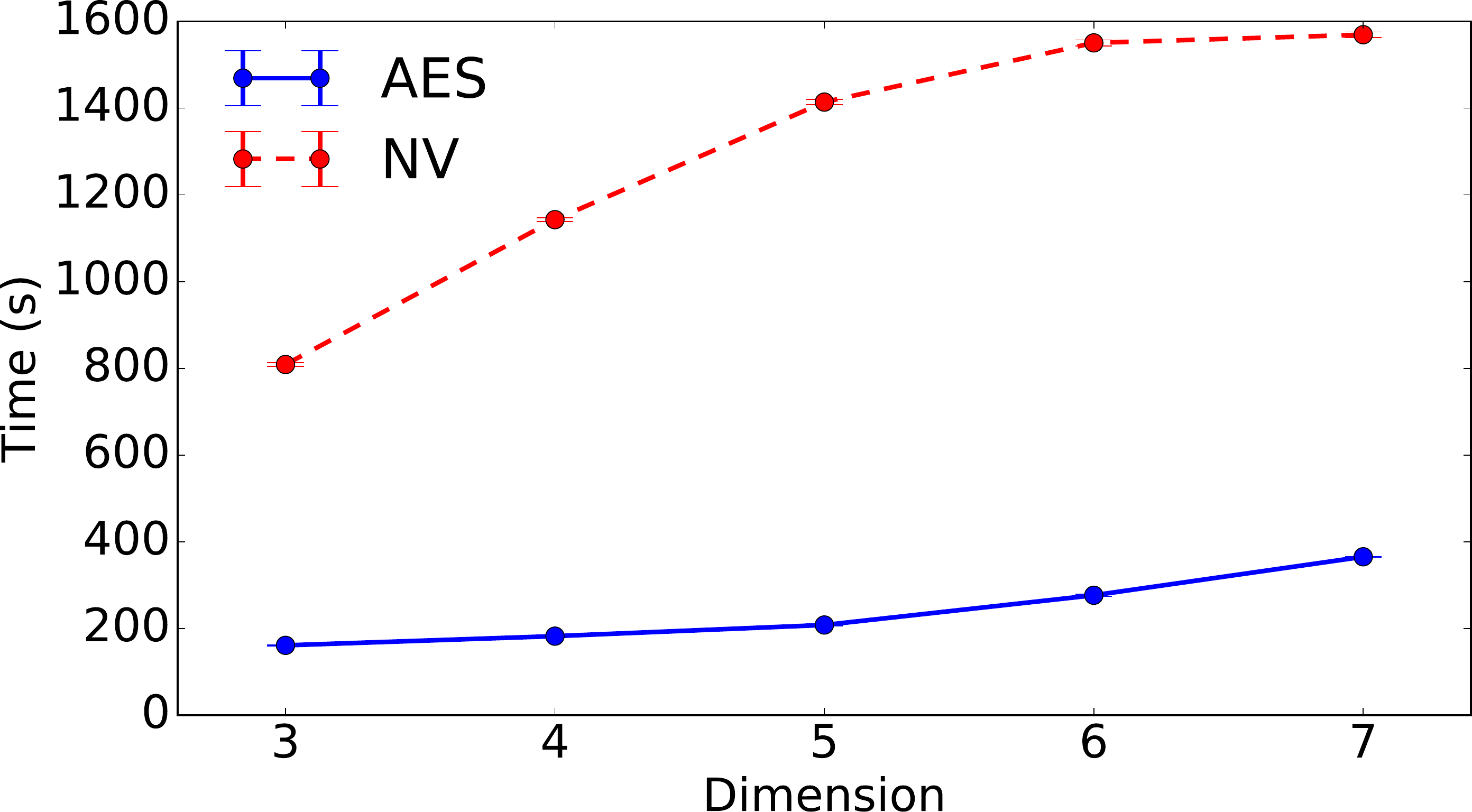}
\label{fig:highdim_time}}
\caption{AES and NV on high-dimensional double-sphere examples.}
\label{fig:highdim}
\end{figure*}

\begin{figure}
\centering
\includegraphics[width=0.4\textwidth]{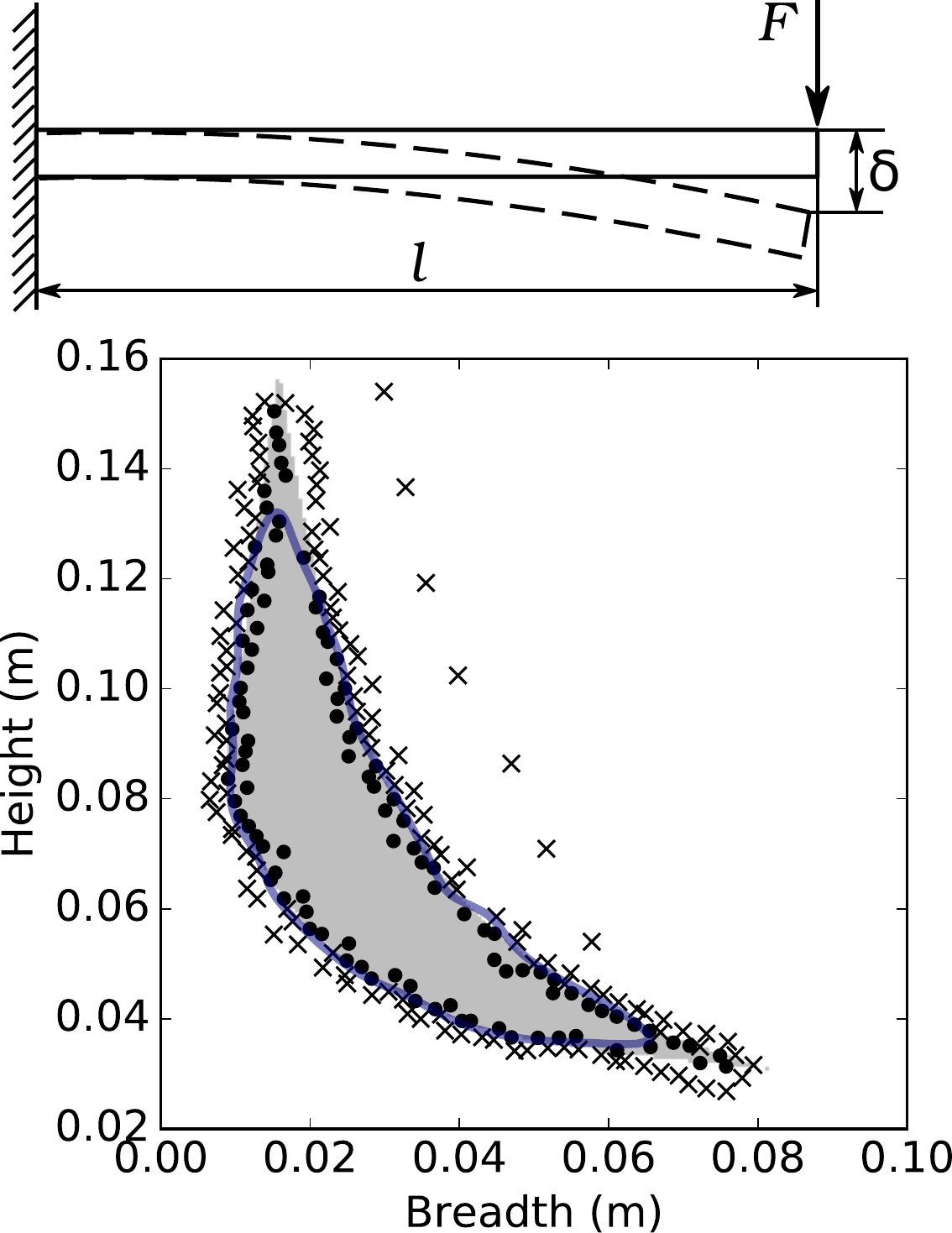}
\caption{AES on the Nowacki beam example.}
\label{fig:beam}
\end{figure}

\subsection{Effect of Hyperparameters}

We first use two 2-dimensional test functions\smoemdash the Branin function and Hosaki function, respectively\smoemdash as indicator functions to evaluate whether an input is inside the feasible domain. Both examples construct an input space with multiple disconnected feasible regions, which makes the feasible domain identification task challenging.

The Branin function is
\begin{equation*}
\begin{split}
g(\bm{x}) = &\left(x_2-\frac{5.1}{4\pi^2}x_1^2+\frac{5}{\pi}x_1-6\right)^2 \\
&+10\left(1-\frac{1}{8\pi}\right)\cos x_1+10
\end{split}
\end{equation*}
We define the label $y=1$ if $\bm{x} \in \{\bm{x}|g(\bm{x})\leq 8, -9<x_1<14, -7<x_2<17\}$; and $y=-1$ otherwise. The resulting feasible domains resemble three isolated feasible regions (Fig.~\ref{fig:opt}). The initial point $\bm{x}^{(0)} = (3,3)$. For the Gaussian process, we use a Gaussian kernel (Eq.~\ref{eq:gaussian_kernel}). We set the kernel length scale $l=0.9$. To compute the F1 scores, we generate samples along a $100 \times 100$ grid as the test set in the region where $x_1 \in [-13,18]$ and $x_2 \in [-8,23]$.

This section mainly describes the Branin example\smoemdash as both the Branin and Hosaki examples show similar results\smoemdash however we direct interested readers to the appendix (Sect.~\ref{exp:hosaki}) where we describe the Hosaki example in detail and show its experimental results.

For both examples, we use three levels of $\epsilon$ (0.1, 0.3, 0.5) and $\eta$ (1.2, 1.3, 1.4) to demonstrate their effects on AES's performance.

Figures~\ref{fig:opt} and \ref{fig:opt_bounded} show the sequence of queries selected by AES and the two bounded adaptive sampling methods, respectively, applied on the Branin example. For AES, there are three exploitation stages, as there are three disconnected feasible domains. AES starts by querying samples along the initial estimated decision boundary, and then expands queries outward to discover other feasible regions. In contrast, the straddle heuristic simultaneously explores the whole bounded input space, and refines all three decision boundaries. Fig.~\ref{fig:opt_f1s} shows the corresponding F1 scores of the experiment in Fig.~\ref{fig:opt}. During exploitation stages, AES's F1 score non-monotonically increases as part of the estimated decision boundary is outside the explored region (where AES has confidence on the accuracy); while in the exploration stage, the current decision boundaries are inside the explored region and remain unchanged, thus the F1 score stabilizes.

Figures~\ref{fig:f1s_epsilon} and \ref{fig:f1s_eta} demonstrate the effects of hyperparameters $\epsilon$ and $\eta$, respectively, on AES's performance. Increasing $\epsilon$ or $\eta$ leads to a slower expansion of the explored region and a higher F1 score. This means that using a higher $\epsilon$ or $\eta$ enables accuracy improvement but requires larger query budget. In both examples, the F1 score is more sensitive to $\eta$ than $\epsilon$.

\subsection{Unbounded versus Bounded}

We use the NV algorithm and the straddle heuristic as examples of bounded adaptive sampling methods. 
Because these two methods do not progressively expand the region (as in AES), but rather assumes a fixed region, we create a ``bounding box'' in the input space, and generate queries inside this box.


\begin{table}
\caption{Input space bounds for the NV algorithm and the straddle heuristic (Branin example).}
\label{tab:bounds}       
\begin{tabular}{llll}
\hline\noalign{\smallskip}
 & Tight & Loose & Insufficient  \\
\noalign{\smallskip}\hline\noalign{\smallskip}
Branin & \begin{tabular}{@{}l@{}}$x_1\in[-9,14]$,\\$x_2\in[-7,17]$\end{tabular} 
		& \begin{tabular}{@{}l@{}}$x_1\in[-14,19]$,\\$x_2\in[-12,22]$\end{tabular} 
        & \begin{tabular}{@{}l@{}}$x_1\in[-4,9]$,\\$x_2\in[-2,12]$\end{tabular} \\
\noalign{\smallskip}\hline
\end{tabular}
\end{table}

When comparing AES with the bounded methods, we use $\epsilon=0.3$ and $\eta=1.3$ for AES. In each experiment, we change the size of the input space bounds to evaluate the effect of bound size on these methods. 
Specifically, we simulate the cases where we set \textit{tight}, \textit{loose}, and \textit{insufficient} bounds, as shown in Tab.~\ref{tab:bounds} and Fig.~\ref{fig:f1s_nv}. ``Tight'' means the bounds cover the entire feasible domain while being no larger than needed (in practice we use bounds slightly larger than this to ensure the feasible domain boundary is inside the tight bounds); ``loose'' means the bounds cover the entire feasible domain but are larger than the tight bounds; ``insufficient'' means the variable bounds do not cover the entire feasible domain.

As shown in Fig.~\ref{fig:f1s_nv}, the NV algorithm makes fast accuracy improvement at early stages, and slows down after some iterations. The F1 score of NV is almost monotonically increasing; while AES's score fluctuates because it focuses first on refining the domains it knows about during exploitation (at the expense of accuracy on domains it has not seen yet). This causes AES to have a lower F1 score early on. For the NV algorithm, when the input variable bounds are set properly, both AES and NV achieve similar final F1 scores. However, NV requires more iterations to achieve a similar final accuracy to AES, especially when the bounds are set too large, where NV exhausts its query budget exploring unknown regions. When the bounds are set too small to cover certain feasible regions, NV stops improving the F1 score when it begins to over-sample the space and is unable to reach similar accuracy as AES. Note that in this case, we purposefully set the bounds such that they cover the vast majority of the feasible region, leaving only a small feasible area outside of those bounds. Our explicit purpose here is to demonstrate how sensitive such bounded heuristics can be when their bounds are misspecified (even by small amounts). The performance of bounded methods degrades rapidly as their bound sizes decrease further.

Although AES shows slow accuracy improvement over the entire test region, it keeps a constant accuracy bound within the explored region (as discussed in Sect.~\ref{sec:acc_bound}). Fig.~\ref{fig:f1s_explored} shows the F1 scores within the $p_\epsilon(\bm{x})<\tau$ region, which is AES's explored region. Specifically, we set $\epsilon=0.3$, $\eta=1.3$, and $\tau=\Phi(-\eta\epsilon)$. For the NV algorithm, we use the tight input space bounds from the previous experiments. By just considering the explored region, AES's F1 scores are quite stable throughout the sampling sequence; while NV's F1 scores are low at the beginning, and then increase until stable.\footnote{This difference is because NV's explored region covers more area than AES at the beginning.} Since AES's accuracy inside the explored region is invariant of the number of iterations, it can be used for real-time prediction of samples' feasibility in the explored region.

\begin{table}
\caption{Final F1 scores and running time (Branin example).}
\label{tab:f1s_time}
\begin{tabular}{llrr}
\hline\noalign{\smallskip}
 & & F1 score & Time (s)  \\
\noalign{\smallskip}\hline\noalign{\smallskip}
\multirow{11}{*}{\begin{sideways} Branin (350 queries) \end{sideways}} & AES ($\epsilon=0.3,\eta=1.3$)  & $0.90\pm0.004$ & $92.34\pm0.62$ \\
& AES ($\epsilon=0.1,\eta=1.3$)  & $0.87\pm0.008$ & $95.71\pm0.37$ \\
& AES ($\epsilon=0.5,\eta=1.3$)  & $0.90\pm0.002$ & $89.71\pm0.38$ \\
& AES ($\epsilon=0.3,\eta=1.2$)  & $0.87\pm0.006$ & $96.73\pm0.26$ \\
& AES ($\epsilon=0.3,\eta=1.4$)  & $0.91\pm0.002$ & $80.70\pm0.33$ \\
& NV (tight) & $0.83\pm0.021$ & $64.40\pm0.09$ \\
& NV (loose) & $0.75\pm0.030$ & $63.68\pm0.06$ \\
& NV (insufficient) & $0.41\pm0.028$ & $63.83\pm0.06$ \\
& Straddle (tight) & $0.82\pm0.012$ & $43.72\pm0.22$ \\
& Straddle (loose) & $0.71\pm0.014$ & $41.72\pm0.22$ \\
& Straddle (insufficient) & $0.34\pm0.009$ & $54.44\pm0.21$ \\
\noalign{\smallskip}\hline
\end{tabular}
\end{table}

Table~\ref{tab:f1s_time} shows the final F1 scores and wall-clock running time of AES, NV, and the straddle heuristic. Note that the confidence interval for NV's averaged F1 scores are much larger than AES. This is because during some runs NV fails to discover all the three feasible regions (Fig.~\ref{fig:f1s_nv} for example).

\subsection{Effect of Noise}

Label noise is usually inevitable in active learning tasks. The noise comes from, for example, simulation/experimental error or human annotators' mistakes. We test the cases where the labels are under (1)~uniform noise and (2)~Gaussian noise centered at the decision boundary. 

We simulate the first case by randomly flipping the labels. The noisy label is set as $y' = (-1)^{\lambda} y$, where $\lambda \sim \text{Bernoulli}(p)$, $p$ is the parameter of the Bernoulli distribution that indicates the noise level, and $y$ is the true label. 

The second case is probably more common in practice, since it is usually harder to decide the labels near the decision boundary. To simulate this case, we add Gaussian noise to the test functions: $g'(\bm{x})=g(\bm{x})+e$, where $g(\bm{x})$ is the Branin or Hosaki function, and $e \sim s\cdot\mathcal{N}(0, 1)$.

In each case we compare the performance of AES ($\epsilon=0.3, \eta=1.3$) and NV (with tight bounds) under two noise levels. As expected, adding noise to the labels decreases the accuracy of both methods (Fig.~\ref{fig:f1s_bern} and \ref{fig:f1s_gauss}). However, in both cases (Bernoulli noise and Gaussian noise), the noise appears to influence NV more than AES. As shown in Fig.~\ref{fig:noise}, when adding noise to the labels, NV has high error mostly along the input space boundaries, where it cannot query samples outside to further investigate those apparent feasible regions. In contrast, AES tries to exploit those rogue points to try to find new feasible regions, realizing after a few new samples that they are noise.

\subsection{Effect of Dimensionality}

To test the effects of dimensionality on AES's performance, we apply both AES and NV on higher-dimensional examples where the feasible domains are inside two $(d-1)$-spheres of radius 1 centered at $\bm{a}$ and $\bm{b}$ respectively. Here $\bm{a}=\bm{0}$ and $\bm{b}=(3,0,...,0)$. Fig.~\ref{fig:3d} shows the input space of the 3-dimensional double-sphere example. The initial point $\bm{x}^{(0)} = \bm{0}$. For the Gaussian process, we use a Gaussian kernel with a length scale of 0.5. We set $\epsilon=0.3$ and $\eta=1.3$. To compute the F1 scores, we randomly generate 10,000 samples uniformly within the region where $x_1 \in [-2,5]$ and $x_k \in [-2,2], k=2,...,d$. The input space bounds for the NV algorithm are $x_1 \in [-1.5,4.5]$ and $x_k \in [-1.5,1.5], k=2,...,d$. We get the F1 scores and running time after querying 1,000 points.

As shown in Fig.~\ref{fig:highdim}, both AES and NV shows an accuracy drop and running time increase as the problem's dimensionality increases. This is expected, since based on the curse of dimensionality~\citep{bellman1957dynamic}, the number of queries needed to achieve the same accuracy increases with the input space dimensionality. The curse of dimensionality is inevitable in machine learning problems. However, since AES explores the input space only when necessary (\ie, only after it has seen the entire decision boundary of the discovered feasible domain), its queries do not need to fill up the large volume of the high-dimensional space. Therefore, AES's accuracy drop with problem dimensionality is not as severe as bounded methods like NV. For particularly high-dimensional design problems, another complementary approach is to construct explicit lower-dimensional design manifolds upon which to run AES~\citep{chen2017design,chen2017beyond}.

\subsection{Nowacki Beam Example}

To test AES's performance in a real-world scenario, we consider the Nowacki beam problem~\citep{nowacki1980modelling}. The original Nowacki beam problem is a design optimization problem where we minimize the cross-section area $A$ of a cantilever beam of length $l$ with concentrated load $F$ at its end. The design variables are the beam's breadth $b$ and height $h$. We turn this problem into a feasible domain identification problem by replacing the objective with a constraint $A = bh \leq 0.0025 \si{m}^2$. Other constraints are (1)~the maximum tip deflection $\delta = Fl^3/(3EI_Y) \leq 5\si{mm}$, (2)~the maximum blending stress $\sigma_B = 6Fl/(bh^2) \leq \sigma_Y$, (3)~the maximum shear stress $\tau = 1.5F/(bh) \leq \sigma_Y/2$, (4)~the ratio $h/b \leq 10$, and (5)~the failure force of buckling $F_{crit} = (4/l^2)\sqrt{(G I_T)(E I_Z)/(1-\nu^2)} \geq fF$, where $I_Y=bh^3/12$, $I_Z=b^3h/12$, $I_T=I_Y+I_Z$, and $f$ is the safety factor. And $\sigma_Y$, $E$, $\nu$, and $G$ are the yield stress, Young's modulus, Poisson's ratio, and shear modulus of the beam's material, respectively. We use the settings from \cite{singh2017sequential}, where $l=0.5\si{m}$, $F=5\si{kN}$, $f=2$, $\sigma_Y=240\si{MPa}$, $E=216.62\si{GPa}$, $\nu=0.27$, and $G=86.65\si{GPa}$. As shown in Fig.~\ref{fig:beam}, the feasible domain is a crescent-shaped region. Given only these constraints, it is unclear what appropriately tight bounds on the design variables should be.

In this experiment, we set the Gaussian kernel's length scale as 0.005, $\epsilon=0.3$ and $\eta=1.3$. The initial point $\bm{x}^{(0)}=(b_0,h_0)=(0.05,0.05)$. The test samples are generated along a $100 \times 100$ grid in the region where $b \in [0,0.02]$ and $h \in [0.1,0.16]$.

After 242 iterations, the F1 score of AES reaches 0.933 and remains constant. Note that mostly the estimation error comes from the two sharp ends of the crescent-shaped feasible region (Fig.~\ref{fig:beam}). This is because the kernel's assumption on function smoothness (\ie, similar inputs should have similar outputs) causes the GP to have bad performance where the labels shift frequently. The similar problem also exists when using other classifiers like SVM, where a kernel is also used to enforce similar outputs between similar inputs. This problem can be alleviated by using a smaller kernel length scale.

\section{Conclusion}

We presented a pool-based sampling method, AES, for identifying (possibly disconnected) feasible domains over an unbounded input space. Unlike conventional methods that sample inside a fixed boundary, AES progressively expands our knowledge of the input space under an accuracy guarantee. We showed that AES uses successive exploitation and exploration stages to switch between learning the decision boundary and searching for new feasible domains. To avoid increasing the pool size and hence the computation cost as the explored area grows, we proposed a dynamic local pool generation method that samples the pool locally at a certain location in each iteration.
We showed that at any point within the explored region, AES guarantees an upper bound $\epsilon$ of misclassification loss with a probability of at least $1-\tau$, regardless of the number of iterations or labeled samples. This means that AES can be used for real-time prediction of samples' feasibility inside the explored region. We also demonstrated that, compared to existing methods, AES can achieve comparable or higher accuracy without needing to set exact bounds on the input space.

Note that AES cannot be applied on input spaces where synthesizing a useful sample is difficult. For example, in an image classification task, we cannot directly synthesize an image by arbitrarily setting its pixels, since most of the synthesized images may be unrealistic and hence useless. Usually in such cases, we use real-world samples as the pool and apply bounded active learning methods (since we know the bounds of real-world samples). Or instead, we first embed the original inputs onto a lower-dimensional space, such that given the low-dimensional representation, we can synthesize realistic samples. We can then apply AES on that embedded space. This approach can be used for discovering novel feasible domains (\ie, finding feasible inputs that are nonexistent in the real-world). We refer interested readers to a detailed introduction of this approach by \cite{chen2017beyond}.

One limitation of AES is that the accuracy improves slowly at the early stage compared to bounded active learning methods. This is because AES focuses on only the explored region (which is small at the beginning), while bounded active learning methods usually do space-filling at first. In the situation where we want fast accuracy improvement at the beginning, one possible way of tackling this problem is by dynamically setting AES's hyperparameters. Specifically, since the expansion speed increases as $\epsilon$ or $\eta$ decreases, we can accelerate AES's accuracy improvement at earlier stages by setting small values of $\epsilon$ and $\eta$, so that queries quickly fill up a larger region. Then to achieve high final accuracy, we can increase $\epsilon$ and $\eta$ to meet the accuracy requirement.



\bibliographystyle{spbasic}      
\bibliography{references}   

%
%

\pagebreak
\begin{center}
\textbf{\Large Appendix A: Theorem Proofs}
\end{center}
\setcounter{equation}{0}
\setcounter{figure}{0}
\setcounter{table}{0}
\setcounter{section}{0}
\makeatletter
\renewcommand{\theequation}{A\arabic{equation}}
\renewcommand{\thefigure}{A\arabic{figure}}
\renewcommand{\thetable}{A\arabic{table}}
\renewcommand{\thesection}{A\arabic{section}}
\renewcommand{\bibnumfmt}[1]{[A#1]}
\renewcommand{\citenumfont}[1]{A#1}

\section{Proof of Theorem~\ref{thm:delta}}
\label{pf:delta}

According to Eq.~\ref{eq:mean}, given an optimal query $\bm{x}^*$, we have
\begin{equation*}
\begin{aligned}
|\bar{f}(\bm{x}^*)| &= |\bm{k}(\bm{x}^*)^T \nabla \log p(\bm{y}|\hat{\bm{f}})| \\
&= \left|\bm{k}(\bm{x}^*)^T \nabla \log \begin{bmatrix}
    \Phi(y_1f_1)\\
    \vdots\\
    \Phi(y_{t-1}f_{t-1})
\end{bmatrix}\right| \\
&= \left|\bm{k}(\bm{x}^*)^T \begin{bmatrix}
    y_1\mathcal{N}(f_1)/\Phi(y_1f_1)\\
    \vdots\\
    y_{t-1}\mathcal{N}(f_{t-1})/\Phi(y_{t-1}f_{t-1})
\end{bmatrix}\right| \\
&= \left| \sum_{i=1}^{t-1} k(\bm{x}^*,\bm{x}^{(i)})y_i\frac{\mathcal{N}(f_i)}{\Phi(y_if_i)} \right| \\
&\leq \sum_{i=1}^{t-1} \left| k(\bm{x}^*,\bm{x}^{(i)})y_i\frac{\mathcal{N}(f_i)}{\Phi(y_if_i)} \right| \\
&< \sum_{i=1}^{t-1} k_m \text{sign}(y_i)y_i\frac{\mathcal{N}(f_i)}{\Phi(y_if_i)} \\
&= k_m \text{sign}(\bm{y})^T \begin{bmatrix}
    y_1\mathcal{N}(f_1)/\Phi(y_1f_1)\\
    \vdots\\
    y_{t-1}\mathcal{N}(f_{t-1})/\Phi(y_{t-1}f_{t-1})
\end{bmatrix} \\
&= k_m \mu
\end{aligned}
\label{eq:f_bound}
\end{equation*}
where 
\begin{equation}
\begin{aligned}
k_m &= \max_{\bm{x}^{(i)}\in X_L} k(\bm{x}^*,\bm{x}^{(i)}) \\
&= \exp\left(-\frac{\min_{\bm{x}^{(i)}\in X_L} \|\bm{x}^*-\bm{x}^{(i)}\|^2}{2l^2}\right) \\
&= e^{-\delta^2/(2l^2)}
\label{eq:k_m}
\end{aligned}
\end{equation}
and
\begin{equation}
\mu = \text{sign}(\bm{y})^T \nabla \log p(\bm{y}|\hat{\bm{f}})
\label{eq:mu}
\end{equation}

Similarly,
\begin{equation}
\begin{aligned}
V(\bm{x}^*) &= 1-\bm{k}(\bm{x}^*)^T(K+W^{-1})^{-1}\bm{k}(\bm{x}^*) \\
&> 1-(k_m\bm{1})^T (K+W^{-1})^{-1} (k_m\bm{1}) \\
&= 1-k_m^2\bm{1}^T(K+W^{-1})^{-1}\bm{1} \\
&= 1-k_m^2 \nu
\label{eq:v_bound}
\end{aligned}
\end{equation}
where
\begin{equation}
\nu = \bm{1}^T(K+W^{-1})^{-1}\bm{1}
\label{eq:nu}
\end{equation}

Therefore for the optimal query $\bm{x}^*$ we have
\begin{equation*}
p_\epsilon(\bm{x}^*) = \Phi\left(-\frac{|\bar{f}(\bm{x}^*)|+\epsilon}{\sqrt{V(\bm{x}^*)}}\right) 
> \Phi\left(-\frac{k_m \mu+\epsilon}{\sqrt{1-k_m^2 \nu}}\right)
\end{equation*}

Both Theorem~\ref{thm:intersection} and \ref{thm:isocontour} state that $p_\epsilon(\bm{x}^*)=\tau$, thus
\begin{equation*}
\Phi\left(-\frac{k_m \mu+\epsilon}{\sqrt{1-k_m^2 \nu}}\right) < \tau
\end{equation*}

When $\tau=\Phi(-\eta\epsilon)$, we have
\begin{equation}
\frac{k_m \mu+\epsilon}{\sqrt{1-k_m^2 \nu}} > \eta\epsilon
\label{eq:pool_bound}
\end{equation}

Plugging Eq.~\ref{eq:k_m} into Eq.~\ref{eq:pool_bound} and solving for the distance $\delta$, we get
\begin{equation*}
\delta < \beta l
\label{eq:delta}
\end{equation*}
where
\begin{equation}
\beta=\sqrt{2\log\frac{\mu^2+\eta^2\epsilon^2\nu}{\eta\epsilon\sqrt{\mu^2+(\eta^2-1)\epsilon^2\nu}-\epsilon\mu}}
\label{eq:beta}
\end{equation}

\section{Proof of Theorem~\ref{thm:delta_exploit}}
\label{pf:delta_exploit}

Theorem~\ref{thm:intersection} states that the optimal query in the exploitation stage lies at the intersection of $\bar{f}(\bm{x})=0$ and $p_\epsilon(\bm{x})=\tau$. By substituting $\Phi(-\eta\epsilon)$ for $\tau$, we have
\begin{equation}
V(\bm{x}^*) = \frac{1}{\eta^2}
\label{eq:v_exploit1}
\end{equation}
According to Eq.~\ref{eq:v_bound}, we have $V(\bm{x}^*)>1-k_m^2\nu$. Combining Eq.~\ref{eq:k_m}, \ref{eq:nu}, and \ref{eq:v_exploit1}, we get 
\begin{equation*}
\delta < \delta_{exploit} = \gamma l
\label{eq:delta_exploit}
\end{equation*}
where
\begin{equation}
\gamma = \sqrt{\log\frac{\eta^2\nu}{\eta^2-1}}
\label{eq:gamma}
\end{equation}

\section{Proof of Theorem~\ref{thm:density}}
\label{pf:density}

According to Eq.~\ref{eq:v_exploit1}, the predictive variance of an optimal query $\bm{x}_{exploit}$ in the exploitation stage is
\begin{equation*}
V(\bm{x}_{exploit}) = \frac{1}{\eta^2}
\end{equation*}
While in the exploration stage, we have $p_\epsilon(\bm{x}_{explore})=\tau$ at the optimal query $\bm{x}_{explore}$ (Theorem~\ref{thm:isocontour}). And by applying Eq.~\ref{eq:p_epsilon} and setting $\tau = \Phi(-\eta\epsilon)$, we have
\begin{equation*}
V(\bm{x}_{explore}) = \frac{1}{\eta^2}\left(1+\frac{|\bar{f}(\bm{x}_{explore})|}{\epsilon}\right)^2
\end{equation*}

\pagebreak
\begin{center}
\textbf{\Large Appendix B: Additional Experimental Results}
\end{center}
\setcounter{equation}{0}
\setcounter{figure}{0}
\setcounter{table}{0}
\setcounter{section}{0}
\makeatletter
\renewcommand{\theequation}{B\arabic{equation}}
\renewcommand{\thefigure}{B\arabic{figure}}
\renewcommand{\thetable}{B\arabic{table}}
\renewcommand{\thesection}{B\arabic{section}}
\renewcommand{\bibnumfmt}[1]{[B#1]}
\renewcommand{\citenumfont}[1]{B#1}

\section{Hosaki Example}
\label{exp:hosaki}

\begin{figure*}
\centering
\subfloat[Changing $\epsilon$ ($\eta=1.3$).]{
\includegraphics[width=0.5\textwidth]{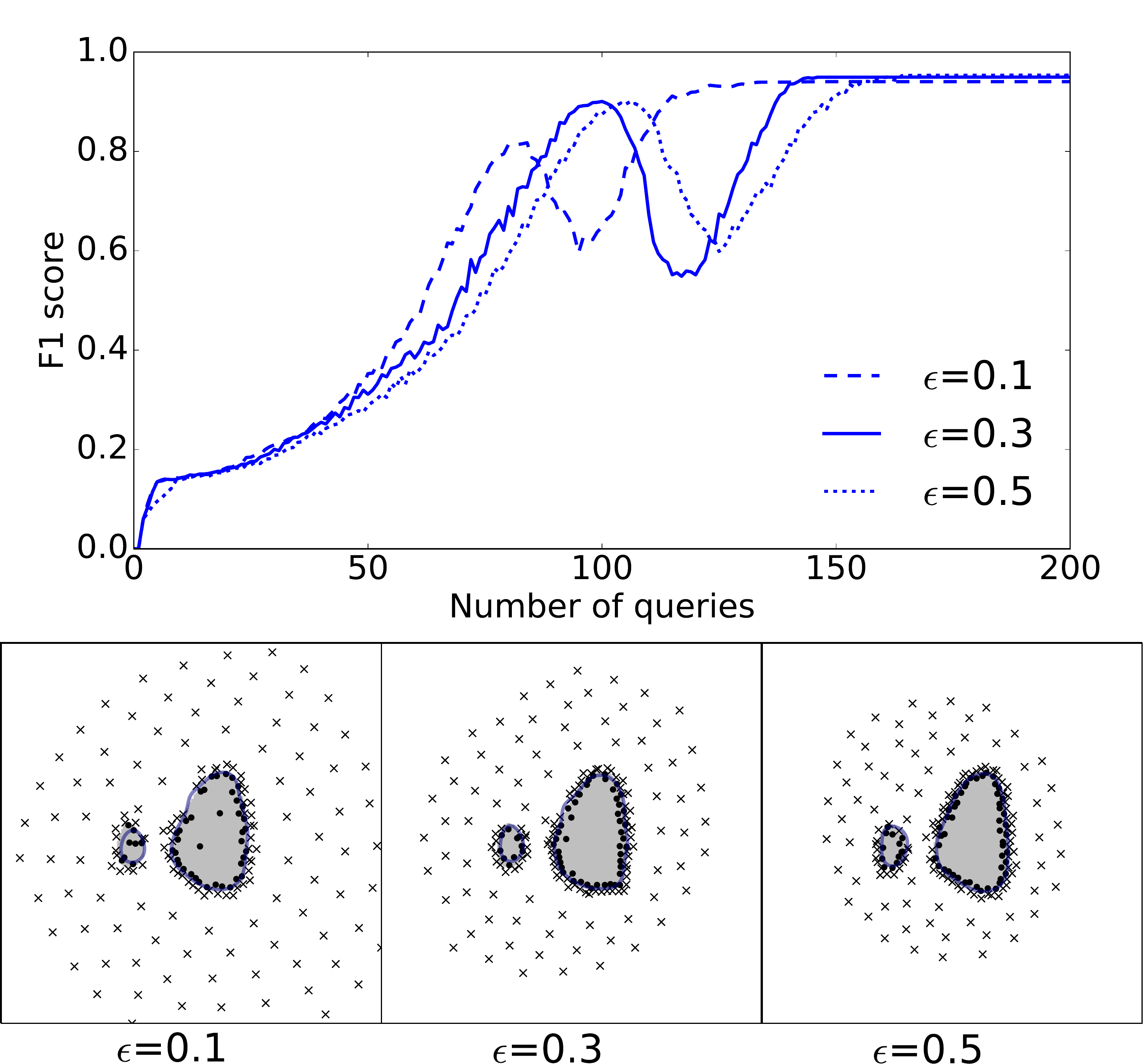}
\label{fig:f1s_epsilon_hosaki}}
\subfloat[Changing $\eta$ ($\epsilon=0.3$).]{
\includegraphics[width=0.5\textwidth]{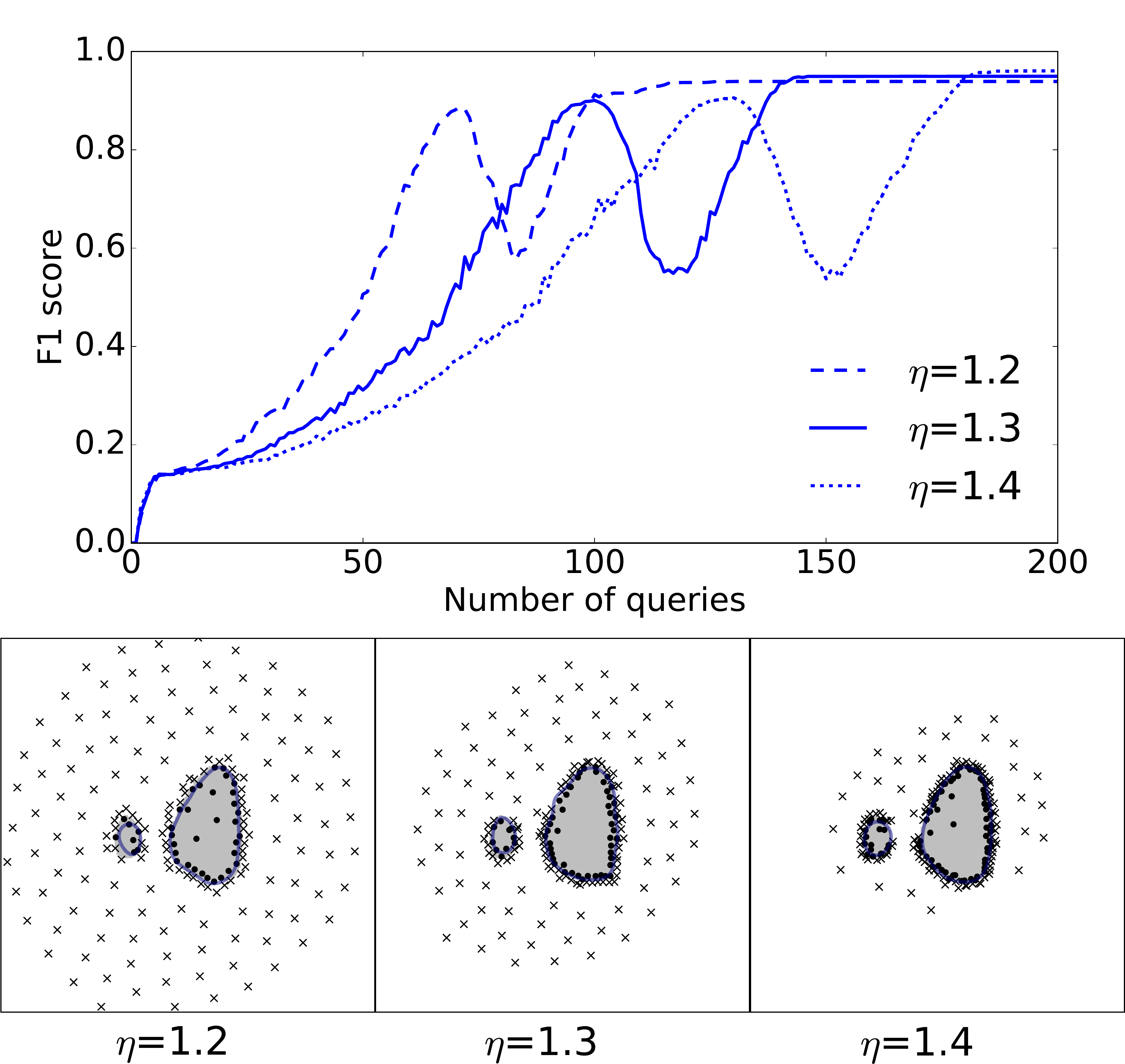}
\label{fig:f1s_eta_hosaki}}
\caption{AES with different $\epsilon$ and $\eta$ on the Hosaki example.}
\label{fig:hyppara_hosaki}
\end{figure*}

\begin{figure}
\centering
\includegraphics[width=0.5\textwidth]{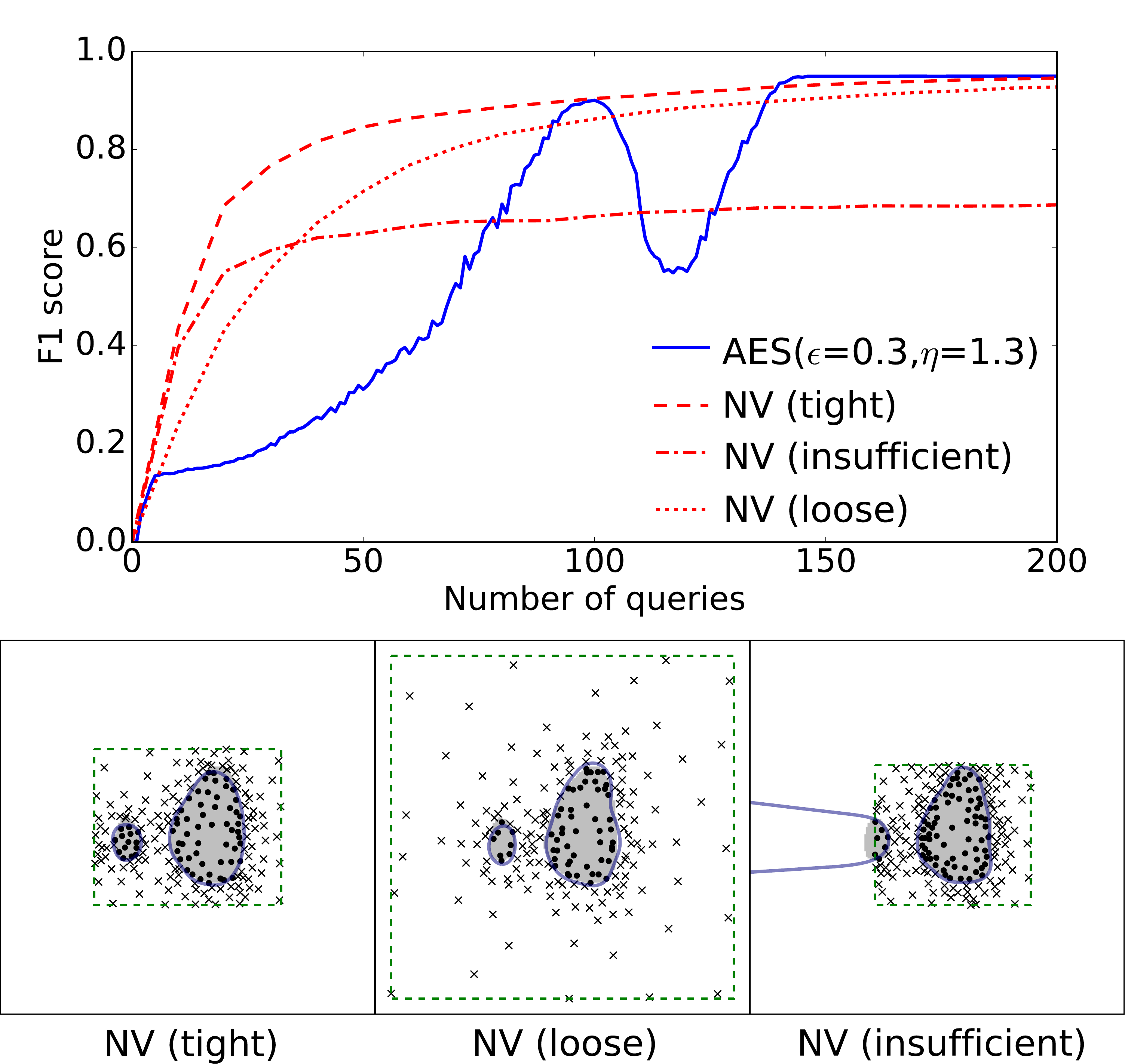}
\caption{AES and NV (with different input variable bounds) on the Hosaki example.}
\label{fig:f1s_nv_hosaki}
\end{figure}


\begin{figure*}
\centering
\subfloat[Bernoulli noise.]{
\includegraphics[width=0.5\textwidth]{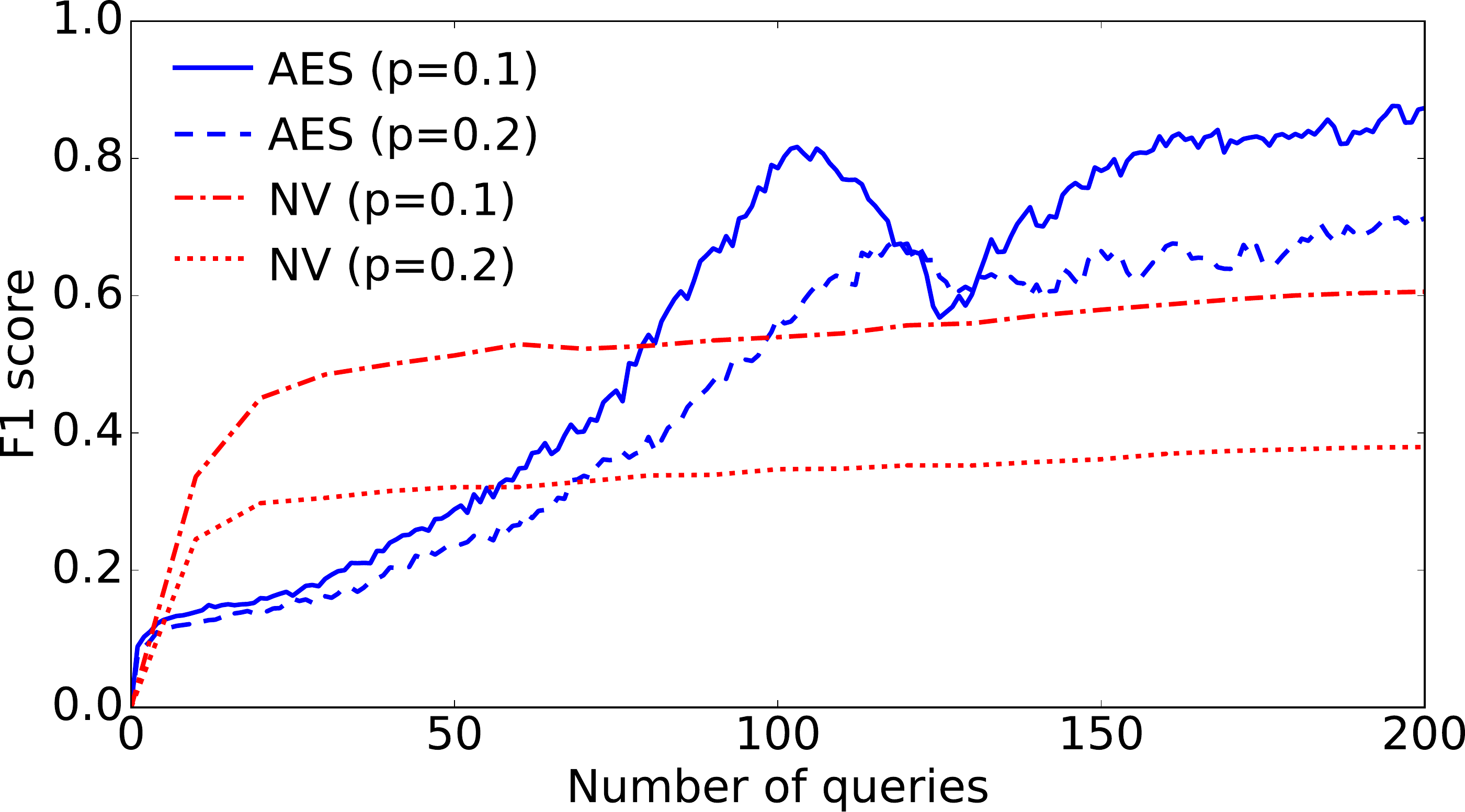}
\label{fig:f1s_bern_hosaki}}
\subfloat[Gaussian noise.]{
\includegraphics[width=0.5\textwidth]{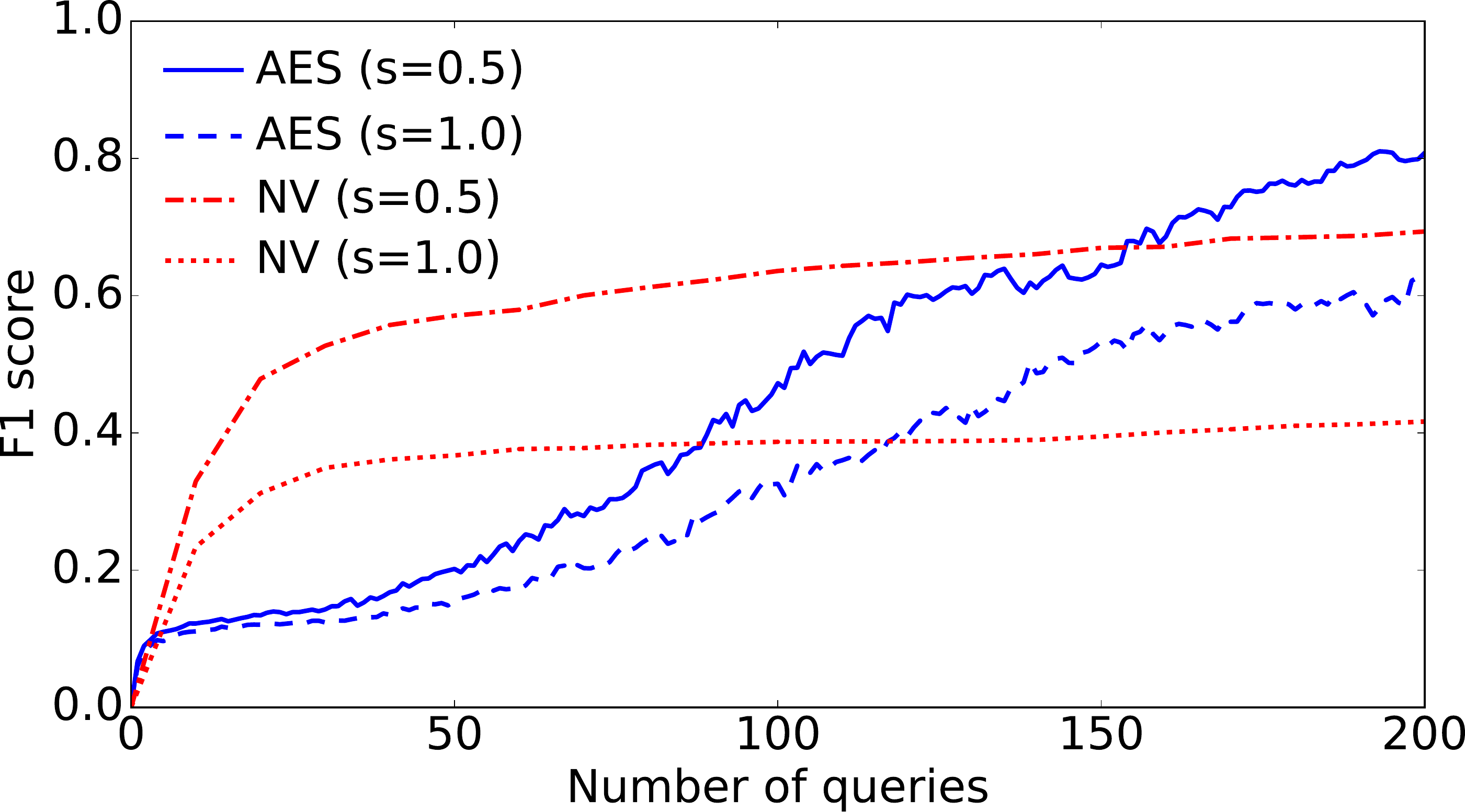}
\label{fig:f1s_gauss_hosaki}}
\caption{AES and NV on the Hosaki example using noisy labels.}
\label{fig:f1s_noisy_hosaki}
\end{figure*}

We use the Hosaki example as an additional 2-dimensional example to demonstrate the performance of our proposed method. Different from the Branin example, the Hosaki example has feasible domains of different scales. Its feasible domains resemble two isolated feasible regions\smoemdash a large ``island'' and a small one (Fig.~\ref{fig:f1s_epsilon_hosaki}). The Hosaki function is
\begin{equation*}
g(\bm{x})=\left(1-8x_1+7x_1^2-\frac{7}{3}x_1^3+\frac{1}{4}x_1^4\right)x_2^2e^{-x_2}
\end{equation*}
We define the label $y=1$ if $\bm{x} \in \{\bm{x}|g(\bm{x})\leq -1, 0<x_1,x_2<5\}$; and $y=-1$ otherwise. 

\begin{table}
\caption{Input space bounds for the NV algorithm and the straddle heuristic (Hosaki example).}
\label{tab:bounds_hosaki}       
\begin{tabular}{llll}
\hline\noalign{\smallskip}
 & Tight & Loose & Insufficient  \\
\noalign{\smallskip}\hline\noalign{\smallskip}
Hosaki & \begin{tabular}{@{}l@{}}$x_1\in[0,6]$,\\$x_2\in[0,5]$\end{tabular} 
		& \begin{tabular}{@{}l@{}}$x_1\in[-2.5,8.5]$,\\$x_2\in[-3,8]$\end{tabular} 
        & \begin{tabular}{@{}l@{}}$x_1\in[1,6]$,\\$x_2\in[0,4.5]$\end{tabular} \\
\noalign{\smallskip}\hline
\end{tabular}
\end{table}

For AES, we set the initial point $\bm{x}^{(0)} = (3,3)$. We use a Gaussian kernel with a length scale $l=0.4$. The test set to compute F1 scores is generated along a $100 \times 100$ grid in the region where $x_1 \in [-3,9]$ and $x_2 \in [-3.5,8.5]$. For NV and straddle, the input space bounds are shown in Tab.~\ref{tab:bounds_hosaki}.

\begin{table}
\caption{Final F1 scores and running time (Hosaki example).}
\label{tab:f1s_time_hosaki}
\begin{tabular}{llrr}
\hline\noalign{\smallskip}
 & & F1 score & Time (s)  \\
\noalign{\smallskip}\hline\noalign{\smallskip}
\multirow{11}{*}{\begin{sideways} Hosaki (200 queries) \end{sideways}} & AES ($\epsilon=0.3,\eta=1.3$)  & $0.95\pm0.003$ & $28.25\pm0.25$ \\
& AES ($\epsilon=0.1,\eta=1.3$)  & $0.94\pm0.004$ & $30.86\pm0.19$ \\
& AES ($\epsilon=0.5,\eta=1.3$)  & $0.95\pm0.002$ & $28.32\pm0.33$ \\
& AES ($\epsilon=0.3,\eta=1.2$)  & $0.94\pm0.003$ & $31.69\pm0.45$ \\
& AES ($\epsilon=0.3,\eta=1.4$)  & $0.96\pm0.002$ & $26.39\pm0.38$ \\
& NV (tight) & $0.95\pm0.003$ & $22.58\pm0.03$ \\
& NV (loose) & $0.93\pm0.004$ & $22.28\pm0.03$ \\
& NV (insufficient) & $0.69\pm0.010$ & $22.27\pm0.03$ \\
& Straddle (tight) & $0.95\pm0.002$ & $16.20\pm0.19$ \\
& Straddle (loose) & $0.88\pm0.005$ & $14.00\pm0.14$ \\
& Straddle (insufficient) & $0.69\pm0.010$ & $16.92\pm0.25$ \\
\noalign{\smallskip}\hline
\end{tabular}
\end{table}

Table~\ref{tab:f1s_time_hosaki} shows the final F1 scores and running time of AES, NV, and the straddle heuristic. Fig.~\ref{fig:hyppara_hosaki} shows the F1 scores and queries under different $\epsilon$ and $\eta$. Fig.~\ref{fig:f1s_nv_hosaki} compares the performance of AES and NV with different boundary sizes. Fig.~\ref{fig:f1s_noisy_hosaki} shows the performance of AES and NV under Bernoulli and Gaussian noise.

\section{Results of Straddle Heuristic}
\label{exp:straddle}

In this section we list experimental results related to the straddle heuristic. Specifically, Fig.~\ref{fig:f1s_straddle} shows straddle's F1 scores and queries using different sizes of input variable bounds, and the comparison with AES. Fig.~\ref{fig:f1s_straddle_noisy} shows the comparison of AES and straddle under noisy labels.

\begin{figure*}
\centering
\subfloat[Branin example.]{
\includegraphics[width=0.5\textwidth]{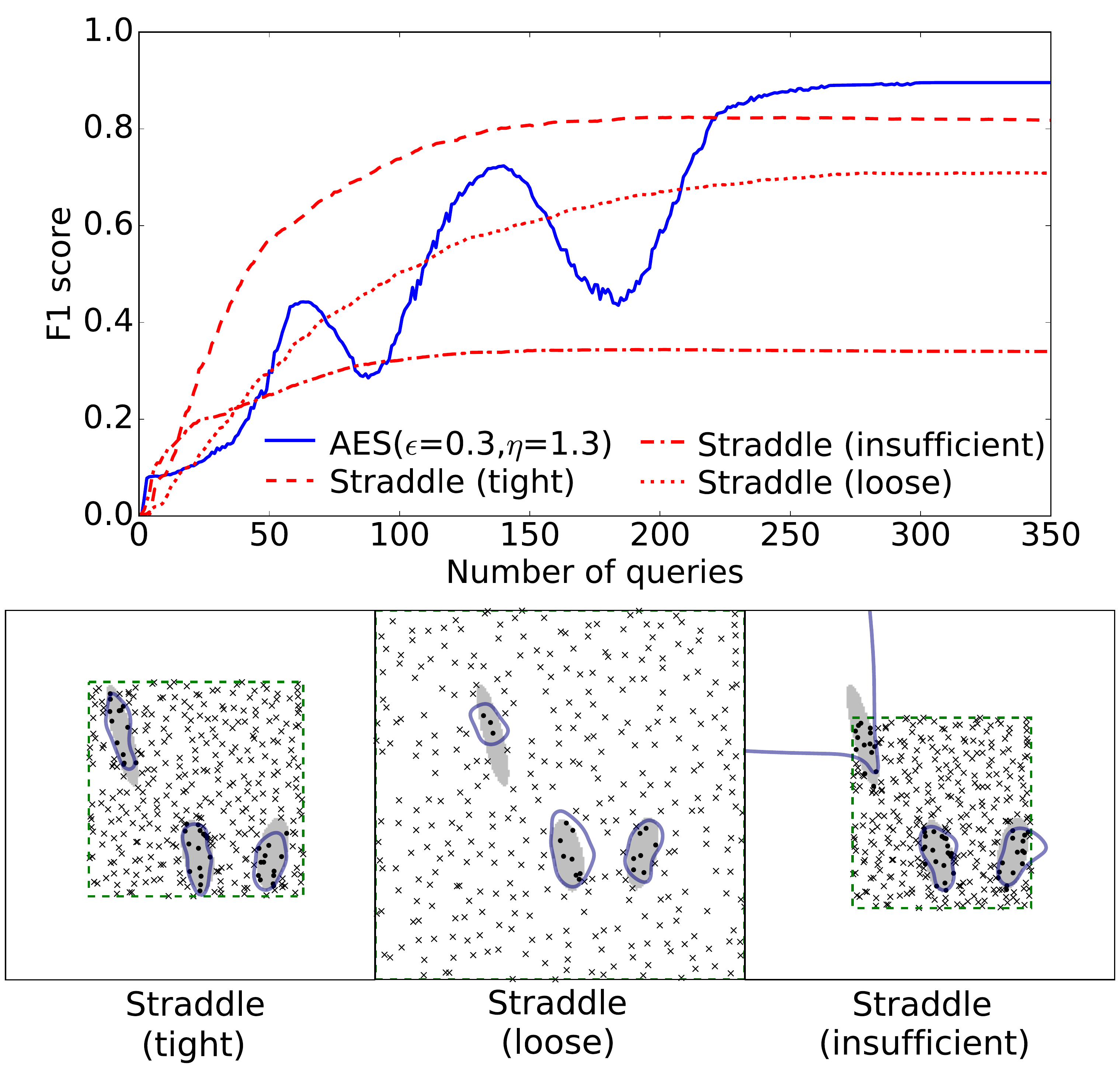}
\label{fig:f1s_straddle_branin}}
\subfloat[Hosaki example.]{
\includegraphics[width=0.5\textwidth]{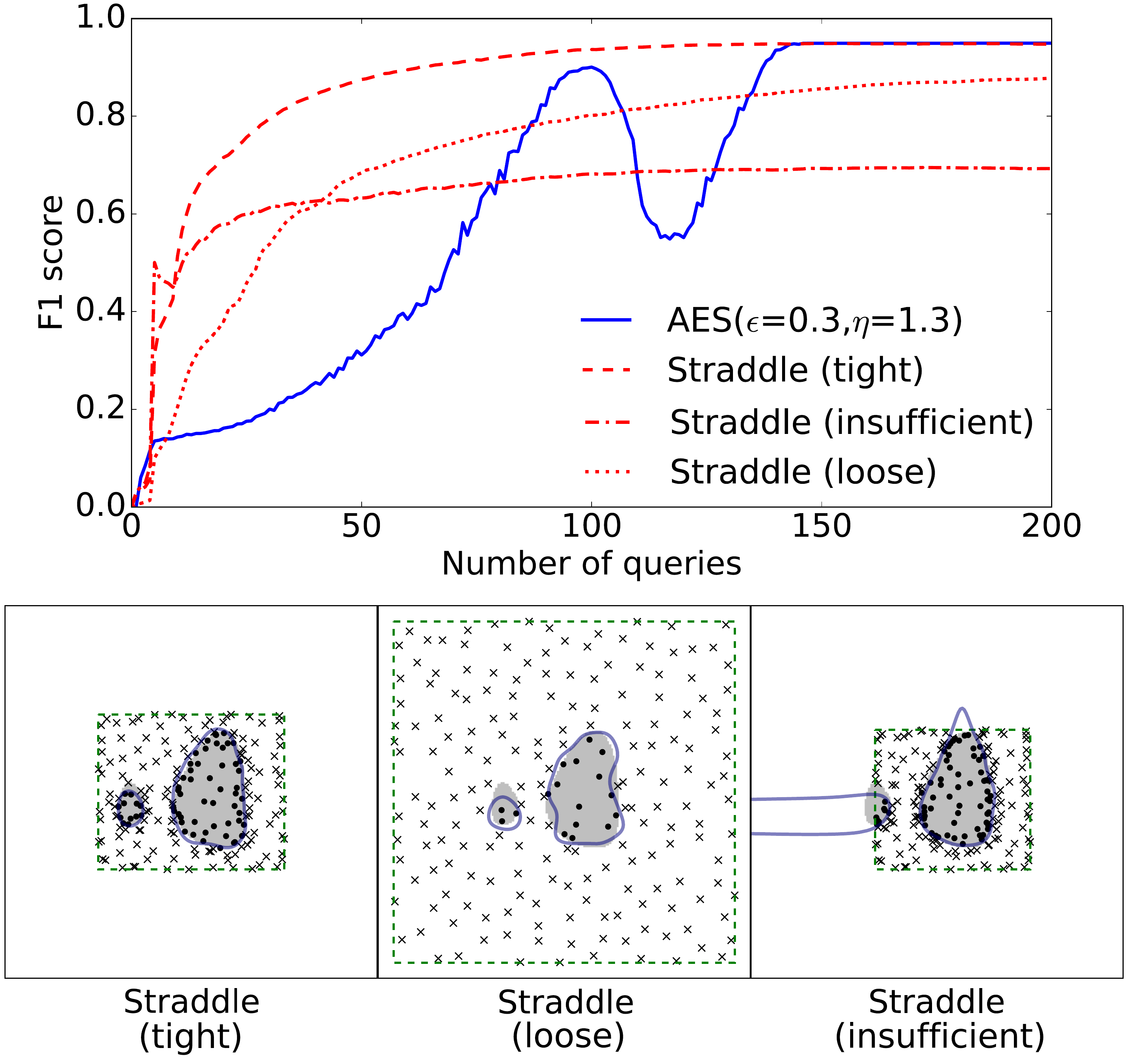}
\label{fig:f1s_straddle_hosaki}}
\caption{AES and straddle (with different input variable bounds).}
\label{fig:f1s_straddle}
\end{figure*}

\begin{figure*}
\centering
\shortstack{
\subfloat[Branin example under Bernoulli noise.]{
\includegraphics[width=0.5\textwidth]{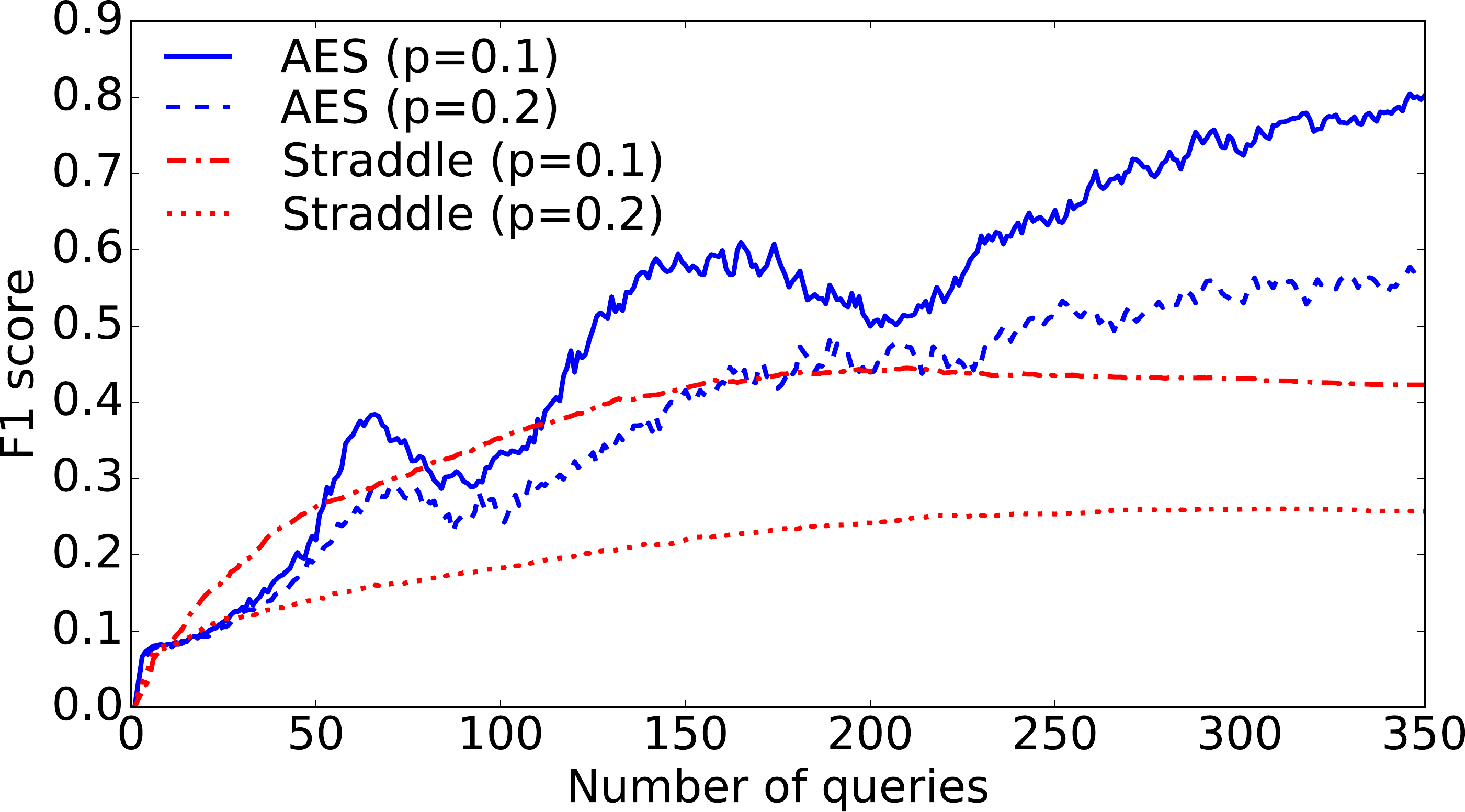}
\label{fig:f1s_straddle_bern_hosaki}}
\subfloat[Branin example under Gaussian noise.]{
\includegraphics[width=0.5\textwidth]{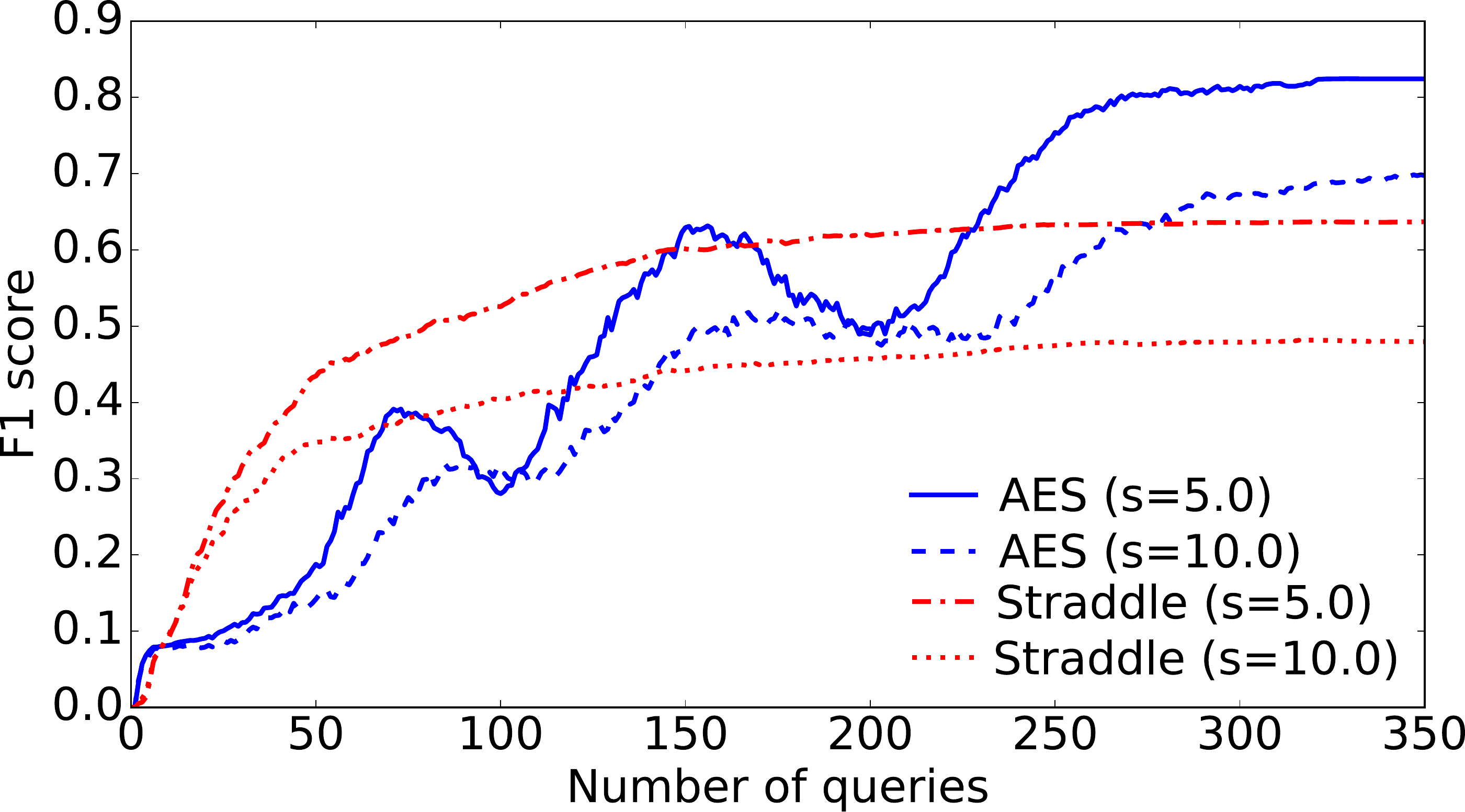}
\label{fig:f1s_straddle_gauss_hosaki}}}
\shortstack{
\subfloat[Hosaki example under Bernoulli noise.]{
\includegraphics[width=0.5\textwidth]{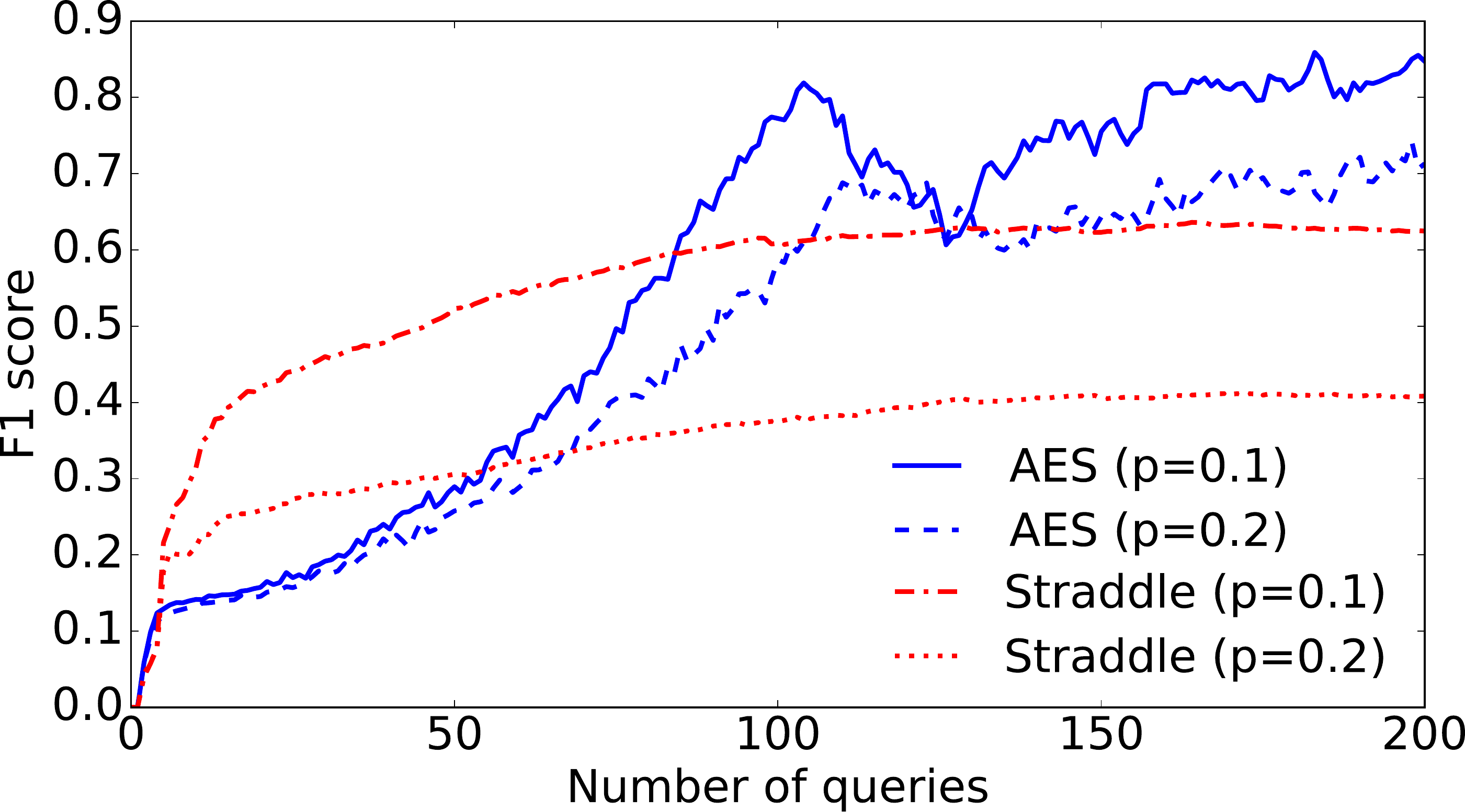}
\label{fig:f1s_straddle_bern_hosaki}}
\subfloat[Hosaki example under Gaussian noise.]{
\includegraphics[width=0.5\textwidth]{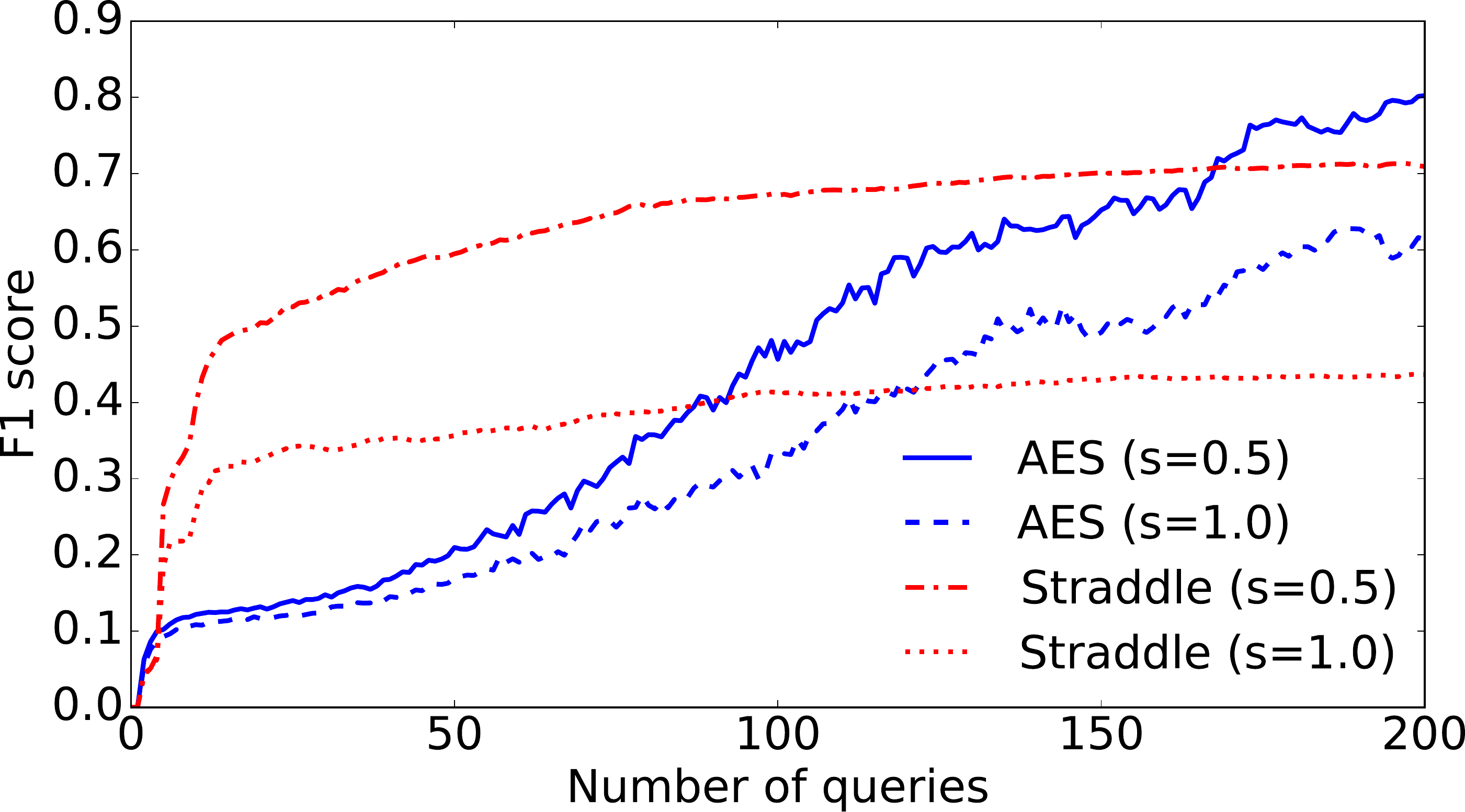}
\label{fig:f1s_straddle_gauss_hosaki}}}
\caption{AES and straddle under noisy labels.}
\label{fig:f1s_straddle_noisy}
\end{figure*}

\end{document}